\documentclass[11pt]{article}

\newcommand{\arxiv}[1]{#1}
\newcommand{\colt}[1]{}

\usepackage{arxiv_curriculum}
\usepackage{times}

\title{Learning to Reason with Curriculum I:\\[2pt] Provable Benefits of Autocurriculum}

\usepackage{inconsolata}

\renewcommand{\Pr}{\mathrm{Pr}}

\newcommand{\abort}{\texttt{ABORT}}

\newcommand{\tabort}{\widetilde{\texttt{ABORT}}}
\newcommand{\err}{\texttt{err}}

\newcommand{\rank}{\mathrm{rank}}

\newcommand{\wrank}{\widehat{\mathrm{rank}}}
\newcommand{\tw}{\widetilde{w}}
\newcommand{\ww}{\widehat{w}}
\newcommand{\tp}{\widetilde{p}}

\newcommand{\trho}{\widetilde{\rho}}

\newcommand{\tPhi}{\widetilde{\Phi}}

\usepackage{cases}

\newcommand{\acc}{\texttt{Acc}}
\newcommand{\wacc}{\widehat{\texttt{Acc}}}

\newcommand{\talpha}{\widetilde{\alpha}}
\newcommand{\tbeta}{\widetilde{\beta}}

\newcommand{\Vlearn}{\texttt{RLFineTune}}
\newcommand{\NTP}{\texttt{NTP}}

\newcommand{\fhat}{\widehat{f}}
\newcommand{\fhatcons}{\widehat{f}_{\texttt{cons}}}
\newcommand{\ghat}{\widehat{g}}

\newcommand{\CoT}{\texttt{CoT}}
\newcommand{\SFT}{\texttt{SFT}}
\newcommand{\RL}{\texttt{RL}}

\newcommand{\Genboost}{\texttt{AutoTune}}
\newcommand{\Genbooststoch}{\texttt{AutoTune}_{\texttt{stoch}}}

\newcommand{\VCdim}{\mathrm{VC}}
\newcommand{\Ndim}{\mathrm{Ndim}}

\newcommand{\Alg}{\texttt{Alg}}
\newcommand{\calOtilde}{\widetilde{\mathcal{O}}}

\newcommand{\piref}{\pi_{\texttt{ref}}}
\newcommand{\pihat}{\widehat{\pi}}

\newcommand{\Cseq}{C_{\texttt{seq}}}

\newcommand{\nCoT}{n_{\texttt{CoT}}}

\newcommand{\ncomp}{n_{\texttt{comp}}}
\newcommand{\nprompt}{n_{\texttt{prompt}}}
\newcommand{\Dprompt}[1]{D_{\texttt{prompt}}^{#1}}

\newcommand{\Vboost}{\texttt{AutoTune.RL}}
\newcommand{\Dout}[1]{D_{\texttt{out}}^{#1}}
\newcommand{\maj}{\texttt{Plu}}

\newcommand{\subsample}{\texttt{Sample}}

\newcommand{\unif}{\operatorname{Unif}}




\tcbuselibrary{skins,breakable}

\newtcolorbox[
  auto counter,
  number within=section,
  number freestyle={\noexpand\thesection.\noexpand\arabic{\tcbcounter}~\noexpand\mytitle},
  crefname={Meta-algorithm}{Meta-algorithms}
]{metaalgorithm}[2][]{
  breakable,
  enhanced,
  colback=white,
  colframe=black,
  coltitle=black,
  fonttitle=\bfseries,
  code={\def\mytitle{#2}},
  title=Meta-algorithm \thetcbcounter,
  boxrule=0.8pt,
  arc=2pt,
  left=8pt,right=8pt,
  top=6pt,bottom=6pt,
  attach boxed title to top center={yshift=-2mm},
  boxed title style={
    colback=white,
    colframe=black,
    boxrule=0.8pt,
    sharp corners
  },
  #1
}

\crefname{claim}{claim}{Claims}

\newtcolorbox[
  auto counter,
  number within=section,
  number freestyle={\noexpand\thesection.\noexpand\arabic{\tcbcounter}~\noexpand\mytitle},
  crefname={Induction}{Inductions}
]{induction}[2][]{
  breakable,
  enhanced,
  colback=white,
  colframe=black,
  coltitle=black,
  fonttitle=\bfseries,
  code={\def\mytitle{#2}},
  title=$\text{Induction}_{\bm{#2}} (\delta)$,
  boxrule=0.8pt,
  arc=2pt,
  left=8pt,right=8pt,
  top=6pt,bottom=6pt,
  attach boxed title to top left={yshift=-2mm},
  boxed title style={
    colback=white,
    colframe=black,
    boxrule=0.8pt,
    sharp corners
  },
  #1
}

\SetCommentSty{mycommentsty}

\SetKwComment{cmt}{{\color{blue}$\blacktriangleright$}\ }{}
\SetKwComment{cmttwo}{}{}
\usepackage{changepage}

\makeatletter
\renewcommand{\algocf@printnl}[1]{%
  \ifthenelse{\boolean{algocf@leftlinenumber}}{%
    \skiplinenumber=\skiptotal
    \strut\raisebox{0pt}{\llap{\NlSty{#1}\kern\skiplinenumber}}\ignorespaces%
  }{%
    \sbox\algocf@nlbox{\NlSty{#1}}%
    \skiplinenumber=\hsize
    \advance\skiplinenumber by-\skiptext
    \advance\skiplinenumber by\algomargin
    \advance\skiplinenumber by.3em
    \advance\skiplinenumber by-\wd\algocf@nlbox%
    \advance\skiplinenumber by-\algocf@skipuntil%
    \strut\raisebox{0pt}{\rlap{\kern\skiplinenumber\NlSty{#1\ignorespaces}}}\ignorespaces%
  }%
}%
\makeatother

\newlength{\algonuminset}
\usepackage{tkz-euclide}

\makeatletter
\let\oldnl\nl
\newcommand{\nonl}{\renewcommand{\nl}{\let\nl\oldnl}}
\makeatother

\usepackage{color-edits}
\addauthor{df}{ForestGreen}
\addauthor{nv}{Red}
\addauthor{ak}{BurntOrange}

\newcommand{\loose}{\looseness=-1}

\let\olddfedit\dfedit
\renewcommand{\dfedit}[2]{\olddfedit{#2}}

\author{%
  Nived Rajaraman\thanks{Microsoft Research. \texttt{nrajaraman@microsoft.com}} \and
  Audrey Huang\thanks{University of Illinois Urbana Champaign. \texttt{audreyh5@illinois.edu}} \and
  Miro Dudik\thanks{Microsoft Research. \texttt{mdudik@microsoft.com}} \and
  Robert Schapire\thanks{Microsoft Research. \texttt{schapire@microsoft.com}} \and
  Dylan J.\ Foster\thanks{Microsoft Research. \texttt{dylanfoster@microsoft.com}} \and
  Akshay Krishnamurthy\thanks{Microsoft Research. \texttt{akshaykr@microsoft.com}}
}

\begin{document}

\maketitle

\begin{abstract}
\noindent Chain-of-thought reasoning, where language models expend additional computation by producing thinking tokens prior to final responses, has driven significant advances in model capabilities. However, training these reasoning models is extremely costly in terms of both data and compute, as it involves collecting long traces of reasoning behavior from humans or synthetic generators and further post-training the model via reinforcement learning. Are these costs fundamental, or can they be reduced through better algorithmic design? We show that \textit{autocurriculum}---where the model uses its own performance to decide which problems to focus training on---provably improves upon standard training recipes for both supervised fine-tuning (SFT) and reinforcement learning (RL). For SFT, we show that
autocurriculum requires \textit{exponentially} fewer reasoning demonstrations than non-adaptive fine-tuning~\citep{joshi2025theory}, by focusing teacher supervision on prompts where the current model struggles. For RL fine-tuning, autocurriculum \textit{decouples} the computational cost from the quality of the reference model, reducing the latter to a burn-in cost that is nearly independent of the target accuracy. These improvements arise purely from adaptive data selection, drawing on classical techniques from boosting \citep{freund1997decision} and learning from counterexamples \citep{angluin1987learning}, and requiring no assumption on the distribution or difficulty of prompts.
\end{abstract}

\paragraph{Keywords.} Autocurriculum, Language model reasoning, Reinforcement Learning

\section{Introduction}

Recent advances in language models have demonstrated that performance on complex reasoning tasks can be substantially improved by increasing inference-time computation. In particular, multi-step chain-of-thought (CoT) reasoning~\citep{wei2022Chain} enables models to solve hard tasks by generating long intermediate reasoning traces before producing the final answer. Reasoning models are often trained to elicit this behavior through a combination of supervised fine-tuning and reinforcement learning~\citep{guo2025deepseek}. Supervised fine-tuning (SFT) on long CoT data has been one of the largest drivers of reasoning capabilities in domains like code-generation or mathematical reasoning \citep{jaech2024openai,hu2025open}, but collecting high-quality data for supervision relies heavily on strong teacher models or human labelers, requiring massive and concerted efforts~\citep{guha2025openthoughts}. 
On the other hand, reinforcement learning fine-tuning (RL) does not require costly CoT data, but is computationally expensive, as it requires generating a large number of reasoning traces from the model being trained~\citep{liu2025prorl}. To push beyond the capabilities of current frontier models, it is essential to reduce these statistical and computational costs.

\medskip
\noindent 
A promising approach to this challenge is \textit{curriculum design}, where reasoning problems of varied difficulty are used to guide the model toward solving progressively harder problems. Curricula can be hand-designed by humans based on intuition about problem difficulty, automatically designed by the model itself, adapting to model capabilities over the course of training, or a combination of both. The latter automatic or \textit{autocurriculum approaches} have recently been incorporated into large-scale RL systems as a mechanism to improve compute efficiency and stabilize training \citep{yu2025dapo,khatri2025art}, and also to gradually scale problem difficulty \citep{zeng2025rlve}. Theoretically, however, the algorithmic, statistical, and computational aspects of autocurriculum are poorly understood, particularly as they pertain to LLM reasoning. With this as motivation, we ask:\loose

\begin{center}
\textit{How should reasoning models design their own  curricula? \\What are the statistical and computational benefits of autocurriculum for reasoning?}
\end{center}

\noindent To address these questions, we study autocurriculum in a theoretical framework for learning with autoregressive models~\citep{joshi2025theory}, encompassing both supervised fine-tuning (SFT) and reinforcement learning with verifiable rewards (RLVR). In both settings, the learner has access to a class of models that generate reasoning traces token-by-token, and an \textit{outcome verifier} (e.g., a unit test for code, or an answer checker for math) that checks whether the final answer is correct.

\subsection{Contributions}

We establish that autocurriculum, using the verifier to adaptively select which prompts to focus training on, provably and dramatically reduces the cost of training, yielding up to \textit{exponential} improvements over non-adaptive approaches.

\begin{figure}
    \centering
    \includegraphics[width=0.85\linewidth]{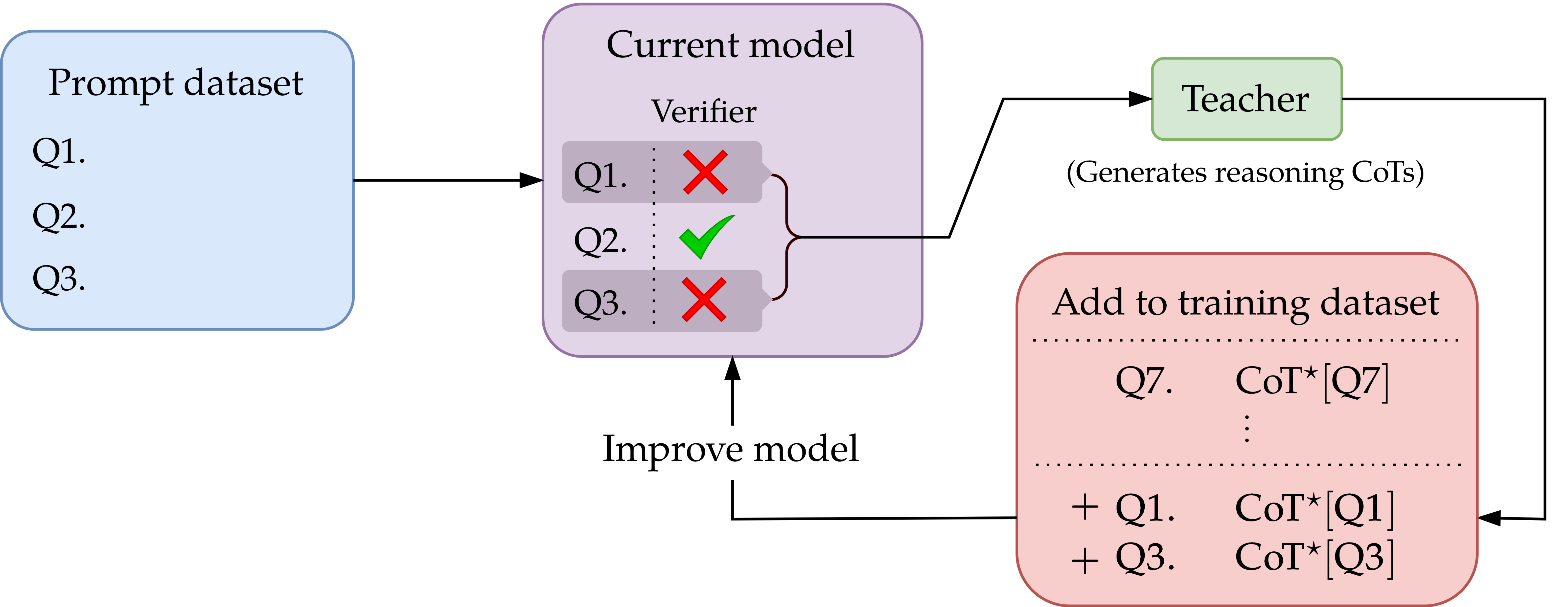}
    \caption{An example of autocurriculum for supervised fine-tuning: the learner chooses which prompts to receive teacher CoTs on, based on its accuracy. In each iteration, the learner's model is updated to digest the new supervision and improve its accuracy.}
    \label{fig:autocurriculum_distillation}
\end{figure}

\paragraph{\Cref{sec:distill}: Supervised fine-tuning.}

In this setting, the learner has interactive access to an expert that generates perfect reasoning traces, capturing SFT and distillation scenarios \citep{shao2024deepseekmath,abdin2025phi,olmo2025olmo}. We design $\Genboost$ (\underline{Auto}curriculum Fine-\underline{Tun}ing), an algorithm inspired by classical boosting~\citep{Freund,schapire2013boosting} and learning from counterexamples~\citep{angluin1987learning,NEURIPS2021_ae06fbdc,pmlr-v76-angluin17a} that iteratively queries for reasoning traces on prompts where the current model errs. We show that this learner is \textit{exponentially} more sample-efficient than the non-adaptive approach of~\citet{joshi2025theory}, reducing the number of reasoning demonstrations from $\widetilde{\Theta}(d/\varepsilon)$ to $\widetilde{O}(d)$, where $d$ measures the complexity of the model class and $1-\varepsilon$ is the target accuracy. In other words, the number of teacher demonstrations becomes nearly independent of the target accuracy. Perhaps surprisingly, this holds without distributional assumptions on the prompt space or hypothesis class---unlike classical active learning, where logarithmic label complexity requires restrictive structural conditions.

\paragraph{\Cref{sec:partial-coverage}: Fine-tuning a reference model.}
Our second setting captures reinforcement learning with verifiable rewards (RLVR)~\citep{lambert2024tulu}. Here, the learner starts with a reference model and must guide it toward higher accuracy through interaction with the outcome verifier. Under a natural coverage assumption on the reference model~\citep{zhu2023principled,song2024importance}, we show that autocurriculum \textit{decouples} the dependence on coverage from the target accuracy, reducing the computational cost from $\widetilde{O}(d \Cseq / \varepsilon)$ to $\widetilde{O}(d \Cseq + d/\varepsilon)$, where $\Cseq$ measures how well the reference model covers the correct reasoning traces. 
The $\widetilde{O}(d\Cseq)$ term reflects a coverage dependent startup cost that grows negligibly with the target accuracy; beyond this burn-in, the $\widetilde{O}(d/\varepsilon)$ cost for improving accuracy matches the cost of a reference model with perfect coverage.


\medskip
\noindent 
In both settings, no assumption on the distribution or difficulty of prompts is required: the autocurriculum emerges entirely from the model's own training dynamics, complementing the more widely studied data-centric approaches in modern pipelines.

\begin{table}[t]
    \centering
    \begin{tabular}{|c|c|c|}
    \hline
    \textbf{Setting} & \textbf{No curriculum} & \textbf{Autocurriculum} \\
    \hline
    \multirowcell{2}{SFT} & \multirowcell{2}{$\nprompt = \nCoT \sim \frac{\log(|\Pi|)}{\varepsilon}$} & \multirowcell{2}{$(\nprompt,\nCoT) \sim \big( \frac{\log(|\Pi|)}{\varepsilon}, {\color{green!70!red!90!black}\log(|\Pi|) \log(1/\varepsilon)} \big)$}\\
    & & \\\hline\hline
    \multirowcell{2}{RLVR} & \multirowcell{2}{$\ncomp \sim \frac{\Cseq \log (|\Pi|)}{\varepsilon}$} & \multirowcell{2}{$\ncomp \sim {\color{green!70!red!90!black} \Cseq \log (|\Pi|) \log(1/\varepsilon) + \frac{\log (|\Pi|)}{\varepsilon}}$}\\
     & & \\\hline
    \end{tabular}
    \caption{``$\sim$'' refers to equivalence up to $\polylog(1/\varepsilon,1/\delta,d,\Cseq)$ factors, $\Pi$ is the class of next-token models, and $\Cseq$ is the sequence-level coverage of the reference model in the RLVR setting (\Cref{def:seq-cov}). Entries in the table describe the complexity of learning a model \smash{$\pihat$} such that \smash{$\acc_{\rho} \big( \pihat \big) \ge 1-\varepsilon$} when $\Pi$ is finite and deterministic. $\nprompt$ denotes the number of prompts, $\nCoT \le \nprompt$ is the number of CoTs queried from an expert demonstrator; for RLVR, $\ncomp$ is the number of model rollouts during training.}
    \label{tab:results}
\end{table}

\subsection{Organization}

\noindent In \Cref{sec:prelim} we introduce preliminaries, in \Cref{sec:distill} we introduce some prior results on CoT learning with distillation feedback and describe how adaptive queries can significantly reduce labeling costs, focusing on the setting of deterministic models. In \Cref{subsec:stoch} we extend these results to the setting where the learner operates with general models. In \Cref{sec:partial-coverage} we consider the RLVR setting where the learner has access to a reference model and show how autocurriculum reduces the \textit{computational cost} of learning good policies.

\medskip
\noindent This is the first of a two part series of papers on the role of autocurriculum in learning autoregressive models. In the sequel~\citep{rajaraman2026learningreasoncurriculumii}, we study a setting where the learner can \textit{design} its own curriculum of problems to facilitate learning.

\section{Preliminaries} \label{sec:prelim}

We introduce the formal setup in three parts: the autoregressive language modeling framework, the notion of accuracy with respect to an outcome verifier, and the SFT and RL settings.

\paragraph{Basic notation.} For nonnegative quantities $a,b$, $a \lesssim b$ (resp.\ $a \gtrsim b$) if there exists a universal constant $C>0$ such that $a \le C b$ (resp.\ $a \ge C b$) and $a \asymp b$ if both $a \lesssim b$ and $a \gtrsim b$ hold. We use standard Big-Oh notation: $a = O(b)$ means $|a| \le C b$ for a universal constant $C$, and $a=\Omega(b)$ denotes the reverse inequality; we write $a=\Theta(b)$ when both bounds hold. Finally, we use the $\|$ delimiter to separate core arguments from noteworthy hyperparameters, e.g., $\Alg (\cdots \| \cdots )$.

\paragraph{Language models.} Let $\Sigma$ denote a finite token space and $\Sigma^*$ denote the collection of (possibly infinite length) strings constructed from this vocabulary. We consider prompted language models, where $\mathcal{X} \subseteq \Sigma^*$ denotes an abstract prompt space, and $\rho \in \Delta_{\mathcal{X}}$ is a target distribution over prompts to learn under. Let $\pi : \Sigma^* \to \Delta_{\Sigma}$ denote a \textit{next-token predictor model}, which takes in a prefix of tokens and returns a next-token distribution. We consider learning in the \textit{parameter-sharing regime}: a single model $\pi$ is shared across every position in the sequence (as in standard autoregressive decoding). Generating autoregressively from $\pi$ induces a distribution over sequences, i.e., a chain-of-thought.

\begin{definition}[Chain-of-thought (CoT) distribution]
For a model $\pi$, the chain-of-thought distribution, $\pi_{1:T}$, is a conditional distribution over length-$T$ sequences: for a prompt $\bx \in \calX$, $\pi_{1:T} (\cdot|\bx)$ is the distribution over $(y_1,\cdots,y_T) \in \Sigma^T$ obtained by the iterative process: $\forall t \ge 1,\ y_t \sim \pi ( \cdot | \bx, y_1,\cdots,y_{t-1})$. $\pi_{1:T}$ is supported on the space of responses, $\calY = \Sigma^T$.
\end{definition}

\noindent In this abstraction of language model reasoning, we assume that the final token generated in the chain-of-thought corresponds to the ``answer'' of the model to a given prompt. Thus, we also introduce notation for the distribution over the final token generated by the model.

\begin{definition}[Outcome distribution]
Given a prompt $\bx \in \calX$ and a model $\pi$, let $\pi_T ( \cdot| \bx )$ denote the $\bx$-conditional distribution of $y_T$ for $(y_1,\cdots,y_T) \sim \pi_{1:T} (\cdot|\bx)$, supported on $\Sigma$.
\end{definition}

\paragraph{Accuracy and verification.} The learner's objective is to return a model that correctly predicts the final answer with high probability. To measure correctness, we assume access to an \textit{outcome verifier} $\calV$ (e.g., a unit test or answer checker) that determines whether a given answer is correct for a given prompt.

\begin{definition}[Outcome verifier] \label{def:outcome-verifier}
An outcome verifier is a function  $\calV : \calX \times \Sigma \to \{ 0,1 \}$ which takes a prompt $\bx \in \calX$ and a guess for the final answer $y \in \Sigma$, and returns $1$ if and only if $y$ is a correct answer for $\bx$.
\end{definition}

\noindent Given a model $\pi$, its accuracy with respect to the outcome verifier $\calV$ is defined as the average accuracy across prompts in predicting the final answer correctly,
\begin{align}
    \acc_{\rho} ( \pi_{1:T} \| \calV ) \triangleq \bbE_{\bx \sim \rho} \big[ \acc_{\bx} (\pi_{1:T} \| \calV ) \big], \text{ where } \acc_{\bx} ( \pi_{1:T} \| \calV ) = \bbE_{\by \sim \pi_{1:T} (\cdot|\bx)} \big[ \calV ( \bx, y_T ) \big].
\end{align}
When $\calV$ and $T$ are clear from context, we abbreviate $\acc_{\rho} (\pi_{1:T} \| \calV)$ as $\acc_{\rho} (\pi)$ (and similarly $\acc_{\bx} ( \pi_{1:T} \| \calV )$ as $\acc_{\bx} (\pi)$).

\medskip
\noindent Throughout the paper, we consider a realizable setting where the learner has access to a model class $\Pi$ containing a model $\pi^\star$ which \textit{achieves perfect accuracy}.

\begin{assumption}[Optimal model is realizable] \label{assump:teacher}
There exists a model $\pi^\star \in \Pi$ such that,
\begin{equation*}
    \acc_\rho ( \pi^\star ) = 1.
\end{equation*}
\end{assumption}

\paragraph{Learning settings.} We consider two learning settings. In the SFT setting, the learner can query a teacher for reasoning traces on chosen prompts. In the RLVR setting, the learner instead has access to a pre-trained reference model that can generate candidate traces, and must improve it using verifier feedback.

\begin{definition}[Learning settings] \label{def:base-learner}
Consider a dataset of prompts, $D = \{ \bx_i \}_{i=1}^n$, where $\bx_i \sim \rho$. A learning algorithm $\Alg_\calQ (D \| \varepsilon, \delta, T)$ returns a model \smash{$\pihat$} such that as long as \smash{$n \ge \nprompt (\varepsilon,\delta,T)$}, w.p. at least $1-\delta$,
\begin{equation*}
    \acc_\rho \big( \pihat \big) \ge 1 - \varepsilon.
\end{equation*}
$\nprompt (\varepsilon,\delta,T)$ is referred to as the sample complexity of the learning algorithm. The subscript $\calQ \in \{ \texttt{SFT}, \texttt{RL} \}$ indicates the learning setting and how costs are measured.
\begin{itemize}[leftmargin=*]
    \item \textit{SFT setting.} For $\bx \in \calX$, a CoT oracle $\CoT (\bx)$ returns $\by \sim \pi_{1:T}^\star (\cdot|\bx)$. In the SFT setting, we assume that the learner has query access to $\CoT(\cdot)$. The query complexity, denoted $\nCoT (\varepsilon,\delta,T)$, measures the number of queries to $\CoT(\cdot)$ made by $\Alg_{\texttt{SFT}} (\cdot \| \varepsilon,\delta,T)$.
    \item \textit{RL setting.} A reference model is a conditional distribution \smash{$\piref : \calX \to \Delta_{\Sigma^T}$}. In the RL setting, we assume query access to it: for $\bx \in \calX$, we obtain a response \smash{$\Sigma^T \ni \by \sim \piref (\cdot|\bx)$} as well as its probability $\piref (\by|\bx)$. The computational cost of the learner, denoted $\ncomp (\varepsilon,\delta,T)$, measures the total number of length-$T$ responses generated from $\piref$ or any other model in $\Pi$ during the execution of $\Alg_{\texttt{RL}} (\cdot \| \varepsilon,\delta,T)$.
\end{itemize}
In both settings, the learner also has query access to the outcome verifier $\calV$.
\end{definition}

\section{SFT: Fine-Tuning with Teacher Supervision} \label{sec:distill}

We begin by reviewing prior results on SFT without curriculum (\Cref{sec:sft}), then motivate and formalize the autocurriculum problem (\Cref{sec:autocurriculum-sft}). Our main result, an exponential improvement in the number of CoT demonstrations for deterministic models, is in \Cref{subsec:det}; the extension to general models is in \Cref{subsec:stoch}.\loose

\subsection{Prior Work: SFT without Curriculum} \label{sec:sft}

\noindent The simplest approach to SFT is to collect CoT demonstrations from the teacher on every training prompt, then fit the model via next-token prediction on the resulting dataset. \citet{joshi2025theory} and \citet{foster2024behavior} analyze this approach and show that it is statistically efficient: treating $n$ reasoning traces as $nT$ next-token examples and minimizing empirical risk yields a model with high accuracy. The following proposition, which is a corollary of their results, summarizes the baseline sample complexity.

\begin{proposition}[Corollary of \citet{foster2024behavior,joshi2025theory}] \label{theorem:prior}
Let next-token prediction, $\NTP$, denote the CoT-supervised learning algorithm which takes a dataset $D = \{ (\bx_i,\by_i) \}_{i=1}^n$ of CoTs with $\bx_i \sim \rho$ and $\by_i \sim \pi^\star_{1:T} (\cdot|\bx_i)$, and returning the model,
\begin{equation*}
    \pi^{\NTP} \in \argmin_{\pi \in \Pi} \frac{1}{nT} \sum_{i=1}^n \sum_{t=1}^T \ell \big( \pi (\cdot | \bx_i,\by_{i,1:t-1}), y_{i,t} \big)
\end{equation*}
Under \Cref{assump:teacher}, with $\ell$ as the log-loss ($\ell (\pi,a) \equiv \log (1/\pi(a))$), $\NTP$ has sample and query complexity,
\begin{equation*}
    \nprompt (\varepsilon,\delta,T) = \nCoT (\varepsilon,\delta,T)  \lesssim \frac{\log (|\Pi|/\delta)\log(1/\varepsilon)}{\varepsilon^2}
\end{equation*}
Furthermore, when $\Pi$ is a (potentially unbounded) family of deterministic models, with $\ell$ as the $0$-$1$ loss, ($\ell (\pi,a) \equiv \bbE_{a' \sim \pi} [\bbI (a' \ne a)]$), under \Cref{assump:teacher}, the sample and query complexity of $\NTP$ is upper bounded by,
\begin{equation} \label{eq:prior}
    \nprompt (\varepsilon,\delta,T) = \nCoT (\varepsilon,\delta,T) \lesssim \frac{d \cdot \log(d T |\Sigma| ) \log(1/\varepsilon) + \log(1/\delta)}{\varepsilon}
\end{equation}
where $d = \Ndim (\Pi) \le \log_2 (|\Pi|)$ is the Natarajan dimension of $\Pi$ \citep{natarajan1989learning}.
\end{proposition}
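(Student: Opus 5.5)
Both bounds follow the same two-step template: first control a sequence-level error of $\pi^{\NTP}$ against $\pi^\star_{1:T}$ on $\rho$, then transfer that to accuracy. The transfer is the same in both parts. By \Cref{assump:teacher}, $\pi^\star$ answers correctly almost surely for $\rho$-almost every $\bx$, so
\begin{equation*}
1-\acc_\rho(\pihat) \le \bbE_{\bx\sim\rho}\big[\mathrm{TV}\big(\pihat_{1:T}(\cdot|\bx),\pi^\star_{1:T}(\cdot|\bx)\big)\big].
\end{equation*}
This holds because the event ``$y_T$ incorrect'' has probability zero under $\pi^\star_{1:T}$. It therefore suffices to bound the expected trajectory-level total variation, or the squared Hellinger distance, since $\mathrm{TV}\le \sqrt{2}\,\mathrm{H}$.

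\textbf{Log-loss part.} I will invoke the standard MLE guarantee for conditional density estimation over a finite class, as used by \citet{foster2024behavior}. The key point is that, because of parameter sharing, the per-token log-loss objective summed over $t$ equals the sequence log-likelihood $-\sum_i \log \pi_{1:T}(\by_i|\bx_i)$. So $\pi^{\NTP}$ is exactly the sequence-level MLE over the class $\{\pi_{1:T}:\pi\in\Pi\}$, with $n$ i.i.d.\ samples and a realizable truth. The classical bound then gives, with probability $1-\delta$,
\begin{equation*}
\bbE_{\bx\sim\rho}\big[\mathrm{H}^2(\pi^{\NTP}_{1:T},\pi^\star_{1:T})\big]\lesssim \log(|\Pi|/\delta)/n,
\end{equation*}
with no dependence on $T$. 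Combining this with the transfer step gives $\varepsilon \lesssim \sqrt{\log(|\Pi|/\delta)/n}$, so $n\gtrsim \log(|\Pi|/\delta)/\varepsilon^2$ suffices. I expect the extra $\log(1/\varepsilon)$ factor in the statement to come from a slack step: either a truncation or smoothing of the log-loss so that the likelihood ratios are bounded, or a union bound or doubling over $n$. I will not try to remove it.

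\textbf{Deterministic $0$-$1$ part.} When $\Pi$ is deterministic, $\pi^\star_{1:T}(\cdot|\bx)$ is a point mass. Realizability then gives a model in $\Pi$ with zero empirical $0$-$1$ loss, so the ERM $\pi^{\NTP}$ reproduces every training trace exactly. Its trajectory error is $\Pr_{\bx\sim\rho}[\pi^{\NTP}_{1:T}(\bx)\ne \pi^\star_{1:T}(\bx)]$, and this upper-bounds $1-\acc_\rho$.

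I will view the trajectory map $\bx\mapsto \pi_{1:T}(\bx)\in\Sigma^T$ as a multiclass predictor and consider the induced class $\Pi_{1:T}$. Its Natarajan dimension is bounded by composing $T$ copies of the shared next-token map. The growth function of $\Pi$ on $m$ prompts is evaluated on at most $mT$ prefixes, so it is at most $(mT|\Sigma|^2)^{d}$ by Natarajan's lemma. The same count bounds the number of distinct behaviors of $\Pi_{1:T}$ on $m$ prompts, because a trajectory is determined by the model's next-token outputs on the prefixes it visits. I will make this rigorous by noting that a trajectory disagreement is witnessed on the teacher's own prefixes, which are fixed by $\bx$; this is the subtle step. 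Conditioning on those fixed prefixes avoids any adaptivity issue.

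With the growth function in hand, the realizable consistent-learner bound (double sample and symmetrization, as in Blumer et al.) gives error at most $\varepsilon$ once
\begin{equation*}
n \gtrsim \frac{d\log(dT|\Sigma|)\log(1/\varepsilon)+\log(1/\delta)}{\varepsilon},
\end{equation*}
which is \eqref{eq:prior}. The main obstacle is this growth-function bound for the induced trajectory class. The $\log(1/\varepsilon)$ factor is the usual one from solving $n\varepsilon \gtrsim d\log(nT|\Sigma|)$ for $n$.
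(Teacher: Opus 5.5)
Your reconstruction is correct in substance and uses the same ingredients the paper relies on. The paper itself gives no proof beyond citing \citet{foster2024behavior,joshi2025theory}, and your two steps are the ones from those works. For log-loss, it is the horizon-independent Hellinger bound for sequence-level MLE; your parameter-sharing identity correctly shows that $\pi^{\NTP}$ is exactly that MLE. For deterministic $\Pi$, it is counting error patterns through the teacher's own prefixes. This is the same first-deviation/table argument the paper uses in \Cref{lemma:VCG}, specialized to a single target string per prompt.

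One correction to the deterministic part: delete the claims that the Natarajan dimension or growth function of the trajectory class $\Pi_{1:T}$ is controlled by $(mT|\Sigma|^2)^d$. This is false, because the prefixes a model visits depend on the model. For example, take $\Sigma=\{0,1\}$ and, for $s\in\{0,1\}^T$, let $\pi_s(\bx,y_{1:t-1})=s_t$ if $y_{1:t-1}=s_{1:t-1}$ and $0$ otherwise. The class $\{\pi_s\}$ has Natarajan (equivalently VC) dimension $1$ on prefixes. Yet $\pi_{s,1:T}(\bx)=s$, so it produces $2^T$ distinct trajectories on a single prompt. Composing $T$ copies of $\Pi$ only yields $dT$-type bounds.

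What the realizable double-sample argument actually needs is the growth function of the error-indicator class $\{\bx\mapsto\mathbb{I}[\pi_{1:T}(\bx)\neq\pi^\star_{1:T}(\bx)]:\pi\in\Pi\}$. This is exactly where your teacher-prefix observation applies. The indicator equals $\max_t\mathbb{I}[\pi(\bx,\pi^\star_{1:t-1}(\bx))\neq\pi^\star_t(\bx)]$, which depends only on $\pi$'s labels at $mT$ fixed points. So the pattern count is at most $(mT|\Sigma|^2)^d$, and \eqref{eq:prior} follows. Stated for that class only, your argument is complete.

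A minor point on the log-loss part. Your argument already gives $n\gtrsim\log(|\Pi|/\delta)/\varepsilon^2$ with no $\log(1/\varepsilon)$ factor, which implies the stated bound for, say, $\varepsilon\le 1/2$. So there is no missing slack step to account for. You can even do better: the incorrect-answer event has probability zero under $\pi^\star_{1:T}(\cdot|\bx)$, so its probability under $\pihat_{1:T}(\cdot|\bx)$ is at most $2\mathrm{H}^2$ (normalized Hellinger). That gives a $1/\varepsilon$ rate.
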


\subsection{Reducing the Cost of Supervision: Autocurriculum} \label{sec:autocurriculum-sft}

\noindent The baseline approach in \Cref{theorem:prior} collects CoT demonstrations on \textit{every} training prompt. But this is wasteful: many prompts may already be easy for the current model, and collecting expensive reasoning traces on them provides little benefit. A natural idea is to let the learner \textit{choose} which prompts to query for CoTs, focusing supervision on the prompts where it is most needed. To formalize this, we assume that in addition to the CoT oracle, the learner has access to the outcome verifier $\calV$, which can cheaply evaluate whether the model's current answer is correct.

\begin{problem}[Autocurriculum for SFT] \label{problem:best-of-both-worlds}
The learner receives a dataset of $n$ prompts $D = \{ \bx_i \}_{i=1}^n$ where \smash{$\bx_i \overset{\text{i.i.d.}}{\sim} \rho$} and has query access to the outcome verifier $\calV$ (\Cref{def:outcome-verifier}), and to CoT supervision oracle $\CoT : \calX \to \Delta_{\Sigma^T}$ (\Cref{def:base-learner}). The objective is to return a model $\pihat$ such that $\acc_{\rho} ( \pihat ) \ge 1-\varepsilon$.
\end{problem}

\noindent Note that this setting is \textit{not} a special case of active learning: the verifier provides a cheap source of feedback (is the current model correct on this prompt?) that is distinct from the expensive CoT supervision. This is what enables the exponential savings we establish below.

\subsection{Main Result: Exponential Improvement in CoT Supervision via Autocurriculum} \label{subsec:det}

We first specialize to deterministic models, i.e., each $\pi \in \Pi$ maps each token prefix to a single token. While language models trained in practice are usually not deterministic, we consider this setting as a warmup for the general case discussed in \Cref{subsec:stoch}. The deterministic setting is also closely tied to problems like semiautomaton learning~\citep{giapitzakis2025statistical} which have received attention recently. We also assume that each prompt has a unique correct answer (\Cref{assump:sparse} below), whereby accuracy reduces to the probability of matching the teacher's answer\footnote{We overload notation so that for any $\pi \in \Pi$, $\pi_T (\bx)$ denotes the token $\pi_T (\cdot|\bx)$ is supported on.}
\begin{equation} \label{eq:acc-rhoT}
    \acc_{\rho} \big( \pi \big) = \bbE_{\bx \sim \rho} \big[ \bbI \big( \pi_T (\bx) = \pi^\star_T (\bx) \big) \big].
\end{equation}

\begin{assumption}[Unique answers (sparse verifier)] \label{assump:sparse}
The verifier $\calV$ is supported on a unique correct answer for each prompt $\bx \in \calX$: $\{ y \in \Sigma : \calV ( \bx, y) = 1 \}$ is a singleton set. Under \Cref{assump:teacher}, this implies that $\pi^\star_T (\cdot|\bx)$ must be supported only on the corresponding token in this set, almost surely over $\bx \sim \rho$.
\end{assumption}

\paragraph{Algorithm idea.} Our algorithm, $\Genboost$ (\Cref{alg:CoTboost}), is inspired by classical boosting~\citep{Freund,schapire2013boosting}: it iteratively trains an ensemble of models, where each new model is trained using next-token prediction ($\NTP$) on prompts that the current ensemble gets wrong. Crucially, the verifier determines which prompts to focus on, while CoT demonstrations are only collected for those prompts. Since each new model fixes a constant fraction of the remaining errors, $k = \calO(\log(1/\varepsilon))$ rounds suffice to reach accuracy $1-\varepsilon$, granting an \textit{exponential} improvement in the number of CoT demonstrations required over the non-adaptive baseline. The resulting algorithm is improper, inducing an ``outcome-level'' model $\fhat : \calX \to \Delta_\Sigma$ obtained by aggregating the answers of the models in the ensemble.

\begingroup
\setlength{\algomargin}{\dimexpr\algomargin+\algonuminset\relax}
\normalem
\begin{algorithm}[htp]
\caption{\texorpdfstring{$\Genboost \big(\Dprompt{} \big\| \Alg_\calQ, \varepsilon,\delta,T \big)$}{AutoTune}}
\label{alg:CoTboost}
\noindent\hspace*{-\algonuminset}
\begin{minipage}{\dimexpr\linewidth+\algonuminset\relax}

\LinesNumbered
\setcounter{AlgoLine}{0}
\DontPrintSemicolon

{\color{gray}\# Supervised fine-tuning of deterministic policies with autocurriculum.}\;

\nl \KwIn{Class of models $\Pi$; target accuracy $1-\varepsilon$ and failure prob. $\delta$; reasoning steps $T$;\;

\hspace{3.2em}Prompt dataset $\Dprompt{} = \{\bx_i\}_{i=1}^n$ where $\bx_i \sim \rho$;\;

\hspace{3.2em}Outcome verifier $\calV:\calX\times\Sigma\to\{0,1\}$;\;

\hspace{3.2em}Base learning algorithm $\Alg_\calQ (\cdot \| \varepsilon',\delta',T)$.}

\BlankLine
\textbf{Initialize:} $\err_\star \gets \frac{1}{4}$ and $k \gets \left\lceil C\log(1/\varepsilon)/\err_\star^2 \right\rceil$ for a large absolute constant $C>0$.\;

Split $\Dprompt{}$ into $k$ equal parts $\big\{ \Dprompt{j} : 0 \le j \le k-1 \big\}$\;

$\Pi_0 \gets \emptyset$\;

\For{\textbf{\textup{phase}} $j \gets 0$ \KwTo $k-1$}{ \label{alg:CoTboost-4}
  $\Dout{j} \gets \subsample \big(\Dprompt{j} \big\| \Pi_j, k\big)$\label{alg:CoTboost-sample}
  \cmt*[r]{$\subsample(\cdot)$ (\Cref{alg:CoTboost-subsample}) induces target learning}

  $\pihat^j_T \gets \Alg_\calQ \big(\Dout{j} \big\| \err_\star, \frac{\delta}{k}, T \big)$ \label{alg:CoTboost-weaklearner}
  \cmttwo*[r]{\smash{\raisebox{0.3em}{distribution via rejection sampling from $\rho$.}}}

  $\Pi_{j+1} \gets \Pi_j \cup \big\{ \pihat^j \big\}$\;
}

\Return $\maj( \{ \pihat_T : \pihat \in \Pi_k \} )$\;
\end{minipage}
\end{algorithm}
\ULforem
\endgroup

\begin{theorem}[Exponential improvement via autocurriculum for SFT] \label{theorem:CoTboost}
Consider any $\delta \in (0,1/2)$ and $\varepsilon \in (0,1)$. Suppose $\Pi$ is composed of deterministic models, the verifier is sparse (\Cref{assump:sparse}) and that the optimal model is realizable (\Cref{assump:teacher}). Let $\Genboost$ (\Cref{alg:CoTboost}) be instantiated with the base learner $\Alg_{\SFT} (\cdot \| \ \varepsilon,\delta,T)$ from \Cref{theorem:prior}. If the size of the prompt dataset input to $\Genboost$ satisfies,
\begin{equation} \label{eq:samplecomplexity}
    n \ge \frac{d}{\varepsilon} \cdot \polylog ( \varepsilon^{-1}, \delta^{-1}, T, |\Sigma| ).
\end{equation}
where $d = \Ndim (\Pi)$. Then, the number of times $\Genboost$ queries the verifier is upper bounded by $n$, and the number of queries to the CoT oracle $\CoT (\cdot)$ is upper bounded by,
\begin{equation*}
    \nCoT \le d \cdot \polylog(\varepsilon^{-1},\delta^{-1},T,|\Sigma|)
\end{equation*}
and the resulting outcome-level model $\fhat$ satisfies with probability at least $1-\delta$, $\acc_{\rho} \big( \fhat \big) \ge 1- \varepsilon$.
\end{theorem}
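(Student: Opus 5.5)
We cast $\Genboost$ as boosting-by-filtering, in the style of Freund's boost-by-majority, with $\NTP$ from \Cref{theorem:prior} as the weak learner. At target accuracy $1-\err_\star = 3/4$, it needs $\nCoT \lesssim d\log(dT|\Sigma|)+\log(k/\delta)$ demonstrations per phase.

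\textbf{Step 1: weak-learner guarantee in each phase.} In phase $j$, $\subsample(\Dprompt{j}\|\Pi_j,k)$ uses the verifier to compute, for each prompt $\bx$, the vote margin of the current ensemble $\Pi_j$ on the correct answer. It then accepts $\bx$ with probability equal to the boost-by-majority weight $w_j(\bx)\in[0,1]$, which depends only on that margin and on $k-j$. The accepted prompts are i.i.d.\ from the reweighted distribution $\rho_j(\bx)\propto \rho(\bx)w_j(\bx)$. Only the accepted prompts are sent to $\CoT(\cdot)$, and $\pi^\star$ is still realizable with perfect accuracy under $\rho_j$. Hence \Cref{theorem:prior}, applied with $\varepsilon'=1/4$ and $\delta'=\delta/k$, shows that $\pihat^j$ has error at most $1/4$ under $\rho_j$, provided that $|\Dout{j}| \gtrsim d\log(dT|\Sigma|)+\log(k/\delta)$. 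Under \Cref{assump:sparse}, correctness of $\pihat^j_T(\bx)$ is the binary event that it equals $\pi^\star_T(\bx)$. This reduces the multiclass plurality $\maj$ to binary correct/incorrect voting: if strictly more than half of the $k$ ensemble members answer correctly, the plurality answer is correct.

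\textbf{Step 2: boosting analysis.} We use the standard boost-by-majority potential $\Phi_j=\bbE_{\bx\sim\rho}[\beta_j(\text{margin}_j(\bx))]$. Here $\beta_j(s)$ is the probability that a random walk started at $s$, which moves correctly with probability $3/4$, ends with a non-majority after the remaining $k-j$ steps. The weights are chosen as $w_j\propto \beta_{j+1}(s+1)-\beta_{j+1}(s)$, appropriately normalized. With these weights, edge $\ge 1/4$ under $\rho_j$ gives $\Phi_{j+1}\le\Phi_j$. Since $\Phi_0\le e^{-2\err_\star^2 k}\le\varepsilon$ by Hoeffding for $k\asymp\log(1/\varepsilon)/\err_\star^2$, the final majority error satisfies $\Phi_k\le \varepsilon$. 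A union bound over the $k$ phases costs $\delta$.

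\textbf{Step 3: counting samples, the main obstacle.} The number of CoT queries is $\sum_j|\Dout{j}|$. The difficulty is ensuring each phase accepts enough prompts, which requires the acceptance rate $\bbE_\rho[w_j]$ to be large enough. We split on its size. If $\bbE_\rho[w_j]\ge \varepsilon/\polylog(1/\varepsilon)$, then $n/k$ prompts yield the required $\widetilde O(d)$ accepted samples with high probability by a Chernoff bound, and we cap $|\Dout{j}|$ at that size. This is what forces $n\ge (d/\varepsilon)\polylog$. If instead $\bbE_\rho[w_j]$ is smaller, we argue as in Freund's filtering analysis that $\Phi_j$ is already $\lesssim \varepsilon$. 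The reason is that the normalized weights satisfy $\beta_j\lesssim \sqrt{k}\,w_j$ pointwise for margins that could still flip. In this case the phase is skipped, for example by outputting any model, since the potential cannot increase by more than the small weighted mass. The verifier is queried once per prompt per phase on disjoint splits, giving at most $n$ queries in total. The CoT total is $k\cdot\widetilde O(d)=d\,\polylog(\varepsilon^{-1},\delta^{-1},T,|\Sigma|)$.
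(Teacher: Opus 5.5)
Your overall architecture matches the paper's. You use the boost-by-majority potential $\Phi_j=\bbE_{\bx\sim\rho}[\beta^{j,k}_{\rank_j(\bx)}]$, \texttt{NTP} as a weak learner at error $1/4$ and confidence $\delta/k$, and Hoeffding for $\Phi_0$. You also use a Chernoff bound when the acceptance probability $p_j=\bbE_{\bx\sim\rho}[\alpha^{j,k}_{\rank_j(\bx)}]/\alpha^{j,k}_{\max}$ is large, and the cap on $|\Dout{j}|$ for the CoT count.

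\textbf{The gap: the low-acceptance case of Step 3.} The pointwise bound $\beta_j\lesssim\sqrt{k}\,w_j$ on flippable margins is false. Take $m=k-j\asymp k$ and a rank $r$ from which all $m$ remaining members must be correct for the vote to succeed. Then $\beta^{j,k}_r=1-(3/4)^m\in(0,1)$, while $\alpha^{j,k}_r=(3/4)^{m-1}$ and $\alpha^{j,k}_{\max}\asymp m^{-1/2}$. So $\beta$ exceeds the normalized weight by a factor of order $(4/3)^m/\sqrt m=\varepsilon^{-\Omega(1)}$, not $O(\sqrt k)$. The inequality fails on margins where the vote is still flippable but likely to fail. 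A state concentrated on such margins has $\Phi_j\approx 1$ with negligible acceptance, so a small acceptance rate alone does not give $\Phi_j\lesssim\varepsilon$. Moreover, even if $\Phi_j\lesssim\varepsilon$ held at one skipped phase, it would not control the damage accumulated over later skipped phases.

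\textbf{The repair.} Drop that claim and make your final clause (the potential cannot increase by more than the small weighted mass) quantitative, summed over phases. This is exactly the paper's route.
\begin{itemize}
\item The identity behind Step 2 is an equality, $\Phi_{j+1}=\Phi_j+(\err_j-\err_\star)\,\bbE_{\bx\sim\rho}[\alpha^{j,k}_{\rank_j(\bx)}]$, valid for whatever model is appended.
\item Hence the increase in any phase is at most $\alpha^{j,k}_{\max}p_j$.
\item On the good event (every non-aborted phase has a successful weak learner, and phases with $p_j\ge\tau$ do not abort), this gives $\Phi_k\le\Phi_0+\tau\sum_j\alpha^{j,k}_{\max}$.
\end{itemize}
The paper takes $\tau=\varepsilon/(4\sqrt k)$ and uses the bound $\sum_j\alpha^{j,k}_{\max}\le 2\sqrt k$ from Freund's boost-by-majority analysis. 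The crude bound $\alpha^{j,k}_{\max}\le 1$ with $\tau=\varepsilon/(2k)$ also works, costing only an extra $\sqrt{\log(1/\varepsilon)}$ factor in $n$. No statement about $\Phi_j$ at intermediate phases is needed.

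\textbf{A smaller point on verifier queries.} Computing $\rank_j(\bx)$ needs one call to $\calV$ per distinct answer among the $j$ current models, not one call per prompt. Your justification of the ``at most $n$'' verifier bound therefore undercounts; the honest count is at most $n\min(k,|\Sigma|)$. The paper's proof does not address this count either.
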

\begin{proof}
The proof is deferred to \Cref{sec:CoTboost}.
\end{proof}

\paragraph{Comparisons to baseline algorithms.}

\noindent \textit{CoT feedback without autocurriculum.} As discussed in \Cref{sec:sft}, \citet{joshi2025theory} show that given an i.i.d. dataset of $\calOtilde \big( \frac{d}{\varepsilon} \big)$ prompts labeled by CoTs from $\pi^\star$,\footnote{$d = \Ndim (\Pi) \le \log_2 (|\Pi|)$ denotes the Natarajan dimension of $\Pi$.} next-token prediction ($\NTP$) can be used to train a model to accuracy $1-\varepsilon$, and that this is optimal in the worst case. In \Cref{theorem:CoTboost} we show that it is possible to achieve accuracy $1-\varepsilon$ querying only $\calOtilde (d)$ CoTs from $\pi^\star$, using autocurriculum: an exponential improvement in query complexity.

\begingroup
\setlength{\algomargin}{\dimexpr\algomargin+\algonuminset\relax}
\normalem
\begin{algorithm}[t]
\caption{\texorpdfstring{$\subsample (D \| \Pi_j, k )$}{Sample}}
\label{alg:CoTboost-subsample}
\noindent\hspace*{-\algonuminset}
\begin{minipage}{\dimexpr\linewidth+\algonuminset\relax}
\LinesNumbered
\setcounter{AlgoLine}{0}
\DontPrintSemicolon
{\color{gray}\# Subsampling prompts via rejection sampling.}\;

\nl \KwIn{Dataset of prompts, $D = \{ \bx_i \}_{i=1}^n$\;

\hspace{2.9em} Set of $j$ models $\Pi_j$;\;

\hspace{2.9em} Outcome verifier $\calV : \calX \times \Sigma \to \{ 0,1 \}$;\;

\hspace{2.9em} Base learning algorithm $\Alg_\calQ (\cdot \| \varepsilon, \delta,T)$ with sample complexity $\nprompt (\varepsilon,\delta,T)$.
}

\BlankLine

\textbf{Initialize:} $\err_\star \gets \frac{1}{4}$ and $\Dout{} \gets \emptyset$\;

\For{prompts $\bx \in D$}{

\If(\hfill \cmt*[f]{If $\Dout{}$ is sufficiently large that $\Alg_\calQ$ can train a}){$|\Dout{}| \ge \nprompt \big( \err_\star,  \frac{\delta}{k} , T \big)$}{ \label{alg:CoTboost-subsample-6}
\textbf{Return:} $\Dout{}$ \cmttwo*[r]{model with constant accuracy, terminate loop.}
}

Draw $\eta \sim \unif ([0,1])$.\;

\If(\hfill \cmt*[f]{$w_j$ and $\| w_j \|_\infty$ are defined in \cref{eq:alpha}. \label{alg:CoTboost-subsample-7}}){$\eta \le w_j (\bx) / \| w_j \|_\infty$}{

$\Dout{} \gets \Dout{} \cup \{ \bx \}$\;
}
}

\textbf{Return:} $\Dout{}$\;

\vspace{-0.25em}
\nonl \hrulefill
\vspace{0.25em}

\nonl For $0 \le r \le j < k$, the weight $w_j$ is defined as $w_j (\bx) = \alpha^{j,k}_{\rank_j (\bx)}$ and $\| w_j \|_\infty = \max\limits_{0 \le r \le j} \alpha^{j,k}_r$, where,
\vspace{-0.75em}
\begin{align}
    \alpha^{j,k}_r &= \beta_r^{j+1,k} - \beta_{r+1}^{j+1,k}, \text{ where, } \beta^{j,k}_r = \begin{cases}
    \bbI (r \le k/2) &\text{if } j = k \\
    \err_\star \cdot \beta_r^{j+1,k} + \left( 1 - \err_\star \right) \cdot \beta_{r+1}^{j+1,k} \quad &\text{if } j < k
\end{cases}
\label{eq:alpha}
\end{align}
$\rank_j (\bx)$ counts the number of models in $\Pi_j$ which guess the label on $\bx$ correctly: for $j \ge 0$,
\begin{equation} \label{eq:rank}
    \rank_j (\bx) = \sum\nolimits_{\pi \in \Pi_j} \calV (\bx, \pi_T (\bx)) \in [0,j].
\end{equation}
\vspace{-1em}
\end{minipage}
\end{algorithm}
\ULforem
\endgroup

\noindent \textit{End-to-end feedback.} \citet{joshi2025theory} also consider an ``end-to-end'' learning setting where the learner is given a dataset of $n$ prompts labeled only with the final answer, $\{ (\bx_i,y_i) \}_{i=1}^n$ with $\bx_i \sim \rho$ and $y_i \sim \pi^\star_T (\cdot|\bx_i)$, and aims to learn a model with high accuracy. Here the authors show strong statistical and computational lower bounds for learning with this feedback: the worst-case sample complexity degrades to $\widetilde{\Omega} \big( \frac{dT}{\varepsilon} \big)$, compared to the setting where the learner receives prompts labeled with full CoTs (cf. \cref{eq:prior}). While this can be attributed to the fact that CoTs are more informative, there are also computational barriers: learning a model that is much better than random guessing, from end-to-end feedback, is shown to be computationally intractable.

\smallskip
\noindent In contrast, \Cref{theorem:CoTboost} shows that even assuming a weaker form of supervision than end-to-end feedback (namely, access to an outcome verifier), a small amount of carefully selected CoT feedback is sufficient to restore computational tractability (\Cref{alg:CoTboost} is efficient with respect to an ERM oracle for $\Pi$ under the $0$-$1$ loss). Furthermore, from a statistical point of view, the prompt dataset size required by $\Genboost$ to achieve accuracy $1-\varepsilon$ (cf. \Cref{eq:samplecomplexity}) returns to $\calOtilde \big( \frac{d}{\varepsilon} \big)$. 

\medskip
\noindent Taken together, these results show that autocurriculum enables best-of-both-worlds guarantees: the query complexity of autocurriculum improves exponentially over learning from CoTs alone, while simultaneously avoiding the statistical and computational barriers that are inherent to learning from end-to-end feedback.

\begin{remark}[Comparison to active learning]
In the active learning literature, $\log(1/\varepsilon)$-style label complexity bounds for classification are achievable under distributional assumptions such as bounded star number~\citep{hanneke2015minimax} or bounded disagreement coefficient~\citep{hanneke2007bound} on the hypothesis class. In contrast, the CoT query complexity for $\Genboost$ in \Cref{theorem:CoTboost} has polylogarithmic dependence in $1/\varepsilon$ without any such assumptions on $\Pi$ or the induced end-to-end class $\Pi_T = \{ \pi_T : \pi \in \Pi \}$. This highlights that \Cref{problem:best-of-both-worlds} is not a special case of active learning: the learner can extract information about $\pi^\star$ by querying the verifier, which is sufficiently informative to circumvent active learning lower bounds.
\end{remark}

\begingroup
\normalem
\begin{figure}[t!]
\centering
\subfigure[$\pihat_1$ trained to accuracy at least $\frac{3}{4}$.]{
\begin{tikzpicture}
  \tkzDefPoint(0,0){A}  
  \tkzDefPoint(1.25,0.25){B}
  \begin{scope}
  \tkzFillCircle[color=gray!20](B,A)
  \end{scope}
  \tkzDrawCircle(B,A)
  \node at (2.25,-1) {$\pihat_1$};
  \node at (1.25,0.25) {$\ge \frac{3}{4}$};
  \node[below left] at (2.9,2) {$\calX$};
  \path[use as bounding box, draw] (-1.65,-2.5) rectangle (2.9,2);
\end{tikzpicture}
}\hfill
\subfigure[Model $\pihat_2$ is trained subsequently to accuracy $\frac{3}{4}$.]{
\begin{tikzpicture}
  \tkzDefPoint(0,0.25){A}
  \tkzDefPoint(1.25,0.25){B}
  \begin{scope}
    \tkzClipCircle(A,B) \tkzClipCircle(B,A)
    \tkzFillCircle[color=gray!20](A,B)
  \end{scope}
  \tkzDrawCircle(A,B)
  \tkzDrawCircle(B,A)
  \node at (-1,-1) {$\pihat_2$};
  \node at (2.25,-1) {$\pihat_1$};
  \path[use as bounding box, draw] (-1.65,-2.5) rectangle (2.9,2);
\end{tikzpicture}
}\hfill
\subfigure[$\pihat_3$ fixes labels on some incorrectly labeled prompts.]{
\begin{tikzpicture}
  \tkzDefPoint(0,0.25){A}
  \tkzDefPoint(1.25,0.25){B}
  \tkzDefPoint(0.625,-0.84){C}
  \begin{scope}
    \tkzClipCircle(A,B) \tkzClipCircle(B,A)
    \tkzFillCircle[color=gray!20](A,B)
  \end{scope}
  \begin{scope}
  \tkzClipCircle(B,C) \tkzClipCircle(B,C)
    \tkzFillCircle[color=gray!20](C,B)
  \end{scope}
  \begin{scope}
    \tkzClipCircle(C,A) \tkzClipCircle(C,A)
    \tkzFillCircle[color=gray!20](A,C)
    \end{scope}
  \tkzDrawCircle(A,B)
  \tkzDrawCircle(B,A)
  \tkzDrawCircle(C,A)
  \node at (-0.5,-2) {$\pihat_3$};
  \node at (-1,-1) {$\pihat_2$};
  \node at (2.25,-1) {$\pihat_1$};
  \node at (1.75,0.5) {{\color{red}\large$\times$}};
  \node at (-0.15,-0.5) {{\color{green!50!black}\large$\checkmark$}};
  \path[use as bounding box, draw] (-1.65,-2.5) rectangle (2.9,2);
\end{tikzpicture}
}
\caption{An illustration of how models trained on the appropriate prompt distributions can correct errors of prior ones. The region marked in gray captures the region correctly labeled by the plurality of the ensemble of models. Comparing $(a)$ to $(c)$, the model $\pihat_3$ corrects some errors made by $\pihat_1$ (green checked region), but also introduces new errors (red crossed region). When trained under the appropriate autocurriculum, the accuracy of the ensemble improves geometrically toward $1$.}
\label{fig:recursive-halving}
\end{figure}
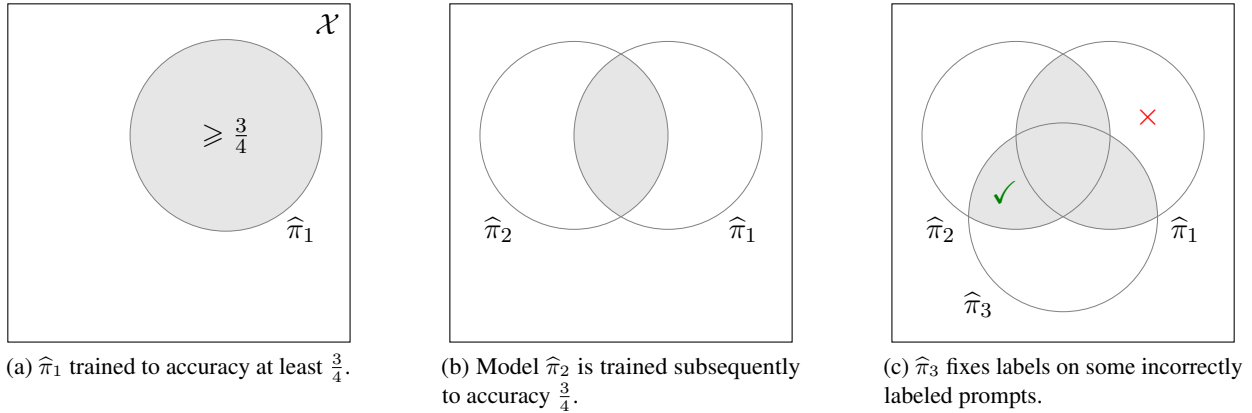
\ULforem
\endgroup

\begin{remark}[The role of outcome-level accuracy]
The bound on the number of CoTs queried by \Cref{alg:CoTboost} in \Cref{theorem:CoTboost} (or any algorithm, for that matter) is only achievable when the accuracy \smash{$\acc_{\rho} ( \pihat )$} of a model $\pihat$ is measured as in \Cref{eq:acc-rhoT} as the correctness in predicting the final token $\pi^\star_T (\bx)$ for $\bx \sim \rho$. In particular, if the accuracy of $\pihat$ is instead measured by its correctness on the full CoT sequence, $\bbE_{\bx \sim \rho} \big[ \bbE_{\by \sim \pihat_{1:T} (\cdot|\bx)} \big[ \bbI \big( \by = \pi^\star_{1:T} (\bx) \big) \big] \big]$, the bound degrades to \smash{$\widetilde{\Omega} \big( \frac{d}{\varepsilon} \big)$} CoT queries. This follows from the fact that the outcome verifier cannot provide any supervision for the intermediate tokens in the teacher's CoT.\loose
\end{remark}

\paragraph{Proof sketch for \Cref{theorem:CoTboost}.} The high-level idea is illustrated in \Cref{fig:recursive-halving}. In each of the $k = \calO(\log(1/\varepsilon))$ phases, a new model is trained to constant accuracy on a reweighted distribution over prompts that upweights the regions where the current ensemble errs. The $\subsample(\cdot)$ subroutine (\Cref{alg:CoTboost-subsample}) implements this reweighting using the verifier. Each new model is trained to be a \textit{weak learner}, fixing a constant fraction of remaining errors. Thereby, the ensemble's accuracy approaches $1$ at a geometric rate. A subtle point is that weak models trained to constant accuracy may introduce new errors when added to the ensemble. The key argument in the proof is that the reweighting ensures each new model fixes mistakes on a larger measure of prompts than it introduces errors on.

\noindent \textit{Distributions induced by $\subsample (\cdot)$.} The distribution over prompts induced by $\subsample(\cdot)$ in iteration $j$, $\rho_j^\star$, can be written down in terms of a reweighting function $w_j$ applied to the original prompt distribution $\rho$
\begin{align} \label{eq:rhojstar}
    \rho_j^\star (\cdot) &\propto \rho (\cdot) w_j (\cdot), \text{ where } w_j (\bx) = \alpha^{j,k}_{\rank_j(\bx)} \text{ and } \| w_j \|_\infty = \max_{0 \le r \le j} \alpha^{j,k}_r
\end{align}
where $\alpha^{j,k}_r$ and $\rank_j$ are defined in \cref{eq:alpha} and \cref{eq:rank} respectively. The choice of this weight function is inspired from the boosting-by-filtering approach~\citep{Freund}. \smash{$\alpha^{j,k}_{\rank_j(\bx)}$} has a natural interpretation: it captures the probability that more than half of the models in the final ensemble predict the correct label on $\bx$, given that $\rank_j (\bx)$ among the first $j$ predict the correct label, and assuming that every future model is independently correct on $\bx$ with probability $1-\err_\star = \frac{3}{4}$. 

\subsection{Extension to General Models} \label{subsec:stoch}

We now extend our results to the case where $\Pi$ contains stochastic models, which more accurately models the language modeling setting. We show that autocurriculum yields significant improvements in this setting as well, provided that the correctness guarantee is relaxed from \textit{high accuracy} to \textit{moderate accuracy on most prompts}. This can be interpreted as a bound on the model's \textit{pass@k rate}~\citep{chen2021evaluating}: the probability of a positively rewarded attempt among $k$ independently generated answers. Furthermore, these guarantees can be \textit{sharpened} to high-accuracy ones at inference time when prompts have unique final correct answers.

\begin{theorem}[Autocurriculum for SFT with general models] \label{theorem:CoTboost-stochastic}
Consider any $\delta \in (0,1/2)$ and $\varepsilon \in (0,1)$ and assume that the optimal model is realizable (\Cref{assump:teacher}). Let $\Genboost$ (\Cref{alg:CoTboost}) use the base algorithm $\Alg_{\SFT}$ as the learner from \Cref{theorem:prior}. Suppose the size of the prompt dataset satisfies, $n \ge \frac{\log (|\Pi|)}{\varepsilon} \cdot \polylog ( \varepsilon^{-1}, \delta^{-1}, T )$. Then, $\Genboost$ queries the oracle $\CoT (\cdot)$ at most,
\begin{equation*}
    \nCoT \le \log (|\Pi|) \cdot \polylog(\varepsilon^{-1},\delta^{-1},T)
\end{equation*}
times to return an outcome-level model $\fhat$ such that with probability $1-\delta$, $\Pr_{\bx \sim \rho} \big( \acc_\bx ( \fhat ) \ge \frac{3}{5} \big) \ge 1-\varepsilon$. Finally, $\Genboost$ is implementable with $\calO(\log(1/\varepsilon))$ calls to a log-loss ERM oracle for $\Pi$.
\end{theorem}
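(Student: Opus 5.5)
We adapt the boosting-by-filtering argument behind \Cref{theorem:CoTboost}, with one substitution: the binary correctness indicator of a deterministic model is replaced by a randomized ``vote''. We reduce the stochastic setting to a binary boosting problem in which each model $\pihat^j$ induces the per-prompt correctness probability $p_j(\bx) \triangleq \acc_\bx(\pihat^j)$. We reinterpret $\rank_j(\bx)$ as a random count $\sum_{i<j} \calV(\bx, Y_i)$, where $Y_i \sim \pihat^i_T(\cdot|\bx)$ is freshly sampled. The filter in $\subsample(\cdot)$ then accepts $\bx$ with probability $\bbE[\alpha^{j,k}_{\rank_j(\bx)}]/\|w_j\|_\infty$. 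This needs only one rollout per model per prompt and one verifier call each. The outcome-level model $\fhat$ samples the $k$ models independently and outputs the plurality answer (or, for the binary analysis, we track the event that more than $k/2$ of the sampled answers are verified correct).

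The first step is the weak-learning guarantee. On the reweighted distribution $\rho_j^\star \propto \rho\, w_j$, realizability (\Cref{assump:teacher}) still holds, since $\pi^\star$ is perfect on every $\bx$ in the support of $\rho$. We therefore invoke the log-loss part of \Cref{theorem:prior} with constant target $\err_\star$. Doing so needs $\nprompt(\err_\star, \delta/k, T) \lesssim \log(|\Pi| k/\delta)$ accepted prompts and hence that many CoT queries per phase. This yields $\bbE_{\bx\sim\rho_j^\star}[1-p_j(\bx)] \le \err_\star$ with probability $1-\delta/k$. Summing over $k = O(\log(1/\varepsilon))$ phases gives the claimed bound $\nCoT \le \log(|\Pi|)\polylog$. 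It also gives $O(\log(1/\varepsilon))$ log-loss ERM calls.

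The second step is the potential argument from \citet{Freund}, transplanted to randomized votes. Define $\Phi_j = \bbE_{\bx\sim\rho}\bbE[\beta^{j,k}_{\rank_j(\bx)}]$. Conditioning on $\rank_j(\bx)=r$, the next vote is correct with probability $p_j(\bx)$. The recursion then gives $\bbE[\beta^{j+1,k}_{\rank_{j+1}}\mid r] = \beta^{j+1,k}_{r+1} + (1-p_j(\bx))\,\alpha^{j,k}_r$, while $\beta^{j,k}_r = \beta^{j+1,k}_{r+1} + \err_\star \alpha^{j,k}_r$. Hence the telescoping step is
\[
\Phi_{j+1} - \Phi_j = \bbE_\rho\big[w_j(\bx)\,\big((1-p_j(\bx)) - \err_\star\big)\big] \le 0,
\]
where the inequality is exactly the weak-learning guarantee under $\rho_j^\star$. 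This is cleaner than in the deterministic case because linearity of expectation absorbs the stochasticity. Iterating gives $\Phi_k \le \Phi_0 = \Pr(\mathrm{Bin}(k, 3/4) \le k/2) \le e^{-ck}$, which is at most $\varepsilon^2$ by the choice of $k$.

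The last step converts the potential bound into the moderate-accuracy conclusion, and it is the main obstacle. Write $q(\bx)$ for the probability that the random vote $\fhat$ is correct on $\bx$. Then $\Phi_k = \bbE_\rho[1-q(\bx)]$ is small in aggregate, and Markov's inequality gives $\Pr_\rho(q(\bx) < 3/5) \le \frac{5}{2}\Phi_k \le \varepsilon$. We take $\acc_\bx(\fhat)$ to be this per-prompt success probability of the randomized ensemble.

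Two difficulties make this step the bottleneck.
\begin{itemize}
\item Without \Cref{assump:sparse}, a majority of correct votes need not mean a plurality answer that is correct. We would handle this by defining $\fhat$ to output a verified-correct sampled answer whenever at least half the votes are correct, which is implementable at training time via $\calV$. At test time we fall back on the plurality, using sparsity for the sharpening remark.
\item The filter acceptance rate can be as small as $\Theta(\varepsilon)$ when the ensemble is already good. Each phase then needs $\widetilde{O}(\log|\Pi|/\varepsilon)$ raw prompts to collect enough accepted ones. We would show, as in Freund, that $\bbE_\rho[w_j]/\|w_j\|_\infty \gtrsim \varepsilon/\sqrt{k}$ until the target is reached, and otherwise that the returned ensemble already satisfies the guarantee. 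This accounts for the $\frac{\log|\Pi|}{\varepsilon}\polylog$ prompt requirement.
\end{itemize}
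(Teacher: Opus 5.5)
\textbf{Gap: the final conversion needs the sparse verifier, which the theorem does not assume.} The identity $\Phi_k=\bbE_\rho[1-q(\bx)]$ holds only if $q(\bx)$ is the probability that more than $k/2$ sampled answers are verified correct. That is not the accuracy of your plurality model. Suppose every $\pihat^j_T(\cdot\mid\bx)$ spreads mass $0.8$ evenly over ten distinct correct answers and puts $0.2$ on one incorrect answer. Then the plurality of $k$ samples is wrong with probability tending to one, yet $\Pr(\rank_k(\bx)\le k/2\mid\bx)\le e^{-0.18k}$.

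Your patch (output a verified-correct sample) makes $\fhat$ call $\calV$ at inference. That is not an outcome-level model in the sense of the theorem: with $\calV$ available at test time one could simply enumerate $y\in\Sigma$. Falling back on the plurality at test time reinstates \Cref{assump:sparse}.

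Switching to the uniform mixture $\frac1k\sum_j\pihat^j_T$ does not help while you keep the terminal threshold at $k/2$. A prompt on which every model has accuracy $0.59$ contributes only $\Pr(\mathrm{Bin}(k,0.59)\le k/2)=e^{-\Omega(k)}$ to $\Phi_k$. So no Markov step on $\Phi_k$ can certify mixture accuracy $3/5$ on most prompts.

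\textbf{The fix, and how your route then compares.} The missing idea, which the paper uses, is to return the mixture $\fhat=\frac1k\sum_j\pihat^j_T$. In your notation its accuracy is $\acc_\bx(\fhat)=\frac1k\sum_j p_j(\bx)=\frac1k\bbE[\rank_k(\bx)\mid\bx]$. Then run the potential with terminal threshold strictly above $3k/5$ and a smaller $\err_\star$, e.g.\ $\tbeta^{k,k}_r=\bbI(r\le 4k/5)$ with $\err_\star=1/10$, so that $\tbeta^{0,k}_0\le e^{-ck}$ still holds. Markov applied to $\rank_k(\bx)$ closes the argument: if $\acc_\bx(\fhat)<3/5$ then $\Pr(\rank_k(\bx)>4k/5\mid\bx)<3/4$, hence $\Pr_{\bx\sim\rho}(\acc_\bx(\fhat)<3/5)\le 4\Phi_k$.

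With that change your route is complete and simpler than the paper's. Your single-rollout rank makes the accepted-prompt distribution exactly proportional to $\rho(\bx)\,\bbE[\alpha^{j,k}_{\rank_j(\bx)}\mid\bx]$, and makes the telescoping step exact. The log-loss guarantee $\acc_{\rho_j^\star}(\pihat^j)\ge 1-\err_\star$ therefore enters directly. The paper instead thresholds Monte Carlo estimates at $9/10$, and pays for it with the Markov/Chernoff conversions in \Cref{lemma:CoT-weak-learner-stochastic} and the Chebyshev argument in \Cref{lemma:acc-to-apx-acc}.

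\textbf{Abort accounting.} Separately, this needs tightening. A starved filter does not mean the current ensemble already meets the target, and $\Phi_k\le\Phi_0$ is not what you get. What you can say is that in such a round the potential rises by at most $\alpha^{j,k}_{\max}$ times the acceptance rate. Sum this over rounds whose acceptance rate is below order $\varepsilon/\sqrt{k}$, using $\sum_j\alpha^{j,k}_{\max}=O(\sqrt{k})$ (\Cref{lemma:CoT-alphamax-sum}, \Cref{lemma:CoT-alphamax-sum-stochastic}). This gives $\Phi_k\le\Phi_0+O(\varepsilon)$.
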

\begin{proof}
The proof is deferred to \Cref{app:CoTboost-stochastic}.
\end{proof}

\noindent For the case of deterministic model classes considered earlier, \Cref{alg:CoTboost} constructs an ensemble of models such that at the end of training, on a $1-\varepsilon$ mass of prompts, over half of the models guess the label correctly. The proof follows a similar strategy to the deterministic case, except that the $0$-$1$ loss is replaced by
\begin{equation} \label{eq:loss-stochastic}
    \ell_\bx (\pi) = \bbI \Big( \acc_\bx (\pi) \ge \frac{4}{5} \Big)
\end{equation}
We defer a detailed discussion to the appendix, where we address the technical complication that \cref{eq:loss-stochastic} cannot be computed exactly when the model $\pi$ is stochastic. This changes how the weak learning distributions are defined.

\begin{remark}[Sharpening by consensus vote]
Assuming that the answers to prompts are unique (\Cref{assump:sparse}), it is possible to improve the guarantee of \Cref{theorem:CoTboost-stochastic} to a high accuracy one by consensus vote. The resulting algorithm, $\Genboost_{\texttt{cons}}$ (inducing the outcome-level model $\fhatcons$), is defined as follows: for $\bx \in \calX$, $\fhatcons(\cdot|\bx)$ samples an answer by computing $\maj \big( \big\{ y^1,\cdots,y^N \big\} \big)$ for $N$ independent answers, \smash{$y^i \sim \fhat(\cdot|\bx)$}. For suitably large $N = \Omega (\log(1/\varepsilon))$, the outcome-level model returned by $\Genboost_{\texttt{cons}}$ satisfies the guarantee, \smash{$\acc_{\rho} (\fhatcons) \ge 1 - \calO(\varepsilon)$}.
\end{remark}

\begingroup
\setlength{\algomargin}{\dimexpr\algomargin+\algonuminset\relax}
\normalem
\begin{algorithm}[t]
\caption{\texorpdfstring{$\Genbooststoch \big( \Dprompt{} \big\| \Alg_{\SFT}, \varepsilon,\delta,T \big)$}{AutoLearnStoch}}
\label{alg:CoTboost-stochastic}
\noindent\hspace*{-\algonuminset}
\begin{minipage}{\dimexpr\linewidth+\algonuminset\relax}

\LinesNumbered
\setcounter{AlgoLine}{0}
\DontPrintSemicolon

{\color{gray}\# Supervised fine-tuning of stochastic policies with autocurriculum.}\;

\nl \KwIn{Class of models $\Pi$; target accuracy $1-\varepsilon$ and failure prob. $\delta$; reasoning steps $T$;\;

\hspace{3.1em}Prompt dataset $\Dprompt{} = \{\bx_i\}_{i=1}^n$ where $\bx_i \sim \rho$;\;

\hspace{3.1em}Outcome verifier $\calV:\calX\times\Sigma\to\{0,1\}$;\;

\hspace{3.1em}Base CoT-supervised learning algorithm $\Alg_{\SFT} (\cdot \| \varepsilon',\delta', T)$.}

Let $\widetilde{\Genboost} (\Dprompt{} \| \Alg_{\SFT}, \varepsilon,\delta,T)$ be a variant of $\Genboost$ (\Cref{alg:CoTboost}) with $\err_\star \gets \frac{1}{400}$ and where \Cref{alg:CoTboost-sample} is replaced by:
\begin{equation*}
    \Dout{j} \gets \widetilde{\subsample} \big( \Dprompt{j} \big\| \Pi_j , k \big),
\end{equation*}

\nonl $\widetilde{\subsample} ( D \| \Pi_j, k)$ is a variant of $\subsample$ (\Cref{alg:CoTboost-subsample}) with $\err_\star \gets \frac{1}{10}$ and weight function $\ww_j$ instead of $w_j$ on \Cref{alg:CoTboost-subsample-7} of \Cref{alg:CoTboost-subsample}. \cmt*[r]{$\ww_j$ is defined in \Cref{eq:montecarloacc} to \Cref{eq:trhojstar}}

Let \smash{$\pihat^0,\cdots,\pihat^{k-1}$} be the models trained within \smash{$\widetilde{\Genboost} (\Dprompt{} \| \Alg_{\SFT}, \varepsilon, \delta, T)$}\;

\textbf{Return:} Uniform mixture of outcome-level models, $\frac{1}{k} \sum_{j=0}^{k-1} \pihat^j_T$.\;
\end{minipage}
\end{algorithm}
\ULforem
\endgroup

\section{RL: Improving a Reference Model with Verifier Guidance} \label{sec:partial-coverage}

In the previous section, the learner had access to an optimal teacher. We now turn to the more practical RLVR setting~\citep{lambert2024tulu}, where the learner starts with a pre-trained reference model and must improve it using only verifier feedback. Since generation costs dominate in practice~\citep{zhou2025aprilactivepartialrollouts}, we measure computational cost by the total number of reasoning traces generated during training. We first formalize the problem and introduce a natural coverage assumption on the reference model, then establish a baseline showing that learning is tractable via rejection sampling, and finally show that autocurriculum decouples the dependence on coverage from accuracy.

\begin{figure}
    \centering
    \includegraphics[width=0.85\linewidth]{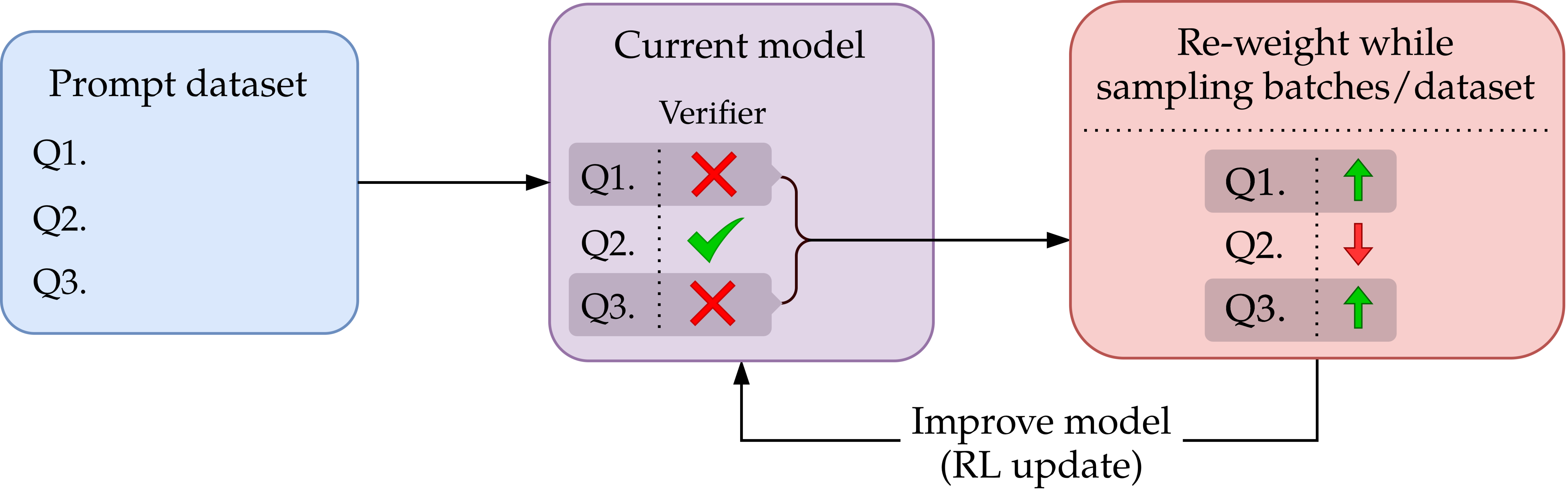}
    \caption{An example of autocurriculum for RL: the learner chooses which prompts to add to the training batch/dataset, based on accuracy. In each iteration, the learner's model is updated on these prompts using an RL update to improve its accuracy.}
    \label{fig:autocurriculum_RL}
\end{figure}

\begin{problem}[Fine-tuning a reference model]
\label{problem:autoRL}
The learner receives a dataset of $n$ prompts $D = \{ \bx_i \}_{i=1}^n$ where \smash{$\bx_i \overset{\text{i.i.d.}}{\sim} \rho$} and has query access to the outcome verifier $\calV$, and to a reference model $\piref : \calX \to \Delta_{\Sigma^T}$. The objective is to return a model $\pihat$ such that $\acc_{\rho} ( \pihat ) \ge 1-\varepsilon$.

\smallskip
\noindent The computational cost of the learner is measured as the total number of CoTs generated over the course of training: this includes those from $\piref$, as well as those from the learner's own models.
\end{problem}

\noindent In the natural regimes of RL finetuning in practice, one starts from a reference model that is trained to generate correct reasoning traces with non-negligible probability. We formalize this through \textit{sequence-level coverage}~\citep{rashidinejad2021bridging,foster2025good}, which requires that the reference model places at least $\Cseq^{-1}$ probability on the correct CoT for each prompt. The parameter $\Cseq$ quantifies how well the reference model ``covers'' the target behavior; smaller $\Cseq$ means better coverage.

\begin{definition}[Sequence-level coverage~\citep{jiang2025offline}] \label{def:seq-cov}
Assume the model class $\Pi$ is deterministic. A reference model $\piref$ is said to satisfy sequence-level coverage with parameter $\Cseq$ if,
\begin{equation} \label{eq:pcov}
    \piref (\by_\bx^\star | \bx) \ge \Cseq^{-1},
\end{equation}
where, $\by^\star_\bx = \pi_{1:T}^\star (\bx)$ denotes the CoT generated by $\pi^\star$ on the prompt $\bx$.
\end{definition}

\subsection{Baseline: Learning via Rejection Sampling}

As a baseline, we consider the algorithm $\Vlearn$ (\Cref{alg:Vlearn}), which follows a natural approach analogous to filtered SFT: for each prompt, generate multiple candidate CoTs from the reference model, keep only those that produce a correct answer (as judged by the verifier), and train a model on the surviving traces. Under the coverage assumption, the reference model is guaranteed to produce at least one correct trace per prompt with high probability after $\calOtilde(\Cseq)$ samples. 

\begin{proposition}[Learning from a reference model with coverage] \label{theorem:seq-coverage}
Consider any target error $\varepsilon \in (0,1)$ and failure probability $\delta \in (0,1)$. Suppose $\Pi$ is deterministic, the reference model $\piref$ satisfies sequence-level coverage with parameter $\Cseq$ (\Cref{def:seq-cov}), answers are unique (\Cref{assump:sparse}) and that the optimal model is realizable (\Cref{assump:teacher}). Consider the model \smash{$\pihat$} returned by $\Vlearn$ (\Cref{alg:Vlearn}). There exists an absolute constant $C_1 > 0$ such that if,
\begin{equation*}
    n \ge C_1 \frac{d \log (T |\Sigma| \Cseq) \log(1/\varepsilon) + \log(1/\delta)}{\varepsilon},
\end{equation*}
then, with probability at least $1-\delta$, \smash{$\acc_{\rho} ( \pihat ) \ge 1-\varepsilon$}.
$\Vlearn$ queries the outcome verifier oracle $\calV$, and generates CoTs from $\piref$, no more than $m = \calO ( n \Cseq \log(n\Cseq/\delta))$ times.
\end{proposition}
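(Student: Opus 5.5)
The plan is to reduce to the deterministic-class guarantee of \Cref{theorem:prior} (the $0$-$1$ loss ERM bound in \cref{eq:prior}), run on a dataset whose CoT labels are produced by rejection sampling from $\piref$ rather than by the oracle $\CoT(\cdot)$. I read $\Vlearn$ as follows. For each prompt $\bx_i$, draw up to $m_0 = \lceil \Cseq \log(n/\delta) \rceil$ responses $\by \sim \piref(\cdot|\bx_i)$, query $\calV(\bx_i, y_T)$ on each, and keep the traces with $\calV = 1$. Then return a model in $\Pi$ that perfectly fits, in the next-token $0$-$1$ sense, one accepted trace per prompt, with a consistency/ERM selection over the accepted traces. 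The total number of generations and verifier calls is at most $n m_0 = \calO(n\Cseq\log(n\Cseq/\delta))$, which gives the stated cost bound directly.

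\textbf{Step 1 (coverage gives a correct trace).} By \cref{eq:pcov}, each draw equals $\by^\star_{\bx_i}$ with probability at least $\Cseq^{-1}$. So the probability that none of the $m_0$ draws hits it is at most $(1-\Cseq^{-1})^{m_0} \le \delta/(2n)$. A union bound over $i$ shows that, with probability $1-\delta/2$, every prompt has $\by^\star_{\bx_i}$ among its accepted traces. Since $\pi^\star \in \Pi$ is deterministic, the ERM problem of fitting some accepted trace per prompt is then realizable with zero loss.

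\textbf{Step 2 (accepted traces need not be $\pi^\star$'s, which is the main obstacle).} An accepted trace $\by$ may have the correct final answer (unique by \Cref{assump:sparse}) and still differ from $\by^\star_{\bx}$ in its intermediate tokens. My approach is to define the learning target through the final answer only. Any deterministic $\pi \in \Pi$ that exactly reproduces some accepted trace on $\bx_i$ has $\pi_T(\bx_i) = \pi^\star_T(\bx_i)$. So the returned $\pihat$ has zero empirical outcome error on all $n$ prompts.

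\textbf{Step 3 (generalization).} It then suffices to bound the generalization error of a consistent hypothesis in the class of outcome maps. The candidate set for $\pihat$ is constrained on each sample by at most $m_0 T$ prefixes, because $\pihat$ is pinned down by its behavior on the accepted traces. I would redo the Natarajan-dimension argument behind \cref{eq:prior} for the induced end-to-end map $\bx \mapsto \pi_T(\bx)$, restricted to roll-outs of $\pi$ itself. The growth function on $n$ prompts is bounded by Natarajan's lemma applied to the $nT$ prefixes generated by each $\pi$, which gives at most $(nT|\Sigma|)^{\calO(d)}$ behaviors. The sample-dependence through $\Cseq$ enters only via a $\log(m_0)$ factor, yielding the $\log(T|\Sigma|\Cseq)$ term. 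A standard realizable bound (double sampling, or the $\log(1/\varepsilon)$-lossy version used in \Cref{theorem:prior}) then gives outcome error at most $\varepsilon$ with probability $1-\delta/2$ once $n \gtrsim (d\log(T|\Sigma|\Cseq)\log(1/\varepsilon) + \log(1/\delta))/\varepsilon$.

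The delicate part is Step 3's growth-function count. The traces $\pihat$ is fit to are data-dependent, since they are chosen among the $m_0$ samples. The count must therefore control the set of all end-to-end behaviors of $\Pi$ on the $n$ prompts, not the fitted traces themselves. I expect this to follow because deterministic $\pi$'s self-generated roll-outs are a function of $\pi$ alone. If that fails, the fallback is to bound the union over all $m_0^n$ trace selections, which costs only an additive $n\log m_0$ inside the exponent. That is too much, so the roll-out-based Natarajan count is what I would pursue first.
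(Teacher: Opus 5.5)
Your Steps 1--2 and the cost count are fine, but Step 3 relies on a false lemma. Once you reduce to ``the generalization error of a consistent hypothesis in the class of outcome maps'' $\bx\mapsto\pi_T(\bx)$, you need a growth bound $(nT|\Sigma|)^{\calO(d)}$ for that end-to-end class. Natarajan's lemma does not give this: it counts labelings of a point set fixed in advance, whereas the $nT$ roll-out prefixes move with $\pi$. Counting step by step gives only $(n|\Sigma|)^{\calO(dT)}$, and this is tight in the worst case; it is the $\widetilde{\Omega}(dT/\varepsilon)$ end-to-end barrier the paper cites.

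Here is a concrete counterexample. Take $\Sigma=\{0,1\}$ and $\pi_\theta(p)=\mathbb{I}(\theta\le\phi(p))$ with $\theta\in[0,1]$, for a fixed $\phi$ under which the roll-out on prompt $\bx_i$ binary-searches $\theta$ for $i$ steps and then repeats its $i$-th comparison until step $T$. The base class has VC dimension $1$. Yet the final token on $\bx_i$ is the $i$-th comparison bit of the search, and these $T$ bits take all $2^T$ patterns as $\theta$ varies, so the outcome class shatters $\{\bx_1,\dots,\bx_T\}$. Hence matching final answers, which is all Step 2 retains, cannot give a $d\log(T|\Sigma|\Cseq)$ rate. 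Your closing decision that the count ``must control the set of all end-to-end behaviors of $\Pi$ \ldots\ not the fitted traces'' points in exactly the wrong direction.

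The missing idea is to keep the trace-membership loss itself. Whether $\pi_{1:T}(\bx)$ lies in a given set of traces is decided, via the first point of deviation, by $\pi$'s next tokens on the prefixes of those traces. That prefix set does not depend on $\pi$.

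The paper makes this set deterministic. $\Vlearn$ also discards traces with $\piref(\by|\bx)<\Cseq^{-1}$, a filter you dropped. By pigeonhole at most $\Cseq$ candidates survive, and $m=\Cseq\log(4n\Cseq/\delta)$ draws recover all of them, not just $\by^\star_\bx$. So with high probability $\widetilde{D}_i=\calY^\star(\bx_i)$, and the ERM objective is the empirical risk of the fixed loss $g_\pi(\bx)=\mathbb{I}(\pi_{1:T}(\bx)\notin\calY^\star(\bx))$. Natarajan's lemma on the at most $nT\Cseq$ prefixes of strings in $\calY(\bx_i)$ bounds the growth function by $(enT|\Sigma|\Cseq)^d$. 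A realizable-case (localized) ERM bound then controls $\bbE_{\bx\sim\rho}[g_{\pihat}(\bx)]$, and $g_\pi(\bx)\ge\mathbb{I}(\pi_T(\bx)\ne\pi^\star_T(\bx))$ converts this into accuracy.

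Your worry about data dependence can also be resolved without the filter. The accepted set $S_i$ is drawn independently given $\bx_i$, so the pairs $(\bx_i,S_i)$ are i.i.d. The class $(\bx,S)\mapsto\mathbb{I}(\pi_{1:T}(\bx)\notin S)$ then has growth $(n m_0 T|\Sigma|)^{\calO(d)}$ on $n$ such examples. This is what your ``$m_0T$ prefixes per sample'' remark was reaching for, and it is the count you should have kept.
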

\begin{proof}
The proof of this result is deferred to \Cref{subsec:seq-coverage-proof}.
\end{proof}

\paragraph{Proof sketch.} $\Vlearn$ (\Cref{alg:Vlearn}) proceeds by drawing a set of $m=\calOtilde (\Cseq)$ responses on each prompt $\bx_i \in \Dprompt{}$ in the dataset, and filtering out the responses which are either, $(a)$ duplicates, $(b)$ have low probability under $\piref (\cdot|\bx_i)$, or $(c)$ terminate in an incorrect answer. For any such prompt $\bx_i$, as long as $m$ is sufficiently large, with high probability the learner can guarantee that the set of surviving responses, $\widetilde{D}_i$, exactly equals the set,
\begin{equation}
    \calY^\star (\bx) = \{ \by \in \calY : \calV (\bx, y_T) = 1 \text{ and } \piref (\by|\bx) \ge \Cseq^{-1} \}.
\end{equation}
That is, $\calY^\star(\bx)$ is the set of all responses that have high probability under $\piref(\cdot|\bx)$ and terminate in a correct answer. Consequently, under the same high probability event, the loss optimized by $\Vlearn$ equals,
\begin{equation} \label{eq:loss-Vlearn}
    \calL (\pi ; \Dprompt{}) = \frac{1}{n} \sum_{i=1}^n \mathbb{I} \big( \pi_{1:T} (\bx_i) \not\in \widetilde{D}_i \big) = \frac{1}{n} \sum_{i=1}^n \mathbb{I} \big( \pi_{1:T} (\bx_i) \not\in \calY^\star (\bx_i) \big)
\end{equation}
Now, the key trick to show a uniform convergence bound for this loss is that if for some model $\pi \in \Pi$, we have that $\pi_{1:T} (\bx_i) \not\in \calY^\star (\bx_i)$, then this event is witnessed by the first deviation point $t$, such that $\pi_{1:t} (\bx_i) = \by_{1:t}$ for some $\by \in \calY^\star (\bx_i)$, but  $\pi_{t+1} (\bx_i) \ne y_{t+1}$. Since there can be at most $T$ such first deviation points, and each such deviation is realized by $\{ \pi (\bx_i, \by_{1:t}) \ne y_{t+1} \}$\footnote{Here, we abuse notation to let $\pi (\bx_i,\by_{1:t})$ denote the deterministic next-token $\pi$ would generate on the input sequence $(\bx_i,\by_{1:t})$} for $\pi \in \Pi$, a counting argument based on the Sauer-Shelah lemma shows there are at most $(e n \Cseq T )^d$ possible behaviors $\Pi$ can express over the set $\big( \mathbb{I} \big( \pi_{1:T} (\bx_i) \not\in \calY^\star (\bx_i) \big) : i \in [n] \big)$. This results in a bound on the growth function of the loss class corresponding to the empirical risk in \cref{eq:loss-Vlearn}, and thereby a uniform concentration bound for the same, which can be used to prove the statement of the theorem.

\begin{remark}[Partial coverage]
In \Cref{subsec:seq-coverage-proof}, we prove a more general version of \Cref{theorem:seq-coverage} when $\piref$ satisfies coverage on all but a small mass of prompts~\citep{song2024importance,chen2025coverage}. Namely, for some $\eta>0$, $\Pr_{\bx \sim \rho} \big( \piref (\by_\bx^\star | \bx) \ge \Cseq^{-1} \big) \ge 1 - \eta$ where $\by_\bx^\star = \pi_{1:T}^\star (\bx)$. The implications of this guarantee for weaker notions such as $L_1$-coverage are discussed in the appendix.
\end{remark}

\begingroup
\setlength{\algomargin}{\dimexpr\algomargin+\algonuminset\relax}
\normalem
\begin{algorithm}[t]
\caption{\texorpdfstring{$\Vlearn (\Dprompt{} \| \varepsilon,\delta,T)$}{V-Learn}}
\label{alg:Vlearn}
\noindent\hspace*{-\algonuminset}
\begin{minipage}{\dimexpr\linewidth+\algonuminset\relax}

\LinesNumbered
\setcounter{AlgoLine}{0}
\DontPrintSemicolon

{\color{gray} \# Sharpening a reference model $\piref$ satisfying sequence-level coverage using verifier feedback.}

\nl \KwIn{Class of models $\Pi$; target accuracy $1-\varepsilon$ and failure prob. $\delta$; reasoning steps $T$;\;

\hspace{2.9em} Reference model, $\piref$ satisfying $\Cseq$ sequence-level coverage (\Cref{def:seq-cov});\;

\hspace{2.9em} Prompt dataset, $\Dprompt{} = \{ \bx_i \}_{i=1}^{n}$ where $\bx_i \sim \rho$;\;

\hspace{2.9em} Outcome verifier oracle $\calV : \calX \times \Sigma \to \{ 0,1 \}$;
}

\BlankLine
\textbf{Initialize:} $m = \Cseq \log (4n \Cseq/\delta)$
\BlankLine

\For{$\bx_i \in \Dprompt{}$}{
Draw $m$ length-$T$ chains-of-thought from $\piref (\cdot |\bx_i)$ and deduplicate. Denote this set as $D_i$.\;

Discard low-probability chains-of-thought which lead to incorrect answers,
\vspace{-0.5em}
\begin{equation*}
    \widetilde{D}_i = \big\{ \by \in D_i : \calV ( \bx_i, y_T) = 1 \text{ and } \piref (\by|\bx_i) \ge \Cseq^{-1} \big\}
\end{equation*}
\vspace{-1.5em}
}

\textbf{Return:} $\pihat \in \underset{\pi \in \Pi}{\operatorname{argmin}} \ \calL (\pi ; \Dprompt{})$ for $\calL (\pi ; \Dprompt{}) = \frac{1}{n} \sum_{i=1}^n \mathbb{I} \big( \pi_{1:T} (\bx_i) \not\in \widetilde{D}_i \big)$\; \label{alg:Vlearn:7}

\end{minipage}
\end{algorithm}
\ULforem
\endgroup

\noindent While the computational cost of $\Vlearn$ scales linearly with the coverage coefficient as $\calOtilde (n \Cseq)$, the sample complexity only scales \textit{logarithmically} with $\Cseq$. The former is a necessary cost: the learner must generate \smash{$\calOtilde(\Cseq)$} traces per prompt from $\piref$ to see even a single correct CoT. 

\subsection{Main Result: Autocurriculum Decouples Coverage from Accuracy}

{Having established that learning is tractable without a curriculum, we now ask: \textit{can autocurriculum further reduce the computational cost?} We show that the answer is yes. $\Vboost$ (\Cref{alg:Vboost}) applies the same boosting-based autocurriculum from \Cref{sec:distill}, using $\Vlearn$ as the base learner. As in the SFT setting, the verifier identifies prompts where the current ensemble errs, and the base learner is retrained on those prompts. The key difference is that instead of querying a teacher for CoTs, each invocation of $\Vlearn$ generates its own traces from $\piref$ via rejection sampling.}

\begin{theorem}[Autocurriculum decouples coverage from accuracy] \label{theorem:Vboost}
Consider any target error $\varepsilon \in (0,1)$ and failure probability $\delta \in (0,1)$. Suppose $\Pi$ is deterministic, $\piref$ satisfies sequence-level coverage with parameter $\Cseq$ (\Cref{def:seq-cov}), answers are unique (\Cref{assump:sparse}) and that the optimal model is realizable (\Cref{assump:teacher}). $\Vboost$ (\Cref{alg:Vboost}) guarantees that as long as the size of the prompt dataset is at least,
\begin{equation*}
    n \ge \frac{d}{\varepsilon} \cdot \polylog(\Cseq,\varepsilon^{-1},\delta^{-1},T)
\end{equation*}
the resulting outcome-level model $\fhat$ satisfies \smash{$\acc_{\rho} \big( \fhat \big) \ge 1-\varepsilon$} with probability at least $1-\delta$. Furthermore, up to $\polylog (\Cseq, \varepsilon^{-1},\delta^{-1},T)$ factors,
\begin{itemize}[leftmargin=*]
    \item The number of length-$T$ CoTs generated from any model ($\piref$, or rollouts of the learner's own models during training), $\ncomp$ is at most $\calOtilde \big( d \Cseq + \frac{d}{\varepsilon} \big)$.
    \item The number of calls to the outcome verifier is also upper bounded by $\calOtilde \big( d \Cseq + \frac{d}{\varepsilon} \big)$.
\end{itemize}
\end{theorem}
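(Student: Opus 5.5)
The plan is to treat $\Vboost$ as $\Genboost$ (\Cref{alg:CoTboost}) instantiated with $\Vlearn$ as the base learner $\Alg_{\RL}$. I would reuse the accuracy argument behind \Cref{theorem:CoTboost} essentially verbatim, and then redo the cost accounting. The accuracy argument in \Cref{theorem:CoTboost} uses only two properties of the base learner. First, on i.i.d.\ samples from the reweighted distribution $\rho_j^\star \propto \rho\, w_j$ (\cref{eq:rhojstar}), it returns a deterministic model with accuracy at least $1-\err_\star$ under $\rho_j^\star$, with probability $1-\delta/k$. Second, it does so from $\nprompt(\err_\star,\delta/k,T)$ prompts.

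Both hypotheses of \Cref{theorem:seq-coverage} transfer to $\rho_j^\star$. Realizability holds because $\acc_{\rho_j^\star}(\pi^\star)=1$, since $\rho_j^\star \ll \rho$. Sequence-level coverage is a pointwise condition in $\bx$, so it is unaffected by reweighting. Hence $\Vlearn$ with constant target error $\err_\star=\tfrac14$ is a valid weak learner that needs only $n_0 = \calO(d\log(T|\Sigma|\Cseq)+\log(k/\delta))$ prompts. The boosting-by-filtering potential argument (the $\beta^{j,k}_r$ recursion of \cref{eq:alpha}) then gives $\acc_\rho(\fhat)\ge 1-\varepsilon$ after $k=\calO(\log(1/\varepsilon))$ phases. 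I would take this step from the proof of \Cref{theorem:CoTboost} as a black box.

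Next I account for prompts and the verifier. In phase $j$, $\subsample$ must collect $n_0$ accepted prompts, and each prompt is accepted with probability $Z_j/\|w_j\|_\infty$, where $Z_j=\bbE_\rho[w_j]$. In the filtering analysis this acceptance rate stays $\gtrsim \varepsilon/\polylog$ for as long as the ensemble error exceeds $\varepsilon$; if it drops below that, the ensemble is already accurate and we can stop early, as in \Cref{theorem:CoTboost}. So each phase consumes $\calOtilde(n_0/\varepsilon)$ raw prompts, which gives the stated bound on $n$. Each raw prompt costs one evaluation of $\rank_j(\bx)$: at most $j\le k$ deterministic rollouts of the existing models, each followed by a verifier call. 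This contributes $\calOtilde(k\cdot k n_0/\varepsilon)=\calOtilde(d/\varepsilon)$ to both $\ncomp$ and the verifier count.

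The expensive $\piref$ generations are spent only inside $\Vlearn$. There are $k n_0$ accepted prompts in total, each costing $m=\calO(\Cseq\log(n_0\Cseq k/\delta))$ samples from $\piref$ and the same number of verifier calls. This totals $\calOtilde(d\,\Cseq)$ and is independent of $\varepsilon$ apart from the $\log(1/\varepsilon)$ in $k$. Adding the two contributions gives $\ncomp \lesssim \calOtilde(d\Cseq + d/\varepsilon)$, and the same bound for verifier calls. A union bound over the $k$ phases, combining the $\delta/k$ failure of each $\Vlearn$ call with the concentration of the number of raw prompts consumed, gives overall probability $1-\delta$.

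I expect the main obstacle to be the lower bound on the acceptance probability $Z_j/\|w_j\|_\infty$, which controls how many raw prompts, and hence how many rollouts of the ensemble, each phase needs. The approach would be to relate $Z_j$ to the potential $\sum_r \Pr_\rho(\rank_j=r)\beta^{j,k}_r$ and to use the early-termination rule. Specifically, if $\subsample$ exhausts $\Dprompt{j}$ before collecting $n_0$ prompts, then with high probability $Z_j/\|w_j\|_\infty\lesssim\varepsilon$. From this one concludes that the plurality vote of the current ensemble already has error at most $\varepsilon$, so we can return it. This is exactly the step inherited from \citet{Freund}, so I expect it to carry over directly once the weak-learner hypothesis is in place.
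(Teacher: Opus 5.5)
Your overall route is the paper's. You run $\Genboost$ with $\Vlearn$ as the weak learner and inherit the accuracy argument of \Cref{theorem:CoTboost} unchanged. You then split the cost into $\sum_j m\,|\Dout{j}|\le k\,n_0\,m=\calOtilde(d\Cseq)$ generations from $\piref$, plus $\calOtilde(d/\varepsilon)$ ensemble rollouts and verifier calls. Your check that \Cref{theorem:seq-coverage} still applies under $\rho_j^\star$ (realizability since $\rho_j^\star\ll\rho$, coverage being pointwise in $\bx$) fills in what the paper only asserts.

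The problem is the step you flag as the main obstacle, which is both unnecessary and, as stated, false. It is unnecessary because $\Genboost$ splits $\Dprompt{}$ into $k$ blocks of size $n/k$ before any training. In phase $j$, $\subsample$ only scans $\Dprompt{j}$, spending $j$ rollouts and $j$ verifier calls per scanned prompt to compute $\rank_j$. So the ensemble cost is at most $\sum_j j\,n/k\le kn$ deterministically, whatever the acceptance rates, and this is $\calOtilde(d/\varepsilon)$ for $n$ at its threshold. The requirement on $n$ comes from \Cref{lemma:CoT-truthful-abort}: phases with $p_j\ge\varepsilon/(4\sqrt k)$ do not abort w.h.p. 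It does not come from a lower bound on acceptance rates.

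It is false because a small acceptance probability says nothing about the current ensemble: the weights $\alpha^{j,k}_r$ are tiny on hopeless ranks as well as on safe ones. For instance, $\alpha^{k-1,k}_r=\bbI(r\le k/2)-\bbI(r+1\le k/2)=\bbI(r=\lfloor k/2\rfloor)$. So $p_{k-1}=0$ whenever no prompt has rank exactly $\lfloor k/2\rfloor$, including when a majority of $\Pi_{k-1}$ errs on every prompt. Relating $p_j$ to the potential does not help either. At $j=k/2$, a prompt of rank $k/4-1$ is currently misclassified by the majority. Yet both $\beta^{j,k}_r$ and $\alpha^{j,k}_r/\alpha^{j,k}_{\max}$ are exponentially small there, because the remaining $k/2$ models are projected to rescue it.

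The paper never stops early. An aborted phase still trains $\pihat^j$ on the undersized $\Dout{j}$. In the decomposition of \Cref{lemma:CoT-breakdown}, that phase contributes at most $(\err_j-\err_\star)\,\bbE_{\bx\sim\rho}[\alpha^{j,k}_{\rank_j(\bx)}]\le p_j\,\alpha^{j,k}_{\max}$. Summing over phases with $p_j<\varepsilon/(4\sqrt k)$ and using $\sum_j\alpha^{j,k}_{\max}\le 2\sqrt k$ (\Cref{lemma:CoT-alphamax-sum}, the ingredient actually taken from Freund) gives at most $\varepsilon/2$. This is why the threshold carries a $1/\sqrt k$.

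Since you black-box the accuracy argument, your conclusion survives. You should still drop the early-return modification, since it changes the algorithm and rests on a false implication, and replace the raw-prompt count with the deterministic $kn$ bound. With that change your proof is the paper's.
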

\begin{proof}
The proof of this result is deferred to \Cref{subsec:Vboost-proof}.
\end{proof}

\noindent Compared to $\Vlearn$, which requires $\calOtilde(d\Cseq/\varepsilon)$ total traces, $\Vboost$ reduces this to $\calOtilde(d\Cseq + d/\varepsilon)$. The coverage-dependent cost $\calOtilde(d\Cseq)$ becomes a one-time burn-in that grows only logarithmically with $1/\varepsilon$; as the target accuracy increases, the learner pays as if coverage were $O(1)$. The key reason is that each of the $O(\log(1/\varepsilon))$ weak learners trains on only $\calOtilde(d)$ prompts, so the per-prompt generation cost of $\calOtilde(\Cseq)$ does not multiply with $1/\varepsilon$. The remaining $\calOtilde(d/\varepsilon)$ cost comes from evaluating the ensemble to route prompts, and is nearly independent of $\Cseq$.

\begin{algorithm}[t]
\caption{\texorpdfstring{$\Vboost (\Dprompt{} \| \Alg_{\RL}, \varepsilon, \delta, T)$}{V-Boost}}\label{alg:Vboost}
\begin{algorithmic}[1]
\State \textbf{Input:} Class of models $\Pi$, reasoning steps $T$, target failure prob. $\delta$,
\Statex \hspace{2.9em} Prompt dataset, $\Dprompt{} = \{ \bx_i \}_{i=1}^{n}$ where $\bx_i \sim \rho$,
\Statex \hspace{2.9em} Outcome verifier, $\calV : \calX \times \Sigma \to \{ 0,1 \}$,
\State \textbf{Return:} $\fhat = \Genboost ( \Dprompt{} \| \Alg_{\RL}, \varepsilon, \delta, T)$, where,
\begin{equation*}
    \Alg_{\RL} (\cdot \| \varepsilon',\delta',T) \gets \Vlearn (\cdot \| \varepsilon', \delta',T)
\end{equation*}
\end{algorithmic}
\end{algorithm}

\section{Discussion}

The central mechanism behind our results is simple: autocurriculum routes expensive supervision toward prompts the model currently gets wrong, avoiding wasted effort on already-solved ones. For SFT, the number of teacher demonstrations drops from $\calOtilde(d/\varepsilon)$ to $\calOtilde(d)$. For RLVR, the cost of rejection sampling from $\piref$ to generate correct chains-of-thought is confined to a burn-in cost, reducing total compute from $\calOtilde(d\Cseq/\varepsilon)$ to $\calOtilde(d\Cseq + d/\varepsilon)$.

\smallskip
\noindent This principle already appears, in heuristic form, in several practical systems. DAPO~\citep{yu2025dapo} filters prompts with all-correct or all-incorrect rollouts, retaining only those with mixed signals; PCL~\citep{gao2025prompt} uses a value model to select prompts at intermediate difficulty. Both can be viewed as heuristic instantiations of the verifier-guided filtering in $\Genboost$ (\Cref{alg:CoTboost}). Similarly, the generate--filter--retrain loop in ReST~\citep{gulcehre2023reinforced} and in the distillation stage of DeepSeek-R1~\citep{guo2025deepseek} is essentially the rejection sampling mechanism of $\Vlearn$ (\Cref{alg:Vlearn}); our theory quantifies when wrapping this with an outer curriculum ($\Vboost$, \Cref{alg:Vboost}) reduces total compute.\loose

\paragraph{Limitations and open directions.}
Our framework makes several simplifying assumptions that suggest natural directions for future work.

\smallskip
\noindent \textit{Self-play and iterated self-improvement.} Our RLVR results assume a fixed reference model $\piref$ with a given coverage coefficient $\Cseq$. In practice, methods like ReST~\citep{gulcehre2023reinforced} and expert iteration use the \textit{current} model as the reference, iteratively improving coverage. Understanding how autocurriculum interacts with improving coverage across rounds, and whether the burn-in cost $\calOtilde(d\Cseq)$ can be further reduced through self-play, is an important open question.

\smallskip
\noindent \textit{Online RL and policy gradient methods.} Our algorithms operate in a batch setting: train a model on a fixed dataset, then re-evaluate. Modern RL pipelines use online policy gradient updates (PPO, GRPO) where the model is updated continuously. Extending the autocurriculum framework to this online setting, where the prompt distribution and the model co-evolve, would bring the theory closer to practice.

\smallskip
\noindent \textit{Beyond coverage.} The RLVR results require $\piref$ to cover the optimal CoT with probability at least $\Cseq^{-1}$ on every prompt. When coverage fails, e.g., for prompts well beyond the model's current capabilities---our filtering approach simply gives up on these prompts. Whether autocurriculum can enable \textit{coverage expansion}, where training on easier problems builds coverage on harder ones~\citep{lee2025self,motwani2025h1}, is a natural question, closely related to explicit length-based curricula~\citep{setlur2025e3}.

\noindent In the forthcoming Part II of this paper, we demonstrate a formal sense in which autocurricula can enable this kind of coverage expansion.

\smallskip
\noindent \textit{Imperfect verification.} Our framework assumes access to a perfect outcome verifier, which is natural for domains with verifiable rewards (math, code), but extending the theory to noisy or learned reward models is an important open problem.\loose

\newpage

\bibliographystyle{plainnat}

\bibliography{refs}

\newpage

\appendix

\startcontents[apx]

\section*{Appendices}
\printcontents[apx]{}{1}{\setcounter{tocdepth}{3}}

\section{Related Work}

\paragraph{Curriculum in deep learning and language model post-training.}
Curriculum learning formalizes the intuition that presenting training examples in a meaningful order, such as from easy to hard, can improve optimization and generalization~\citep{bengio2009Curriculum}. Self-paced learning makes this idea algorithmic by alternating between selecting or reweighting examples based on a difficulty proxy and updating parameters~\citep{kumar2010SelfPaced, meng2015Objective, hacohen2019Power, fan2018Learning}.

\smallskip
\noindent \textit{Synthetic data generation and prompt curation.} Closely related to approaches like expert iteration~\citep{wang2023self,gulcehre2023reinforced,lin2025goedel}, models are trained on synthetically generated data at the cutting edge of their capabilities to optimize training signals~\citep{xu2024wizardlm,motwani2025h1,poesia2024learning}. In RL settings, curriculum strategies based on prompt curation have been shown to enable models to make progress, \textit{even when the initial pass@k performance on the target task is close to $0$}~\citep{lee2025self,prakash2025can}.

\smallskip
\noindent \textit{Autocurricula: Dynamic data selection and compute allocation.}
Autocurriculum methods, where sampling distributions are adaptively altered along the course of training, have gained traction in LLM training as a way to improve sample efficiency and stability by allowing models to guide their own data collection. In particular, self-evolving curricula, where the learner prioritizes examples it finds challenging, have shown substantial improvements in compute efficiency~\citep{yu2025dapo,khatri2025art}. One line of work focuses on adaptively selecting prompts so that models focus on tasks which are hard, but within the horizon of their capabilities~\citep{gao2025prompt,xiong2025reinforce}. Of note are approaches based on length-based curricula~\citep{setlur2025e3} and adaptive compute allocation~\citep{qu2025optimizing}, where models are gradually exposed to longer thinking budgets over the course of training. In other work, adaptive verifiable environments provide an environment-based autocurriculum for scaling RL while maintaining reliable feedback~\citep{zeng2025rlve}. Our results complement these empirical results by providing a theoretical account of how and when adaptive curriculum mechanisms can yield provable reductions in sample and compute complexity.

\paragraph{Theoretical frameworks for reasoning in LLMs.} 
Several recent theory papers provide foundations for analyzing language-model post-training and reasoning. \citet{joshi2025theory} analyze learnability in autoregressive models using chain-of-thought structure. In a related line of work, \citet{foster2024behavior} provide learning-theoretic analyses of imitation learning characterizing the effect of the sequence-length of the problem. In LLM post-training, coverage has emerged as a fundamental quantity characterizing sample and computational complexity. \citet{chen2025coverage} study the coverage properties that emerge from pre-training language models, while \citet{huang2025best} and \citet{foster2025good} show that it tightly quantifies post-training performance. The coverage of the reference model has also been used in the analysis of various training-time and inference-time algorithms for post-training LLMs
\citep{zhu2023principled,song2024importance,rohatgi2025taming}.\loose

\paragraph{Boosting and learning from counterexample queries.} Our algorithmic approaches connect to classical results for boosting, which formalize how adaptive data collection can hasten learning~\citep{Freund,freund1997decision,freund1999short}. Boosting-by-filtering is an aggregation scheme for classification, where models are trained iteratively on a sequence of evolving distributions to correct the mistakes of the previous ones, in a manner such that the aggregated model has low error. Similarly, our work is also closely connected to the problem of learning from counterexample and equivalence oracles~\citep{angluin1987learning,pmlr-v76-angluin17a,NEURIPS2021_ae06fbdc}, and subsequent generalizations, such as robust interactive learning~\citep{pmlr-v23-balcan12c,emamjomeh2017general}.

\section{Proofs for Main Results}

\subsection{Autocurriculum for SFT (Deterministic $\Pi$): Proof of \Cref{theorem:CoTboost}} \label{sec:CoTboost}

In this section, we prove \Cref{theorem:CoTboost} for the setting where the $\Pi$ is composed of deterministic models and the outcome verifier is sparse (\Cref{assump:sparse}). First we plot the evolution of $\alpha^{j,k}_r$ as a function of $r$ across different values of $j$.

\begingroup
\normalem
\renewcommand*{\subfigurelabel}[1]{}
\begin{figure}[h!]
    \centering
    \subfigure{
        \includegraphics[height=1.51in]{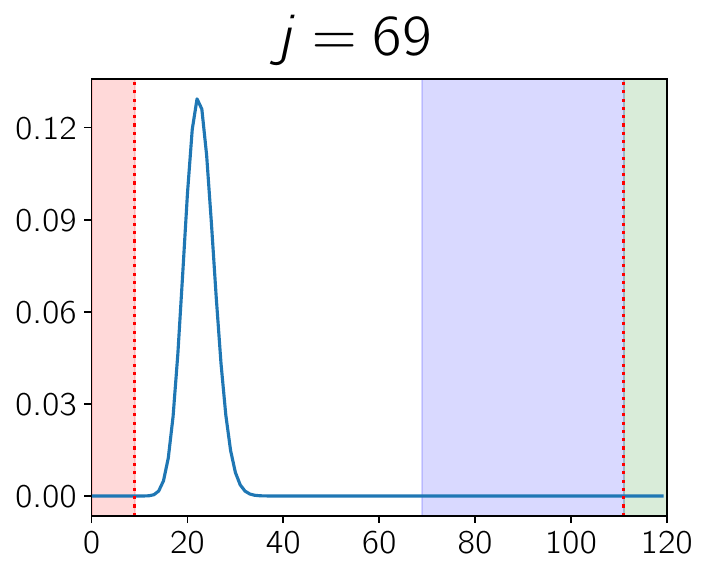}
    }\hfill
    \subfigure{
        \raisebox{-0.25mm}{
        \includegraphics[height=1.52in]{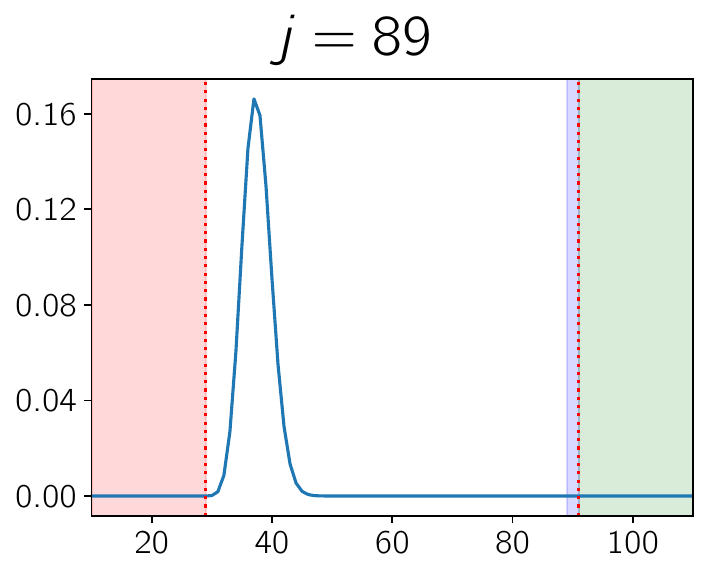}
        }
    }\hfill
    \subfigure{
        \raisebox{-0.35mm}{
        \includegraphics[height=1.51in]{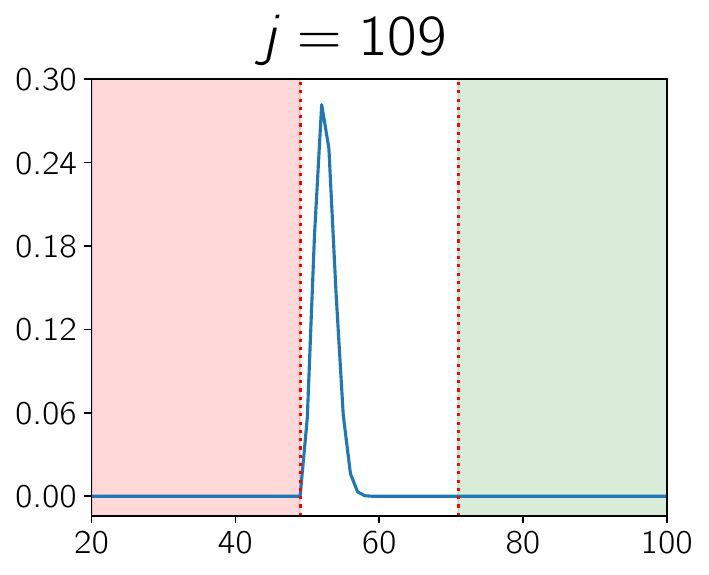}
        }
    }
    \caption{For $k=120$, we plot of $\alpha^{j,k}_r$ as a function of the rank placeholder $r$ across different values of $j$ (number of models in the current ensemble). The {\color{green!65!black} shaded green} region captures the values of the rank $r$ (for some prompt $\bx$) such that even if the remaining $k-j$ models were to all predict the wrong label on $\bx$, the plurality vote remains robustly correct. The {\color{red!80!black} shaded red} region captures the values of the rank $r$ for which even if the remaining $k-j$ models were to all predict the correct label on $\bx$, the plurality vote cannot be guaranteed to be correct. The {\color{blue!80!white} shaded blue} region plots values of the rank which are unattainable (the maximum rank achievable in iteration $j$ is $j$).}
    \label{fig:alpha}
\end{figure}
\ULforem
\endgroup

\paragraph{Proof sketch of \Cref{theorem:CoTboost}.}
The accuracy of the outcome-level model returned by $\Genboost$ can be calculated by considering the probability mass on prompts where more than half the models in the final ensemble \smash{$\Pi_k = \big\{ \pihat^0,\cdots, \pihat^{k-1} \big\}$} predict the correct label. We begin with a decomposition of the test-error incurred by \Cref{alg:CoTboost} into terms which depend on the performance of the models trained on samples from $\rho_j^\star$ in each iteration.

\begin{lemma}[Test-error decomposition] \label{lemma:CoT-breakdown}
Let $\fhat$ denote the outcome-level model returned by \Cref{alg:CoTboost}. Then,
\begin{equation*}
    \Pr_{\bx \sim \rho} \big( \fhat (\bx) \ne \pi^\star_T (\bx) \big) \le \beta^{0,k}_0 + \sum_{j=0}^{k-1}  (\err_j - \err_\star) \cdot \bbE_{\bx \sim \rho} \big[ \alpha^{j,k}_{\rank_j (\bx)} \big]
\end{equation*}
Here, recall $\err_\star = \frac{1}{4}$ and $\err_j = \Pr_{\bx \sim \rho_j^\star} \big( \pihat^j_T (\bx) \ne \pi^\star_T (\bx) \big) = 1 - \acc_{\rho_j^\star} (\pihat^j)$.
\end{lemma}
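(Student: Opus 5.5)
We prove this with the standard boosting-by-filtering potential argument. Define the potential $\Phi_j = \bbE_{\bx \sim \rho}\big[\beta^{j,k}_{\rank_j(\bx)}\big]$ for $0 \le j \le k$. At $j=0$, every prompt has $\rank_0(\bx)=0$, so $\Phi_0 = \beta^{0,k}_0$.

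At $j=k$, $\beta^{k,k}_r = \bbI(r \le k/2)$. Hence $\Phi_k$ is exactly the $\rho$-mass of prompts on which at most half of the $k$ models are correct. By \Cref{assump:sparse} and determinism, each model is either correct, i.e.\ it outputs $\pi^\star_T(\bx)$, or wrong. If strictly more than $k/2$ models are correct, the plurality vote $\maj$ outputs $\pi^\star_T(\bx)$. Therefore $\Pr_{\bx\sim\rho}(\fhat(\bx)\ne\pi^\star_T(\bx)) \le \Phi_k$. (Ties count as errors, which only helps the upper bound.) It then suffices to show the telescoping identity
\begin{equation*}
\Phi_{j+1} - \Phi_j = (\err_j - \err_\star)\,\bbE_{\bx\sim\rho}\big[\alpha^{j,k}_{\rank_j(\bx)}\big],
\end{equation*}
and sum it over $j=0,\dots,k-1$.

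Fix $j$ and a prompt $\bx$ with $r=\rank_j(\bx)$. Since $\rank_{j+1}(\bx) = r + \bbI(\pihat^j_T(\bx)=\pi^\star_T(\bx))$, we have
\begin{equation*}
\beta^{j+1,k}_{\rank_{j+1}(\bx)} = \beta^{j+1,k}_{r+1} + \bbI\big(\pihat^j_T(\bx)\ne\pi^\star_T(\bx)\big)\,\big(\beta^{j+1,k}_r - \beta^{j+1,k}_{r+1}\big).
\end{equation*}
The recursion in \cref{eq:alpha} gives
\begin{equation*}
\beta^{j,k}_r = \beta^{j+1,k}_{r+1} + \err_\star\,\big(\beta^{j+1,k}_r-\beta^{j+1,k}_{r+1}\big).
\end{equation*}
Subtracting, and writing $\alpha^{j,k}_r = \beta^{j+1,k}_r-\beta^{j+1,k}_{r+1}$, yields the pointwise identity
\begin{equation*}
\beta^{j+1,k}_{\rank_{j+1}(\bx)} - \beta^{j,k}_{\rank_j(\bx)} = \alpha^{j,k}_{\rank_j(\bx)}\Big(\bbI\big(\pihat^j_T(\bx)\ne\pi^\star_T(\bx)\big) - \err_\star\Big).
\end{equation*}

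Take expectations over $\rho$. By \cref{eq:rhojstar}, $\rho^\star_j \propto \rho\, w_j$ with $w_j(\bx) = \alpha^{j,k}_{\rank_j(\bx)}$. This gives
\begin{equation*}
\bbE_\rho\big[\alpha^{j,k}_{\rank_j}\,\bbI(\pihat^j \text{ errs})\big] = \err_j\,\bbE_\rho\big[\alpha^{j,k}_{\rank_j}\big],
\end{equation*}
which establishes the increment identity. Summing from $j=0$ to $k-1$ gives $\Phi_k = \beta^{0,k}_0 + \sum_j (\err_j-\err_\star)\,\bbE_\rho[\alpha^{j,k}_{\rank_j(\bx)}]$, and combining with $\Pr(\fhat \ne \pi^\star_T)\le\Phi_k$ proves the lemma.

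The main point needing care is that the weights $\alpha^{j,k}_r$ are nonnegative, so that $\rho^\star_j$ is a well-defined distribution. This is the main obstacle. We handle it by showing inductively, backward from $j=k$, that $\beta^{j,k}_r$ is nonincreasing in $r$. The base case $\bbI(r\le k/2)$ is nonincreasing, and a convex combination of the nonincreasing sequences $r\mapsto\beta^{j+1,k}_r$ and $r\mapsto\beta^{j+1,k}_{r+1}$ is again nonincreasing. If $\bbE_\rho[w_j]=0$, the corresponding term vanishes and the identity holds trivially. Since the identity is exact, no concentration argument is needed at this stage; randomness in the $\pihat^j$ enters only through $\err_j$.
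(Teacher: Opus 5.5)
Your proposal is correct and follows essentially the same argument as the paper: the same potential $\Phi_j=\bbE_{\bx\sim\rho}[\beta^{j,k}_{\rank_j(\bx)}]$, telescoped using the $\beta$-recursion and the reweighting $\rho_j^\star\propto\rho\,w_j$. Your per-prompt identity is a compact form of the paper's rank-level bookkeeping. You also make explicit two points the paper leaves implicit: the inequality $\Pr_{\bx\sim\rho}(\fhat(\bx)\neq\pi^\star_T(\bx))\le\Phi_k$, which the paper loosely calls an equality, and the nonnegativity of $\alpha^{j,k}_r$.
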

\begin{proof}
The formal proof of this result is discussed in \Cref{subsec:CoT-breakdown-proof}.
\end{proof}

\noindent \Cref{lemma:CoT-breakdown} decomposes the test error of $\fhat$ into a sum of several terms, and the proof of this result will follow by defining a potential function $\Phi_j$ which tracks the performance of the plurality of the outcome-level models, $\{ \pi_T : \pi \in \Pi_j \}$ trained until the $(j-1)^{\text{th}}$ phase,
\begin{equation*}
    \Phi_j = \bbE_{\bx \sim \rho} \big[ \beta^{j,k}_{\rank_j (\bx)} \big]
\end{equation*}
Here, the quantity $\beta^{j,k}_r$ is defined in \cref{eq:alpha}, while $\rank_j (\bx)$ is defined in \cref{eq:rank}. The term $\beta_0^{0,k}$ appearing in the statement of the above lemma is the initial value of the potential function, $\Phi_0$, while the remaining terms in the summation are precisely the differences $\Phi_j - \Phi_{j-1}$. Note that $\Phi_k$ itself takes a simple form, $\bbE_{\bx \sim \rho} \big[ \bbI ( \rank_k (\bx) \le k/2 ) \big]$, which is precisely the test error of the plurality of the ensemble induced by the models \smash{$\big\{ \pihat^{0},\cdots,\pihat^{k-1} \big\}$}, since it captures the event that $k/2$ or fewer models get the final label correct. This clarifies the connection between the $\Phi_j$'s and the final prediction error.

\smallskip
\noindent Next we discuss how to simplify \Cref{lemma:CoT-breakdown} further. It is a short calculation to show by writing down an explicit expression for $\Phi_0 = \beta^{0,k}_0$ that this term in fact decays exponentially with $k$ (a fact we prove formally in \Cref{lemma:beta00}), and so, as long as $k = \Omega (\log(1/\varepsilon))$, the first term is bounded by $\varepsilon/2$, which is within the target error guarantee of \Cref{theorem:CoTboost}. The remaining terms in \Cref{lemma:CoT-breakdown} bring out a clean tradeoff to establish: in iterations where generating training examples from $\rho_j^\star$ is easy, $\err_j$ is likely to be smaller than the threshold $\err_\star = \frac{1}{4}$, resulting in, \smash{$(\err_j - \err_\star) \cdot \bbE \big[ \alpha^{j,k}_{\rank_j(\bx)} \big] \le 0$. When sampling from $\rho_j^\star$} is hard, we will need to argue that \smash{$\alpha^{j,k}_{\rank_j(\bx)}$} is also likely to be small. The interpretation of such a result would be that, in the iterations where sampling from $\rho_j^\star$ is hard, there is sufficient leeway in the plurality of the existing models that adding an arbitrary \smash{$\pihat^j$} to the ensemble does not hurt the error significantly.

\medskip
\noindent Toward establishing such a guarantee, we will first define an event which captures whether the dataset collected in iteration $j$, \smash{$\Dout{j}$} is sufficiently large (a proxy for whether training the model in iteration $j$ is feasible),
\begin{equation} \label{eq:CoT-abort}
    \abort[j] = \big\{ |\Dout{j}| < \nprompt ( \err_\star, \delta/k, T ) \big\}
\end{equation}
where $\nprompt (\varepsilon,\delta,T)$ is the sample complexity of the base learning algorithm, $\Alg_{\CoT} (\cdot \| \varepsilon,\delta,T)$, in \Cref{alg:CoTboost}, which is chosen as the learner from \Cref{theorem:prior}. We will also define,
\begin{equation} \label{eq:pj}
    p_j = \bbE_{\bx \sim \rho} \left[ \frac{w_j (\bx)}{\| w_j \|_\infty} \right] = \bbE_{\bx \sim \rho} \left[ \frac{\alpha^{j,k}_{\rank_j(\bx)}}{\alpha^{j,k}_{\max}} \right]
\end{equation}
which is the probability that the prompt $\bx$ accepted into $\Dout{j}$ in \Cref{alg:CoTboost-subsample} in the $j^{\text{th}}$ iteration. Intuitively, a larger value of $p_j$ means that we will have more prompts to train the model \smash{$\pihat^j$} on, which itself translates to a smaller probability of this iteration aborting.

\paragraph{Analyzing the test-error decomposition of \Cref{lemma:CoT-breakdown}.}
Our main argument to analyze the main summation $\sum_{j=0}^{k-1} (\err_j - \err_\star) \cdot \bbE_{\bx \sim \rho} \big[ \alpha^{j,k}_{\rank_j(\bx)} \big]$ in \Cref{lemma:CoT-breakdown} will be to show that,
\begin{enumerate}
    \item If iteration $j$ does not abort, then with high probability, $\err_j \le \err_\star$ (\Cref{lemma:CoT-weak-learner}) and by extension, \smash{$(\err_j - \err_\star) \cdot \bbE_{\bx \sim \rho} \big[ \alpha^{j,k}_{\rank_j(\bx)} \big] \le 0$}. 
    \item If iteration $j$ aborts, then we will argue that with high probability this implies $p_j \le \calO \big( \frac{\varepsilon}{\sqrt{k}} \big)$ must be small (\Cref{lemma:CoT-truthful-abort}). Notice from its definition in \cref{eq:pj} that $p_j$ is proportional to $\bbE_{\bx \sim \rho} \big[ \alpha^{j,k}_{\rank_j(\bx)} \big]$ implying that the latter will also be small in aborted iterations. Namely,
    \begin{align*}
        (\err_j - \err_\star) \cdot \bbE_{\bx \sim \rho} \big[ \alpha^{j,k}_{\rank_j(\bx)} \big] &\le \bbE_{\bx \sim \rho} \big[ \alpha^{j,k}_{\rank_j(\bx)} \big] = p_j \cdot \alpha^{j,k}_{\max}
    \end{align*}
    A short calculation shows that $\sum_{j=0}^{k-1} \alpha^{j,k}_{\max} = \calO(\sqrt{k})$, and by choosing constants carefully, we can argue that the overall contribution of such terms can be bounded by $\varepsilon/2$.
\end{enumerate}
By invoking the above arguments to simplify the summations in \Cref{lemma:CoT-breakdown} and using the bound on $\beta_0^{0,k} \le \varepsilon/2$ (by sufficiently large choice of $k$), we arrive at the inequality,
\begin{align*}
    \Pr_{\bx \sim \rho} \big( \fhat (\bx) \ne \pi^\star_T (\bx) \big) \le \frac{\varepsilon}{2} + \frac{\varepsilon}{2} = \varepsilon,
\end{align*}
completing the proof sketch.

\bigskip
\noindent Having established the high-level approach, we will introduce the lemmas mentioned above. First we will argue that $\beta^{0,k}_0$ decays exponentially fast in $k$.

\begin{lemma} \label{lemma:beta00}
There exists an absolute constant $c > 0$ such that, $\beta^{0,k}_0 \le e^{-ck}$.
\end{lemma}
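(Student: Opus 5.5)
We compute $\beta^{0,k}_0$ in closed form by unrolling the recursion in \cref{eq:alpha}, and then apply a binomial tail (Chernoff/Hoeffding) bound. The key observation is that the recursion $\beta^{j,k}_r = \err_\star \beta^{j+1,k}_r + (1-\err_\star)\beta^{j+1,k}_{r+1}$ is exactly a one-step conditioning identity for a random walk. Suppose that, starting from rank $r$ after $j$ models, each of the remaining $k-j$ models independently increments the rank with probability $1-\err_\star$ and leaves it unchanged with probability $\err_\star$. We will show by backward induction on $j$ (from $j=k$ down to $j=0$) that
\begin{equation*}
    \beta^{j,k}_r = \Pr\big( r + B_{k-j} \le k/2 \big), \qquad B_{k-j} \sim \mathrm{Bin}(k-j, 1-\err_\star).
\end{equation*}
The base case $j=k$ is the definition $\bbI(r \le k/2)$. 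The inductive step follows by conditioning on the outcome of the first of the $k-j$ trials: with probability $\err_\star$ the rank stays $r$, and with probability $1-\err_\star$ it becomes $r+1$, with $k-j-1$ trials remaining. This reproduces the recursion exactly.

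Specializing to $j=0$ and $r=0$ gives $\beta^{0,k}_0 = \Pr(B_k \le k/2)$ with $B_k \sim \mathrm{Bin}(k, 3/4)$, since $\err_\star = \tfrac14$. The mean of $B_k$ is $3k/4$, so the event $B_k \le k/2$ is a downward deviation of $k/4$. Hoeffding's inequality then gives
\begin{equation*}
    \beta^{0,k}_0 \le \exp\big(-2k(1/4)^2\big) = e^{-k/8}.
\end{equation*}
So the lemma holds with $c = 1/8$. Alternatively, a multiplicative Chernoff bound yields some other absolute constant.

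The only step requiring care is getting the indexing of the induction right, in particular that the rank increments on a correct model, matching the $(1-\err_\star)\beta_{r+1}$ term. No real obstacle is expected beyond that, since everything after the closed-form identification is a standard tail bound. The same argument also works for the variant with $\err_\star = 1/400$ used in \Cref{alg:CoTboost-stochastic}, with a different constant $c$.
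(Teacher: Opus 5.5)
Your proof is correct and is essentially the paper's argument. The paper also identifies $\beta^{j,k}_r$ with $\Pr(S_k \le k/2 \mid S_j = r)$ for $k$ independent coins of bias $1-\err_\star = 3/4$, checks this against the recursion by backward induction, and finishes with Hoeffding; you additionally make the constant explicit, $c = 1/8$. One small correction to your closing aside: the $\tbeta$ recursion in the stochastic variant uses $\err_\star = 1/10$ with threshold $4k/5$ (the $1/400$ is the base learner's target error), but the same argument still gives exponential decay, e.g.\ $e^{-k/50}$.
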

\begin{proof}
Consider a set of $k$ biased coins, $Z_1,\cdots,Z_k$, with probability of heads equal to $1-\err_\star = \frac{3}{4}$. We will show that \smash{$\beta^{0,k}_0$} equals the probability that at most $\frac{k}{2}$ of them come up heads. Since the expected number of heads is $\frac{3k}{4}$, by standard concentration arguments (say, Hoeffding's inequality), the statement of the lemma is established.

\smallskip
\noindent Let $S_j = \sum_{i=1}^j \bbI (Z_i = \texttt{H})$. We claim that, $\beta^{j,k}_r = \Pr \left( S_k \le \frac{k}{2} \ \middle| \ S_j = r \right)$. By definition of $\beta^{j,k}_r$ this is true for $j = k$. For smaller $j$, we leave as a short exercise to the reader to show that setting \smash{$\beta^{j,k}_r$} as such, results in the equation, \smash{$\beta^{j,k}_r = \err_\star \cdot \beta^{j+1,k}_r + (1-\err_\star) \cdot \beta^{j+1,k}_{r+1}$} being satisfied inductively.
\end{proof}

\noindent Next, we will show that in any iteration $j$ which does not abort (i.e., the dataset \smash{$\Dout{j}$} is sufficiently large), with high probability the model $\pihat^j$ achieves low test error under $\rho_j^\star$.

\begin{lemma} \label{lemma:CoT-weak-learner}
Suppose $\abort[j]$ is false in iteration $j \ge 0$. Instantiating the base learner $\Alg_\CoT (\cdot \| \varepsilon,\delta,T)$ in \Cref{alg:CoTboost} as the learner in \Cref{theorem:prior}, then with probability at least $1-\frac{\delta}{k}$, the model \smash{$\pihat^j$} trained in iteration $j$ satisfies,
\begin{align} \label{eq:CoT-weak-learner}
    \Pr_{\bx \sim \rho_j^\star} \Big( \pihat^j_T (\bx) \ne \pi^\star_T (\bx) \Big) = \err_j \le \err_\star = \frac{1}{4}
\end{align}
where $\rho_j^\star$ is defined in \cref{eq:rhojstar}.
\end{lemma}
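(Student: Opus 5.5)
We reduce \Cref{lemma:CoT-weak-learner} to the guarantee of \Cref{theorem:prior}, applied to the reweighted prompt distribution $\rho_j^\star$ instead of $\rho$. The approach has three steps: identify the distribution of the accepted prompts, check that realizability still holds under that distribution, and invoke the base learner on the collected dataset.

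\textbf{Step 1: the accepted prompts are i.i.d.\ from $\rho_j^\star$.} Condition on everything computed before phase $j$: the split datasets $\Dprompt{0},\dots,\Dprompt{j-1}$, the coins used there, and hence the ensemble $\Pi_j$. Because the parts of the split are disjoint, $\Dprompt{j}$ is independent of $\Pi_j$. Therefore $w_j$ and $\|w_j\|_\infty$ are fixed functions, and $\Dprompt{j}$ consists of i.i.d.\ draws from $\rho$. $\subsample$ scans these prompts and accepts each $\bx$ independently with probability $w_j(\bx)/\|w_j\|_\infty$. By the standard rejection-sampling argument, each accepted prompt has law $\rho(\cdot)w_j(\cdot)/\bbE_\rho[w_j]=\rho_j^\star$, and the accepted prompts are mutually independent.

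The stopping rule needs care. $\subsample$ stops as soon as $|\Dout{}|$ reaches $N:=\nprompt(\err_\star,\delta/k,T)$. I would argue through the sequence of acceptance indicators. Conditioned on the set of accepted indices, the accepted prompts are i.i.d.\ $\rho_j^\star$. The stopping time depends only on the indicators, not on which prompts were accepted. So on $\neg\abort[j]$, the returned $\Dout{j}$ is exactly the first $N$ accepted prompts, and these are $N$ i.i.d.\ draws from $\rho_j^\star$.

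\textbf{Step 2: realizability transfers.} We have $\rho_j^\star\ll\rho$, and \Cref{assump:teacher} (together with \Cref{assump:sparse}) gives that $\pi^\star$ is correct $\rho$-a.s. Hence $\acc_{\rho_j^\star}(\pi^\star)=1$. The CoT oracle still returns $\pi^\star_{1:T}(\bx)$ on every queried prompt. So the labeled dataset $\{(\bx,\CoT(\bx)):\bx\in\Dout{j}\}$ matches the input model of \Cref{theorem:prior} with $\rho$ replaced by $\rho_j^\star$. This replacement is legitimate because \Cref{theorem:prior} holds for an arbitrary prompt distribution.

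\textbf{Step 3: apply the base learner.} The learner is called with accuracy parameter $\err_\star$ and confidence $\delta/k$, on exactly $N=\nprompt(\err_\star,\delta/k,T)$ samples. \Cref{theorem:prior} then gives, with conditional probability at least $1-\delta/k$,
\begin{equation*}
\acc_{\rho_j^\star}(\pihat^j)\ge 1-\err_\star .
\end{equation*}
Under \Cref{assump:sparse} and \cref{eq:acc-rhoT}, this says exactly that $\Pr_{\bx\sim\rho_j^\star}(\pihat^j_T(\bx)\neq\pi^\star_T(\bx))\le\err_\star$. Integrating over the conditioning removes it, which gives \cref{eq:CoT-weak-learner}.

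The main obstacle is Step 1. The lemma conditions on the event $\neg\abort[j]$, and this event is itself random, so one must make sure this conditioning does not bias the law of the accepted prompts. Working with the acceptance indicators alone, as above, handles this. The remaining subtlety is that $\rho_j^\star$ is well defined only if $\bbE_\rho[w_j]>0$, and this holds whenever the iteration does not abort.
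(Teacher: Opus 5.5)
Your proof is correct and takes the same route as the paper's: on $\neg\abort[j]$ the dataset $\Dout{j}$ has size $\nprompt(\err_\star,\delta/k,T)$, its prompts are drawn from $\rho_j^\star$ by rejection sampling from $\rho$, and \Cref{theorem:prior} then gives $\err_j \le \err_\star$ with probability $1-\delta/k$. Your Step 1 conditions on the history and on the acceptance indicators, so neither the early-stopping rule nor the event $\neg\abort[j]$ biases the law of the accepted prompts; together with the realizability transfer in Step 2, this makes rigorous details that the paper's short proof leaves implicit.
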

\begin{proof}
The proof of this lemma is discussed in \Cref{subsec:CoT-weak-learner-proof}.
\end{proof}

\noindent The next lemma shows that in iterations which are likely to abort (i.e., sampling from $\rho_j^\star$ is hard), $p_j$ must be small. 

\begin{lemma} \label{lemma:CoT-truthful-abort}
Let $\nprompt (\varepsilon,\delta,T)$ denote the sample complexity of the base learner $\Alg_\CoT (\cdot \| \varepsilon,\delta,T)$ in \Cref{alg:CoTboost}. Suppose, for a sufficiently large constant $C > 0$,
\begin{equation*}
    |\Dprompt{}| \ge \frac{C k^{3/2}}{\varepsilon} \times \big( \nprompt ( \err_\star, \delta/k, T ) + \log(k/\delta) \big).
\end{equation*}
Then, if $p_j \ge \frac{\varepsilon}{4 \sqrt{k}}$ in iteration $j$, then $\Pr ( \abort[j] \mid \calH_{j-1} ) \le \frac{\delta}{k}$.
\end{lemma}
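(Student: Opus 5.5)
Conditional on the history $\calH_{j-1}$, the models in $\Pi_j$ are fixed, so the weight $w_j$ and the acceptance probability $p_j$ are deterministic. The phase-$j$ split $\Dprompt{j}$ was never used before phase $j$, so it is independent of $\calH_{j-1}$ and its prompts are i.i.d.\ from $\rho$. In $\subsample$, each prompt $\bx \in \Dprompt{j}$ draws a fresh $\eta \sim \unif([0,1])$ and is accepted iff $\eta \le w_j(\bx)/\|w_j\|_\infty$. Averaging over both $\bx\sim\rho$ and $\eta$, the acceptance indicators $B_1,\dots,B_{n/k}$ are therefore i.i.d.\ Bernoulli$(p_j)$ given $\calH_{j-1}$.

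The loop stops adding prompts once $|\Dout{}|$ reaches $N := \nprompt(\err_\star,\delta/k,T)$. Hence $\abort[j]$ occurs exactly when $\sum_{i=1}^{n/k} B_i < N$. So I only need a lower tail bound for a binomial.

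Write $m = n/k$ for the size of $\Dprompt{j}$. Under the hypothesis,
\begin{equation*}
m \ge \frac{C\sqrt{k}}{\varepsilon}\big(N + \log(k/\delta)\big),
\end{equation*}
and with $p_j \ge \varepsilon/(4\sqrt{k})$ this gives
\begin{equation*}
\mu := m p_j \ge \frac{C}{4}\big(N + \log(k/\delta)\big).
\end{equation*}
For $C \ge 8$ we get $N \le \mu/2$. The multiplicative Chernoff bound then yields
\begin{equation*}
\Pr\Big(\sum_i B_i < N \;\Big|\; \calH_{j-1}\Big) \le \Pr\Big(\sum_i B_i < \mu/2\Big) \le e^{-\mu/8} \le e^{-\frac{C}{32}\log(k/\delta)} \le \frac{\delta}{k}
\end{equation*}
once $C \ge 32$, using $\delta < 1$ and $k\ge 1$. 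This is the claimed bound.

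The computation is routine, so the main care goes into the conditioning. The splits must be disjoint so that $\Dprompt{j}$ is independent of $\calH_{j-1}$. The randomness of $\eta$ must be fresh. And $p_j$ must be read as a function of $\calH_{j-1}$, so that the event $p_j \ge \varepsilon/(4\sqrt{k})$ is $\calH_{j-1}$-measurable and the conditional statement is meaningful. I would also note that the early return does not bias the count: aborting is equivalent to the full-pass count being below $N$, which is how I used it above.

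The slack $k^{3/2}$ in the hypothesis splits as one factor $k$ from dividing $\Dprompt{}$ into $k$ parts, and one factor $\sqrt{k}$ to absorb the $\sqrt{k}$ in the threshold on $p_j$.
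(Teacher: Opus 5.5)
Your proposal is correct and follows essentially the same route as the paper. Conditional on $\calH_{j-1}$, you treat $|\Dout{j}|$ as a sum of $n/k$ i.i.d.\ Bernoulli$(p_j)$ variables, apply the multiplicative Chernoff bound at half the mean, and use the dataset-size hypothesis to ensure $n p_j/k \ge 2\nprompt(\err_\star,\delta/k,T)$ and $n p_j/k \ge 8\log(k/\delta)$. Beyond what the paper writes out, you also spell out the early-return truncation (abort iff the full-pass count is below the threshold) and the independence of the fresh split.
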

\begin{proof}
The proof of this lemma is discussed in \Cref{subsec:CoT-truthful-abort-proof}.
\end{proof}

\subsubsection{Proof of \Cref{theorem:CoTboost}} \label{subsec:CoTboost-proof}

\paragraph{Bound on prediction error.} By the test-error decomposition of $\fhat$ in \Cref{lemma:CoT-breakdown}, the bound on $\beta^{0,k}_0$ in \Cref{lemma:beta00} and the choice of $k = \Omega (\log(1/\varepsilon))$,
\begin{align*}
    \Pr_{\bx \sim \rho} \big( \fhat (\bx) \ne \pi^\star_T (\bx) \big) &\le \frac{\varepsilon}{2} + \sum_{j=0}^{k-1}  (\err_j - \err_\star) \cdot \bbE_{\bx \sim \rho} \big[ \alpha^{j,k}_{\rank_j (\bx)} \big]
\end{align*}
By \Cref{lemma:CoT-truthful-abort}, in any iteration $j$ where $p_j \ge \frac{\varepsilon}{4\sqrt{k}}$, with probability $1-\frac{\delta}{k}$ this iteration does not abort. Conditioned on this event by \Cref{lemma:CoT-weak-learner}, $\err_j \le \err_\star$. Union bounding across all $k$ iterations, this implies that with probability at least $1-\delta$,
\begin{align*}
    \Pr_{\bx \sim \rho} \big( \fhat (\bx) \ne \pi^\star_T (\bx) \big) &\le \frac{\varepsilon}{2} + \sum_{j=0}^{k-1} \bbE_{\bx \sim \rho} \big[ \alpha^{j,k}_{\rank_j (\bx)} \big] \cdot \bbI \Big( p_j < \frac{\varepsilon}{4 \sqrt{k}} \Big) \\
    &\overset{(a)}{\le} \frac{\varepsilon}{2} + \sum_{j=0}^{k-1} \Big( \alpha^{j,k}_{\max} p_j \Big) \cdot \bbI \Big( p_j < \frac{\varepsilon}{4\sqrt{k}} \Big) \\
    &\le \varepsilon
\end{align*}
where $(a)$ is by definition of $p_j$, and the last inequality follows by invoking \Cref{lemma:CoT-alphamax-sum} below.

\paragraph{Bound on size of prompt dataset.} The bound on the size of the prompt dataset required to establish the above guarantee for \Cref{alg:CoTboost} is demonstrated in \Cref{lemma:CoT-truthful-abort} and is,
\begin{equation*}
    n = |\Dprompt{}| \ge \frac{d}{\varepsilon} \cdot \polylog(T, |\Sigma|, \varepsilon^{-1}, \delta^{-1} )
\end{equation*}
when we plug in the sample complexity of the base CoT supervised learner, $\nprompt (\err_\star,\delta/k,T)$, from \Cref{theorem:prior}.

\paragraph{Bound on number of $\CoT(\cdot)$ oracle queries.} The number of calls made to $\CoT$ is precisely \smash{$\sum_{j=0}^{k-1} |\Dout{j}|$}, which by the constraint on \Cref{alg:CoTboost-subsample-6} of \Cref{alg:CoTboost-subsample} gives us the upper bound $k \cdot \nprompt (\err_\star, \delta/k, T)$. Overall, this means that the number of CoT queries made by \Cref{alg:CoTboost} is upper bounded by,
\begin{equation*}
    \nCoT \le d \cdot \polylog(T,|\Sigma|,\varepsilon^{-1},\delta^{-1}).
\end{equation*}
almost surely, where we again plug in the upper bound on $\nprompt (\err_\star, \delta/k, T)$ of the base learner derived in \Cref{theorem:prior}.

\begin{lemma}[Lemma 3.9 in \cite{Freund}] \label{lemma:CoT-alphamax-sum}
For $0 \le j \le k-2$, $\alpha^{j,k}_{\max} \le \frac{1.1}{\sqrt{k-j-1}}$ and $\alpha^{k-1,k}_{\max} \le 1$. As a consequence,
\begin{equation*}
    \sum_{j=0}^{k-1} \alpha^{j,k}_{\max} \le 2 \sqrt{k}.
\end{equation*}
\end{lemma}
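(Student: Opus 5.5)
I will use the probabilistic interpretation from the proof of \Cref{lemma:beta00}. Let $Z_1,Z_2,\dots$ be i.i.d.\ coins with heads probability $q = 1-\err_\star = \frac{3}{4}$, and let $S_m$ count heads among the first $m$ coins. Then $\beta^{j,k}_r = \Pr(S_k \le k/2 \mid S_j = r)$. Writing $m = k-j-1$ for the number of coins remaining after coin $j+1$, this gives
\begin{equation*}
    \alpha^{j,k}_r = \beta^{j+1,k}_r - \beta^{j+1,k}_{r+1} = \Pr\big(S'_m \le k/2 - r\big) - \Pr\big(S'_m \le k/2 - r - 1\big) = \Pr\big(S'_m = \lfloor k/2 \rfloor - r\big),
\end{equation*}
where $S'_m \sim \mathrm{Bin}(m, q)$. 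So $\alpha^{j,k}_r$ is exactly a point mass of a binomial distribution. It is nonnegative, and $\alpha^{j,k}_{\max}$ is bounded by the largest point mass of $\mathrm{Bin}(m,q)$.

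For $j = k-1$ we have $m = 0$, so the point mass is at most $1$. For $j \le k-2$ we have $m \ge 1$, and I need $\max_s \Pr(\mathrm{Bin}(m,q) = s) \le 1.1/\sqrt{m}$. This is the main quantitative step.

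The mode of $\mathrm{Bin}(m,q)$ sits at some $s$ within $1$ of $mq$. I would bound the mode mass with Stirling's inequalities in the form $\sqrt{2\pi n}(n/e)^n \le n! \le e^{1/(12n)}\sqrt{2\pi n}(n/e)^n$. This gives
\begin{equation*}
    \binom{m}{s} q^s (1-q)^{m-s} \le \frac{e^{1/(12m)}}{\sqrt{2\pi\, m\,(s/m)(1-s/m)}} \cdot e^{-m\,\mathrm{KL}(s/m\,\|\,q)} \le \frac{e^{1/(12m)}}{\sqrt{2\pi m\, \tfrac{s}{m}\big(1-\tfrac{s}{m}\big)}} .
\end{equation*}
This requires $1 \le s \le m-1$. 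At the mode, $s/m \approx 3/4$, so $\frac{s}{m}\big(1-\frac{s}{m}\big) \approx 3/16$. The resulting bound is $\approx 0.92/\sqrt{m}$, which is below $1.1/\sqrt{m}$ once $m$ is moderately large.

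The obstacle is the small-$m$ regime and the boundary cases $s \in \{0, m\}$. Stirling is not applicable there, and the ratio $s/m$ can stray from $3/4$. For small $m$ (say $m \le 10$) I would check the claim directly: the mode mass is at most $\max(q^m, \cdot)$, for example $3/4 \le 1.1$ at $m=1$ and $9/16 \le 1.1/\sqrt{2}$ at $m=2$. Alternatively, one can cite Lemma 3.9 of \citet{Freund} verbatim, since the recursion \cref{eq:alpha} with $\err_\star = 1/4$ is the same as Freund's boost-by-majority weights.

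Finally, summing these bounds gives
\begin{equation*}
    \sum_{j=0}^{k-1}\alpha^{j,k}_{\max} \le 1 + 1.1\sum_{m=1}^{k-1} m^{-1/2} \le 1 + 2.2\big(\sqrt{k-1}\big).
\end{equation*}
This is $O(\sqrt{k})$. The constant as stated ($2\sqrt{k}$) requires slightly more care. I would instead use the sharper Stirling constant $0.92$, which gives $1 + 1.84\sqrt{k-1} \le 2\sqrt{k}$ for all $k \ge 1$. This sharper constant needs the small-$m$ table above to confirm it holds for every $m$.
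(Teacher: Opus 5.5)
Your approach matches the paper's. You write $\alpha^{j,k}_r=\Pr\big(\mathrm{Bin}(k-j-1,\tfrac34)=\lfloor k/2\rfloor-r\big)$ and bound the binomial mode with Stirling. The paper only cites Freund for this lemma, but it proves the analogous \Cref{lemma:CoT-alphamax-sum-stochastic} in exactly this way. Your plan does give the per-term bounds: $1$ for $j=k-1$, and $1.1/\sqrt{k-j-1}$ otherwise, since Stirling at the mode handles $m\ge 10$ and the remaining $m$ are a finite check. You are also right that these bounds only yield $1+1.1\sum_{m=1}^{k-1}m^{-1/2}\ge 2.2\sqrt{k}-1.2$, which exceeds $2\sqrt{k}$ once $k>36$. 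So the statement's ``as a consequence'' does not follow from the $1.1$ bound.

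The step that fails as written is your repair, which has two numerical slips.
\begin{itemize}
\item The inequality $1+1.84\sqrt{k-1}\le 2\sqrt{k}$ is false for every $2\le k\le 26$; at $k=2$ it reads $2.84\le 2.83$. The loss comes from bounding $\sum_{m=1}^{k-1}m^{-1/2}$ by $2\sqrt{k-1}$; it should be $2\sqrt{k-1}-1$.
\item By the local CLT, $\sqrt{m}\,\max_s\Pr(\mathrm{Bin}(m,\tfrac34)=s)\to(3\pi/8)^{-1/2}\approx 0.9213$. So no uniform constant below $0.9213$ can work, and $0.92$ fails for all large $m$. The risk lies at large $m$, not small $m$ as you suggest.
\end{itemize}

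A clean fix is to prove the per-term bound with constant $1$. At the mode $s=\lfloor\frac34(m+1)\rfloor$ one has $x=s/m\in[\frac34-\frac{1}{4m},\frac34+\frac{3}{4m}]$. Then $e^{1/(12m)}/\sqrt{2\pi x(1-x)}\le 1$ for all $m\ge 16$. For $m\le 15$ a direct check suffices: the largest value of $\sqrt{m}\max_s\Pr(\mathrm{Bin}(m,\tfrac34)=s)$ there is about $0.907$, at $m=13$. This gives $\sum_{j}\alpha^{j,k}_{\max}\le 1+(2\sqrt{k-1}-1)=2\sqrt{k-1}\le 2\sqrt{k}$.
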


\subsection{Autocurriculum for SFT (General $\Pi$): Proof of \Cref{theorem:CoTboost-stochastic}} \label{app:CoTboost-stochastic}

We now prove \Cref{theorem:CoTboost-stochastic}. In contrast to \Cref{theorem:CoTboost}, we use a slightly different choice of reweighting functions to define the distributions models in the ensemble are trained on, as well as a different notion of rank of a prompt. The proof goes through the intermediate step of showing that the sequence of models \smash{$\Pi_j = \big\{ \pihat^0,\cdots,\pihat^{j-1} \big\}$} trained in \Cref{alg:CoTboost-stochastic} satisfy,
\begin{equation} \label{eq:0423091}
    \Pr_{\bx \sim \rho} \left( \sum\nolimits_{\pi \in \Pi_k} \bbI \Big( \acc_\bx (\pi_T) \ge \frac{4}{5} \Big) > \frac{3k}{4} \right) \ge 1 - \varepsilon
\end{equation}
As will be discussed in more detail later, this guarantee will suffice to argue that the outcome-level mixture model $\fhat = \frac{1}{k} \sum_{j=0}^{k-1} \pihat^j_T$ satisfies $\Pr_{\bx \sim \rho} \big( \acc_\bx \big( \fhat \big) \ge \frac{3}{5} \big) \ge 1 - \varepsilon$. In contrast to the setting where the class of models $\Pi$ is deterministic, we define an approximate notion of rank, estimated via Monte-Carlo rollouts. For a model $\pi : \calX \to \Delta_\Sigma$, define the random variable,
\begin{align} \label{eq:montecarloacc}
    \wacc_\bx (\pi) &= \frac{1}{m} \sum_{i=1}^m \bbI ( \calV ( \bx, Y_i ) = 1 ), \text{ where, } \{ Y_i \}_{i=1}^m \overset{\text{i.i.d.}}{\sim} \pi_T (\cdot|\bx),
\end{align}
and define a randomized notion of rank induced by these Monte-Carlo accuracy estimates,
\begin{equation} \label{eq:apx-rank}
    \wrank_j (\bx) = \sum\nolimits_{\pi \in \Pi_j} \bbI \Big( \wacc_\bx (\pi) \ge \frac{9}{10} \Big).
\end{equation}
We reserve the $\ \widehat{}\ $ accent to denote random variables. $\wrank$ is used to define the sequence of distributions under which \Cref{alg:CoTboost-stochastic} trains models. The threshold on the per-prompt accuracy of a model, set in $\wrank_j$ as $\frac{9}{10}$, is chosen to be higher than our target accuracy of $\frac{4}{5}$ in \cref{eq:0423091} to account for the fact that the $\wrank$ is a random estimate.

\smallskip
\noindent Next we define a modified version of the $\alpha (\cdot)$'s weights considered in \Cref{alg:CoTboost} to define the sequence of weights to be constructed. For $0 \le r \le j \le k$, define,
\begin{align} \label{eq:talpha}
    &\talpha^{j,k}_r = \tbeta^{j+1,k}_r - \tbeta^{j+1,k}_{r+1}, \text{ where, } \tbeta^{j,k}_r = \begin{cases}
        \bbI (r \le 4k/5) &\text{if } j = k \\
        \err_\star \cdot \tbeta^{j+1,k}_r + \left( 1 - \err_\star \right) \cdot \tbeta^{j+1,k}_{r+1} \quad &\text{if } j < k \\
    \end{cases},\\
    &\text{where, } \err_\star = \frac{1}{10}. \nonumber
\end{align}
Note the subtle difference compared to \cref{eq:alpha} where the threshold on the rank $r$ is changed from $\frac{k}{2}$ to $\frac{4k}{5}$ in \smash{$\beta^{j,k}_r$}. This is to ensure that we can achieve the guarantee in \Cref{theorem:CoTboost-stochastic} where the targeted threshold on the per-prompt accuracy of $\frac{3}{5}$ is a constant strictly larger than $\frac{1}{2}$. We correspondingly define the target distributions under which we carry out learning to be, $\big\{ \trho_j^\star \big\}_{j \ge 0}$,
\begin{align} \label{eq:trhojstar}
    \trho_j^\star (\bx) &\propto \rho (\bx) \cdot \bbE [ \ww_j (\bx) \mid \bx, \Pi_j ], \text{ where } \ww_j (\bx) = \talpha^{j,k}_{\wrank_j(\bx)}
\end{align}
Here, the expectation is over the randomness of $\wrank$. Finally since $\wrank$ is random, we will let the notation $\Pr_\rho (\cdot)$ (resp. $\bbE_\rho [\cdot]$) to denote probabilities (resp. expectations) marginalizing over $\bx \sim \rho$ and all other sources of randomness.

\smallskip
\noindent Similarly to \Cref{lemma:CoT-breakdown}, we begin with a decomposition of the test-error incurred by \Cref{alg:CoTboost} into terms which depend on the performance of the models trained on samples from $\trho_j^\star$ in each iteration.

\begin{lemma}[Loss decomposition] \label{lemma:CoT-breakdown-stochastic}
Let $\pihat^0,\cdots,\pihat^{k-1}$ denote the sequence of models trained within \Cref{alg:CoTboost-stochastic}. Then,
\begin{equation} \label{eq:CoT-breakdown-stochastic}
    \Pr_{\rho} \Big( \wrank_k (\bx) \le \frac{4k}{5} \ \Big| \ \Pi_k \Big) = \tbeta^{0,k}_0 + \sum_{j=0}^{k-1}  (\err_j - \err_\star) \cdot \bbE_{\rho} \Big[ \talpha^{j,k}_{\wrank_j (\bx)} \ \Big| \ \Pi_j \Big]
\end{equation}
where, $\err_\star = \frac{1}{10}$ and $\err_j \triangleq \Pr_{\trho_j^\star} \left( \wacc_\bx (\pihat^j) \le \frac{9}{10} \ \middle| \ \pihat^j \right)$.
\end{lemma}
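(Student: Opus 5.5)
We will prove \Cref{lemma:CoT-breakdown-stochastic} by a telescoping potential argument, mirroring the one sketched for \Cref{lemma:CoT-breakdown}. Define the potential
\begin{equation*}
    \tPhi_j = \bbE_{\rho} \Big[ \tbeta^{j,k}_{\wrank_j (\bx)} \ \Big| \ \Pi_j \Big], \qquad 0 \le j \le k.
\end{equation*}
The expectation is over $\bx \sim \rho$ and the Monte-Carlo randomness in $\wrank_j$. For $j = 0$ we have $\wrank_0 \equiv 0$, so $\tPhi_0 = \tbeta^{0,k}_0$. For $j = k$ the terminal condition $\tbeta^{k,k}_r = \bbI(r \le 4k/5)$ gives $\tPhi_k = \Pr_\rho\big(\wrank_k(\bx) \le \tfrac{4k}{5} \mid \Pi_k\big)$, which is exactly the left-hand side of \cref{eq:CoT-breakdown-stochastic}. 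It therefore suffices to show, for each $j$,
\begin{equation*}
    \tPhi_{j+1} - \tPhi_j = (\err_j - \err_\star) \cdot \bbE_\rho\Big[\talpha^{j,k}_{\wrank_j(\bx)} \ \Big| \ \Pi_j\Big],
\end{equation*}
and then sum over $j$.

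For the one-step identity, write $\wrank_{j+1}(\bx) = \wrank_j(\bx) + \widehat{Z}_j(\bx)$, where $\widehat{Z}_j(\bx) = \bbI\big(\wacc_\bx(\pihat^j) \ge \tfrac{9}{10}\big)$. We couple the Monte-Carlo estimates so that the estimates for the first $j$ models are the same in $\wrank_j$ and $\wrank_{j+1}$. This is legitimate because $\tPhi_{j+1}$ only depends on the marginal law of $\wrank_{j+1}$, and per model the estimates are independent. Pointwise in $r = \wrank_j(\bx)$ we have
\begin{equation*}
    \tbeta^{j+1,k}_{r + \widehat Z} = \tbeta^{j+1,k}_{r+1} + (1-\widehat Z)\,\talpha^{j,k}_r .
\end{equation*}
Using the recursion $\tbeta^{j,k}_r = \err_\star \tbeta^{j+1,k}_r + (1-\err_\star)\tbeta^{j+1,k}_{r+1}$, we get $\tbeta^{j,k}_r = \tbeta^{j+1,k}_{r+1} + \err_\star \talpha^{j,k}_r$. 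Subtracting the two expressions gives
\begin{equation*}
    \tbeta^{j+1,k}_{\wrank_{j+1}} - \tbeta^{j,k}_{\wrank_j} = \big( (1 - \widehat Z_j) - \err_\star \big)\, \talpha^{j,k}_{\wrank_j}.
\end{equation*}
Now take expectations over $\bx \sim \rho$ and the estimator randomness, conditional on $\Pi_{j+1}$. The variable $\widehat Z_j$ is independent of $\wrank_j$ given $\bx$, since the rollouts for different models are independent. So $\bbE[(1-\widehat Z_j)\,\talpha^{j,k}_{\wrank_j}]$ factorizes within each $\bx$ as $\bbE_\rho\big[\Pr(1-\widehat Z_j = 1\mid \bx,\pihat^j)\cdot \bbE[\talpha^{j,k}_{\wrank_j}\mid \bx,\Pi_j]\big]$. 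By the definition of $\trho_j^\star$ in \cref{eq:trhojstar}, this equals $\err_j \cdot \bbE_\rho[\talpha^{j,k}_{\wrank_j}\mid\Pi_j]$, once we identify $\err_j$ as the $\trho_j^\star$-probability that $\pihat^j$ fails the threshold. Summing over $j$ then yields the claimed equality.

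The main obstacle is the bookkeeping around the randomness of $\wrank$, and we need care on two points. First, the factorization needs the Monte-Carlo draws for $\pihat^j$ to be fresh and independent of those used for the earlier models. Second, the event complementary to $\{\wacc \ge 9/10\}$ is $\{\wacc < 9/10\}$, whereas $\err_j$ is stated with $\le 9/10$. We will match the two conventions by reading the strict/non-strict inequalities consistently; the boundary is a measure-zero subtlety only if adjusted, so we fix the definition to make it exact. We will also need to note that $\talpha$ is independent of the estimator for $\pihat^j$, so that conditioning on $\Pi_j$ rather than $\Pi_{j+1}$ in the final expression is valid.
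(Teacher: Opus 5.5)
Your proposal is correct and follows essentially the paper's route: the same potential $\tPhi_j$, telescoped via the recursion for $\tbeta^{j,k}_r$, with the cross term evaluated through the definition of $\trho_j^\star$. The factorization step uses the conditional independence (given $\bx$) of the fresh Monte-Carlo estimate for $\pihat^j$ from $\wrank_j$, which the paper uses implicitly in its step (a) and you make explicit. Your pointwise identity $\tbeta^{j+1,k}_{\wrank_{j+1}} - \tbeta^{j,k}_{\wrank_j} = \big((1-\widehat Z_j) - \err_\star\big)\talpha^{j,k}_{\wrank_j}$ is a compact form of the paper's sum-over-$r$ bookkeeping.

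One correction: the boundary is not a measure-zero issue. Since $\wacc_\bx$ is an average of $m$ Bernoulli variables, the event $\{\wacc_\bx(\pihat^j) = 9/10\}$ generally has positive probability. So the exact identity does require reading $\err_j$ with the strict inequality ``$<$''. That is the convention of \Cref{lemma:CoT-weak-learner-stochastic}, and the one the paper's proof silently adopts in its step (b).
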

\begin{proof}
The formal proof of this result is discussed in \Cref{subsec:CoT-breakdown-stochastic-proof}.
\end{proof}

\noindent At a high level, the argument will rely on analyzing the change in a potential function, defined as,
\begin{equation*}
    \tPhi_j = \bbE_{\rho} \Big[ \tbeta^{j,k}_{\wrank_j (\bx)} \ \Big| \ \Pi_j \Big]
\end{equation*}
And noting that, $\tPhi_k$ equals the LHS of \cref{eq:CoT-breakdown-stochastic}, while $\tPhi_0 = \tbeta^{0,k}_0$ equals the first term on the RHS. As before, in the next lemma we will bound the initial value of the potential.

\begin{lemma} \label{lemma:tbeta00}
There exists an absolute constant $c > 0$ such that, \smash{$\tbeta^{0,k}_0 \le e^{-ck}$}. Consequently, as long as $k \ge C \log(1/\varepsilon)$ for a sufficiently large constant $C > 0$, \smash{$\tbeta^{0,k}_0 \le \frac{\varepsilon}{4}$}.
\end{lemma}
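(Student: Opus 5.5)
We follow the proof of \Cref{lemma:beta00}, giving $\tbeta^{j,k}_r$ a probabilistic interpretation and then applying a Hoeffding bound. Let $Z_1,\cdots,Z_k$ be i.i.d.\ biased coins with probability of heads $1-\err_\star = \frac{9}{10}$, and let $S_j = \sum_{i=1}^j \bbI(Z_i = \texttt{H})$ with $S_0 = 0$. The plan is to show that for every $0 \le r \le j \le k$,
\begin{equation*}
    \tbeta^{j,k}_r = \Pr\Big( S_k \le \tfrac{4k}{5} \ \Big| \ S_j = r \Big),
\end{equation*}
by backward induction on $j$. For the base case $j=k$, both sides equal $\bbI(r \le 4k/5)$. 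For the inductive step $j<k$, condition on the outcome of $Z_{j+1}$. With probability $\err_\star$ it is tails, so $S_{j+1}=r$. With probability $1-\err_\star$ it is heads, so $S_{j+1}=r+1$. Since $(S_k - S_{j+1})$ is independent of $S_{j+1}$, the Markov property gives
\begin{equation*}
    \Pr\big(S_k \le \tfrac{4k}{5} \mid S_j = r\big) = \err_\star \Pr\big(S_k \le \tfrac{4k}{5} \mid S_{j+1}=r\big) + (1-\err_\star)\Pr\big(S_k \le \tfrac{4k}{5} \mid S_{j+1}=r+1\big).
\end{equation*}
This is exactly the recursion in \cref{eq:talpha}. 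Specializing to $j=0$, $r=0$ gives $\tbeta^{0,k}_0 = \Pr(S_k \le 4k/5)$.

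Next, $S_k$ is a sum of $k$ independent Bernoulli$(9/10)$ variables, so $\bbE[S_k] = \frac{9k}{10}$. The event $S_k \le \frac{4k}{5}$ is a downward deviation of $\frac{k}{10}$ from the mean. Hoeffding's inequality then gives
\begin{equation*}
    \tbeta^{0,k}_0 \le \exp\big(-2k (1/10)^2\big) = e^{-k/50},
\end{equation*}
so the first claim holds with $c = 1/50$. For the consequence, $e^{-ck} \le \varepsilon/4$ whenever $k \ge \frac{1}{c}\log(4/\varepsilon)$. This is implied by $k \ge C\log(1/\varepsilon)$ for a large enough absolute constant $C$ when $\varepsilon$ is bounded away from $1$ (say $\varepsilon \le 1/2$). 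If $\varepsilon$ is close to $1$, we use the same statement with $\log(4/\varepsilon)$ in place of $\log(1/\varepsilon)$, since $k$ in the algorithm is a ceiling and can absorb such constants.

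Nothing here is a real obstacle. The one point that needs care is the inductive step: the threshold event $\{S_k \le 4k/5\}$ must depend only on the final count, so that the conditional probability given $S_j=r$ is well defined and satisfies the two-term recursion. This is immediate for sums of i.i.d.\ coins. Note also that the bias $1-\err_\star = 9/10$ strictly exceeds the threshold fraction $4/5$, and this gap is what makes the Hoeffding exponent linear in $k$.
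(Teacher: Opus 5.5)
Your proposal is correct and matches the paper's proof: interpret $\tbeta^{j,k}_r$ as $\Pr(S_k \le 4k/5 \mid S_j = r)$ for i.i.d.\ coins with heads probability $9/10$, then apply Hoeffding to get $\tbeta^{0,k}_0 \le e^{-k/50}$, and you also write out the backward induction that the paper leaves as an exercise. Your caveat that the second claim needs $\varepsilon$ bounded away from $1$ (or $\log(4/\varepsilon)$ in place of $\log(1/\varepsilon)$) is a real refinement the paper glosses over: for instance, $\tbeta^{0,3}_0 = 1 - 0.9^3 = 0.271 > \varepsilon/4$ for every $\varepsilon < 1$.
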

\begin{proof}
Consider a set of $k$ biased coins, $Z_1,\cdots,Z_k$, with probability of heads equal to $1-\err_\star = \frac{9}{10}$. By the same argument as in the proof of \Cref{lemma:beta00}, \smash{$\tbeta^{0,k}_0$} equals the probability that at most $\frac{4k}{5}$ of the coins come up heads. Since the expected number of heads is $\frac{9k}{10}$, by Hoeffding's inequality, the statement of the lemma is established.
\end{proof}

\noindent Next, we will define an event which captures whether the dataset collected in iteration $j$, \smash{$\Dout{j}$} is sufficiently large (a proxy for whether training the model in iteration $j$ is feasible),
\begin{equation} \label{eq:CoT-abort-stochastic}
    \tabort[j] = \big\{ |\Dout{j}| < \nprompt ( 1/400, \delta/k, T ) \big\}
\end{equation}
where $\nprompt (\varepsilon',\delta',T)$ is the sample complexity of the base learning algorithm, $\Alg (\cdot \| \varepsilon',\delta',\CoT)$ used in \Cref{alg:CoTboost-stochastic}. Furthermore, we will also define,
\begin{equation} \label{eq:tpj}
    \tp_j = \frac{\bbE_{\rho} \Big[ \talpha^{j,k}_{\wrank_j(\bx)} \ \Big| \ \Pi_j \Big]}{\talpha^{j,k}_{\max}}
\end{equation}
which will turn out to equal the probability that a prompt $\bx \sim \rho$ is accepted into the dataset $\Dout{j}$ in \Cref{alg:CoTboost-subsample} in the $j^{\text{th}}$ iteration. Intuitively, a larger value of $\tp_j$ means that we will have more prompts to train the model \smash{$\pihat^j$} on, which itself translates to a smaller probability of this iteration aborting.

\begin{lemma} \label{lemma:CoT-weak-learner-stochastic}
Suppose $\tabort[j]$ is false in iteration $j \ge 0$. Then, instantiating the base learning algorithm $\Alg (\cdot \| \varepsilon,\delta,\CoT)$ in \Cref{alg:CoTboost-stochastic} as the learner in \Cref{theorem:prior}, then the model \smash{$\pihat^j$} trained in iteration $j$ satisfies with probability at least $1-\frac{\delta}{k}$,
\begin{align}
    \Pr_{\trho^\star_j} \left( \wacc_\bx \big( \pihat^j \big) < \frac{9}{10} \right) \triangleq \err_j \le \frac{1}{10}.
\end{align}
\end{lemma}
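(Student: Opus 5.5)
The plan is to reduce the claim to the log-loss guarantee of \Cref{theorem:prior} applied under the distribution $\trho_j^\star$, and then convert that guarantee into a bound on the Monte-Carlo event $\{\wacc_\bx(\pihat^j) < \frac{9}{10}\}$.

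\textbf{Step 1: the training set is an i.i.d.\ sample from $\trho_j^\star$.} Condition on $\Pi_j$ (the history $\calH_{j-1}$). In $\widetilde{\subsample}$, each fresh prompt $\bx \sim \rho$ is accepted with probability $\bbE[\ww_j(\bx)\mid \bx,\Pi_j]/\talpha^{j,k}_{\max}$, where the expectation is over the internal Monte-Carlo rollouts defining $\wrank_j$ and over $\eta$. Accepted prompts are therefore i.i.d.\ draws from $\trho_j^\star$ in \cref{eq:trhojstar}.

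The one subtlety is the stopping rule on \Cref{alg:CoTboost-subsample-6}. It depends only on the count $|\Dout{}|$, so conditioned on reaching $\nprompt(1/400,\delta/k,T)$ accepted points, the accepted prompts are still i.i.d.\ from $\trho_j^\star$. Each of them is then labeled by a call to $\CoT(\cdot)$, giving pairs $(\bx,\by)$ with $\by \sim \pi^\star_{1:T}(\cdot\mid\bx)$. Since $\tabort[j]$ is false, the dataset has at least $\nprompt(1/400,\delta/k,T)$ points.

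\textbf{Step 2: apply the base learner.} Realizability (\Cref{assump:teacher}) holds under $\trho_j^\star$, because $\trho_j^\star \ll \rho$ and so $\acc_{\trho_j^\star}(\pi^\star)=1$. Invoking \Cref{theorem:prior} with accuracy parameter $1/400$ gives, with probability $1-\delta/k$,
\[
\bbE_{\bx\sim\trho_j^\star}\big[1-\acc_\bx(\pihat^j)\big] = 1-\acc_{\trho_j^\star}(\pihat^j) \le \tfrac{1}{400}.
\]

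\textbf{Step 3: convert to the Monte-Carlo event.} This is the main obstacle, since $\err_j$ is defined through the random estimate $\wacc$ rather than the true accuracy. I would decompose on the true accuracy:
\[
\err_j \le \Pr_{\trho_j^\star}\!\big(\acc_\bx(\pihat^j) < \tfrac{19}{20}\big) + \Pr_{\trho_j^\star}\!\big(\wacc_\bx(\pihat^j) < \tfrac{9}{10},\ \acc_\bx(\pihat^j)\ge \tfrac{19}{20}\big).
\]
For the first term, Markov's inequality applied to $1-\acc_\bx$ gives a bound of $20\cdot\frac{1}{400}=\frac{1}{20}$.

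For the second term, the fresh rollouts in $\wacc$ are independent of the training data given $\bx$. Hoeffding's inequality with $m$ samples bounds the probability of a deviation of $\frac{1}{20}$ by $e^{-m/200}$. Choosing $m \ge 200\log 20$, a constant (or logarithmic, if later union bounds require it), makes this at most $\frac{1}{20}$. Summing the two terms gives $\err_j \le \frac{1}{10}$.

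If the $\wacc$ used in $\err_j$ is meant to be the same random draw that enters later ranks, I would simply treat it as a fresh independent draw, as the definition of $\trho_j^\star$ does by taking expectations over it. The remaining care is to verify that the threshold $\frac{19}{20}$ leaves enough slack for both the Markov and the Hoeffding terms given the constants $\frac{1}{400}$ and $\frac{9}{10}$, which the arithmetic above confirms.
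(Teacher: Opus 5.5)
Your proposal is correct and follows the paper's proof. Both apply the base learner under $\trho_j^\star$ to get $\acc_{\trho_j^\star}(\pihat^j) \ge 399/400$, use Markov's inequality to get $\acc_\bx(\pihat^j) \ge 19/20$ outside a $1/20$ fraction of $\trho_j^\star$, and then use concentration with a constant number of rollouts $m$ to pass to $\wacc_\bx$.

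The only cosmetic difference is how the last two steps combine: the paper multiplies them as $(19/20)^2 \ge 9/10$, while you add the two $1/20$ failure probabilities. Your remarks on the count-based stopping rule and on realizability under $\trho_j^\star$ make explicit details the paper leaves implicit.
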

\begin{proof}
The proof of this lemma is discussed in \Cref{subsec:CoT-weak-learner-stochastic-proof}.
\end{proof}

\noindent Finally, similar to \Cref{lemma:CoT-truthful-abort}, we will show that any iteration $j$ is unlikely to abort as long as $\tp_j$ is sufficiently large. This will use the fact that the marginal probability that $\bx \sim \rho$ is accepted into \smash{$\Dout{j}$} via rejection sampling equals $\tp_j$.

\begin{lemma} \label{lemma:CoT-truthful-abort-stochastic}
Suppose, for a sufficiently large constant $C > 0$,
\begin{equation*}
    |\Dprompt{}| \ge \frac{C k^{3/2}}{\varepsilon} \times \big( \nprompt ( 1/400, \delta/k, T ) + \log(k/\delta) \big).
\end{equation*}
Then, if $\tp_j \ge \frac{\varepsilon}{16 \sqrt{k}}$ in iteration $j$, then $\Pr ( \tabort[j] \mid \calH_{j-1} ) \le \frac{\delta}{k}$.
\end{lemma}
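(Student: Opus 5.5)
The approach mirrors \Cref{lemma:CoT-truthful-abort}: show that once we condition on $\calH_{j-1}$, every prompt of $\Dprompt{j}$ enters $\Dout{j}$ independently with probability exactly $\tp_j$. Then $|\Dout{j}|$ dominates a binomial, and a Chernoff bound closes the argument.

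\emph{Step 1: the acceptance probability is $\tp_j$.} Condition on $\calH_{j-1}$, which fixes $\Pi_j$. The chunk $\Dprompt{j}$ is fresh data, independent of $\calH_{j-1}$, since it is disjoint from the chunks used in earlier phases. Take a prompt $\bx \sim \rho$ from this chunk. In $\widetilde{\subsample}$, its Monte-Carlo estimates $\wacc_\bx(\pi)$, $\pi \in \Pi_j$, are drawn from new rollouts, which gives $\wrank_j(\bx)$. An independent $\eta \sim \unif([0,1])$ is then drawn, and the prompt is accepted if $\eta \le \ww_j(\bx)/\talpha^{j,k}_{\max}$. Integrating out $\eta$, then the rollouts, then $\bx$, the acceptance probability is
\[
\bbE_\rho\big[\talpha^{j,k}_{\wrank_j(\bx)} \mid \Pi_j\big]/\talpha^{j,k}_{\max} = \tp_j ,
\]
which matches \cref{eq:tpj}. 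Fresh rollouts are drawn for each prompt, so the acceptance indicators of different prompts are i.i.d.\ Bernoulli$(\tp_j)$ given $\calH_{j-1}$. One point needs care: $\talpha^{j,k}_{\max}$ must be the normalizer actually used on \Cref{alg:CoTboost-subsample-7}, so that the ratio lies in $[0,1]$. Since $\talpha^{j,k}_r = \tbeta^{j+1,k}_r-\tbeta^{j+1,k}_{r+1}\ge 0$ by monotonicity of $\tbeta$ in $r$, I will state this explicitly.

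\emph{Step 2: early stopping.} $\widetilde{\subsample}$ returns as soon as $|\Dout{}|$ reaches $N := \nprompt(1/400,\delta/k,T)$. Hence $\tabort[j]$ happens exactly when the total number of accepted prompts over all of $\Dprompt{j}$ is below $N$. This total is $\mathrm{Bin}(n/k, \tp_j)$. Because $\tabort[j]$ is monotone in this count, stopping early does not affect the event.

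\emph{Step 3: concentration.} Write $n_j = |\Dprompt{}|/k$. The hypothesis on $|\Dprompt{}|$ gives
\[
n_j \ge \frac{C\sqrt{k}}{\varepsilon}\big(N+\log(k/\delta)\big).
\]
When $\tp_j \ge \varepsilon/(16\sqrt{k})$, the mean satisfies $\mu = n_j\tp_j \ge \frac{C}{16}(N+\log(k/\delta))$. For $C \ge 64$ this gives $N \le \mu/2$ and $\mu \ge 4\log(k/\delta)$ once $C$ is large enough. The multiplicative Chernoff bound then gives
\[
\Pr(\mathrm{Bin}<\mu/2)\le e^{-\mu/8}\le \delta/k
\]
after enlarging $C$ further. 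This proves the lemma.

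\emph{Main obstacle.} The conditional independence in Step 1 is the delicate step. One must verify that the randomness in $\wrank_j$ and the fresh chunk $\Dprompt{j}$ are independent of $\calH_{j-1}$. One must also use the same definition of $\tp_j$ as in \cref{eq:tpj}, with the expectation taken over both $\bx$ and the Monte-Carlo rollouts. Steps 2 and 3 are routine.
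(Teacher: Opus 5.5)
Your proposal is correct and takes the same route as the paper: conditioned on $\calH_{j-1}$, the acceptance indicators for the prompts in the fresh chunk are i.i.d.\ Bernoulli$(\tp_j)$, and a multiplicative Chernoff bound combined with the dataset-size hypothesis gives $\Pr(\tabort[j]\mid\calH_{j-1})\le\delta/k$. Your treatment is slightly more careful than the written proof in two places: you handle the early-stopping rule explicitly, and you enforce $\nprompt(1/400,\delta/k,T)\le n'\tp_j/2$, whereas the paper's stochastic version states only $n'\tp_j\ge\nprompt$ and drops the factor $2$ it used in the deterministic case.
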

\begin{proof}
The proof of this lemma is discussed in \Cref{subsec:CoT-truthful-abort-stochastic-proof}.
\end{proof}

\noindent Finally, we introduce a lemma showing how to translate between bounds on the accuracy, and the Monte Carlo estimate of the accuracy (\cref{eq:montecarloacc}) with some slack.

\begin{lemma} \label{lemma:acc-to-apx-acc}
For the ensemble of models $\Pi_k$, we have,
\begin{equation*}
    \Pr_{\bx \sim \rho} \left( \sum\nolimits_{\pi \in \Pi_k} \bbI \Big( \acc_\bx (\pi) \ge \frac{4}{5} \Big) \le \frac{3k}{4} \ \middle| \ \Pi_k \right) \le 2 \cdot \Pr_{\rho} \left( \sum\nolimits_{\pi \in \Pi_k} \bbI \Big( \wacc_\bx (\pi) \ge \frac{9}{10} \Big) \le \frac{4k}{5} \ \middle| \ \Pi_k \right)
\end{equation*}
\end{lemma}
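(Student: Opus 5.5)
Fix $\bx$ and condition on $\Pi_k$. Let $G(\bx) = \sum_{\pi \in \Pi_k} \bbI(\acc_\bx(\pi) \ge \frac{4}{5})$ and $\wrank_k(\bx) = \sum_{\pi\in\Pi_k} \bbI(\wacc_\bx(\pi)\ge \frac{9}{10})$. The only randomness in $\wrank_k(\bx)$ given $\bx$ and $\Pi_k$ comes from the independent Monte-Carlo rollouts. The plan is a pointwise statement: on every prompt with $G(\bx) \le \frac{3k}{4}$,
\begin{equation*}
\Pr\big(\wrank_k(\bx) \le \tfrac{4k}{5} \,\big|\, \bx, \Pi_k\big) \ge \tfrac{1}{2}.
\end{equation*}
Integrating over $\bx\sim\rho$ then gives the lemma:
\begin{equation*}
\Pr_\rho\big(\wrank_k \le \tfrac{4k}{5} \mid \Pi_k\big) \ge \bbE_{\bx\sim\rho}\big[\bbI(G(\bx)\le \tfrac{3k}{4})\cdot\tfrac12\big].
\end{equation*}

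To prove the pointwise claim, split $\Pi_k$ into the \emph{good} models, with $\acc_\bx(\pi)\ge \frac45$, and the \emph{bad} ones, with $\acc_\bx(\pi) < \frac45$. There are at most $\frac{3k}{4}$ good models, and each contributes at most $1$ to $\wrank_k(\bx)$. Hence $\wrank_k(\bx) \le \frac{3k}{4} + B$, where $B = \sum_{\pi\text{ bad}} \bbI(\wacc_\bx(\pi)\ge\frac{9}{10})$. It therefore suffices to show $\Pr(B > \frac{k}{20}) \le \frac12$.

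For a bad model, $\wacc_\bx(\pi)$ is an average of $m$ i.i.d. Bernoullis with mean below $\frac45$. Hoeffding gives $\Pr(\wacc_\bx(\pi)\ge \frac{9}{10}) \le e^{-2m/100} = e^{-m/50}$. So $\bbE[B] \le k e^{-m/50}$, and Markov's inequality gives $\Pr(B > \frac{k}{20}) \le 20 e^{-m/50} \le \frac12$ once $m \ge 50\log 40$, a constant. Independence across models is not needed for this step.

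The main obstacle is that the lemma as stated carries no explicit condition on $m$. I would make explicit that the number of Monte-Carlo rollouts $m$ in \cref{eq:montecarloacc} is taken to be at least a suitable absolute constant, say $m \ge 200$. If one instead wanted failure probability $\varepsilon$-small pointwise rather than a factor of $2$, a larger $m = O(\log(1/\varepsilon))$ would be required. The factor $2$ is exactly what the constant-$m$ Markov argument gives.
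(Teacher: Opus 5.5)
Your proof is correct. It shares the paper's outer structure: you prove the pointwise bound $\Pr\big(\wrank_k(\bx) \le \frac{4k}{5} \mid \bx, \Pi_k\big) \ge \frac12$ on every prompt with $G(\bx) \le \frac{3k}{4}$, then integrate over $\bx \sim \rho$ to obtain the factor $2$.

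The difference is in how the pointwise step is proved. The paper treats the full count $Z = \wrank_k(\bx)$ at once:
\begin{itemize}
\item It bounds $\bbE[Z \mid \bx, \Pi_k] \le \frac{3k}{4} + \frac{k}{4}\cdot\frac{1}{10} = \frac{31k}{40}$. This takes $m$ large enough that a model with $\acc_\bx(\pi) < \frac45$ clears the $\frac{9}{10}$ threshold with probability at most $\frac{1}{10}$.
\item It uses independence of the Monte-Carlo estimates across models to get $\mathrm{Var}[Z] \le \bbE[Z]$, and then applies Chebyshev.
\item Its closing step, $\frac{31k}{40} + 2\sqrt{31k/40} \le \frac{4k}{5}$, holds only once $k$ exceeds a large absolute constant.
\end{itemize}
You instead bound the good models' contribution deterministically by $\frac{3k}{4}$ and apply Markov to the bad-model count $B$ alone.

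Your route is more elementary and slightly more general. It uses only linearity of expectation, so it needs no independence across models, and it holds for every $k \ge 1$. The price is a stronger per-model tail ($\frac{1}{40}$ instead of $\frac{1}{10}$), i.e.\ a somewhat larger absolute constant $m$. That cost is negligible, since the paper already takes $m$ to be a sufficiently large constant (e.g.\ in \Cref{lemma:CoT-weak-learner-stochastic}). Stating the requirement on $m$ explicitly, as you do, is also cleaner than the paper's implicit treatment.
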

\begin{proof}
The proof of this lemma is discussed in \cref{subsec:acc-to-apx-acc-proof}.
\end{proof}

\noindent Having introduced all the necessary results, we are ready to prove the main result of this section.

\subsubsection{\texorpdfstring{Proof of \Cref{theorem:CoTboost-stochastic}}{Proof of Theorem~\ref{theorem:CoTboost-stochastic}}} \label{sec:CoTboost-stochastic-proof}

\paragraph{Bound on prediction error.} By the test-error decomposition of $\fhat$ in \Cref{lemma:CoT-breakdown-stochastic}, the bound on $\tbeta^{0,k}_0$ in \Cref{lemma:tbeta00} and the choice of $k = \Omega ( \log(1/\varepsilon) )$,
\begin{align*}
    \Pr_{\rho} \left( \wrank_k (\bx) \le \frac{4k}{5} \ \middle| \ \Pi_k \right) &\le \frac{\varepsilon}{4} + \sum_{j=0}^{k-1}  (\err_j - \err_\star) \cdot \bbE_\rho \Big[ \talpha^{j,k}_{\wrank_j (\bx)} \ \Big| \ \Pi_j \Big].
\end{align*}
In any iteration $j$ where \smash{$\tp_j \ge \frac{\varepsilon}{16\sqrt{k}}$}, with probability $1-\frac{\delta}{k}$ this iteration does not abort. Conditioned on this event, by \Cref{lemma:CoT-weak-learner-stochastic}, with probability $1 - \frac{\delta}{k}$, $\err_j \le \err_\star$. This implies, with probability at least $1-\delta$,
\begin{align}
    \Pr_{\rho} \left( \wrank_k (\bx) \le \frac{4k}{5} \ \middle| \ \Pi_k \right) &\le \frac{\varepsilon}{4} + \sum_{j=0}^{k-1} \bbE_{\rho} \Big[ \talpha^{j,k}_{\wrank_j (\bx)} \ \Big| \ \Pi_j \Big] \cdot \bbI \Big( \tp_j < \frac{\varepsilon}{16\sqrt{k}} \Big) \nonumber \\
    &\overset{(a)}{\le} \frac{\varepsilon}{4} + \sum_{j=0}^{k-1} \Big( \talpha^{j,k}_{\max} \tp_j \Big) \cdot \bbI \Big( \tp_j < \frac{\varepsilon}{16\sqrt{k}} \Big) \nonumber \\
    &\le \frac{\varepsilon}{2}, \label{eq:0912443201}
\end{align}
where $(a)$ is by definition of $\tp_j$, and the last inequality invokes \Cref{lemma:CoT-alphamax-sum-stochastic} introduced below. Finally, we have the following sequence of inequalities to bound the performance of the outcome-level mixture model \smash{$\fhat = \frac{1}{k} \sum_{\pi \in \Pi_k} \pi_T$}. First, noting that $\acc_\bx (\fhat) = \frac{1}{k} \sum_{\pi \in \Pi_k} \acc_\bx (\pi)$, by an application of Markov's inequality,
\begin{align*}
    \Pr_{\bx \sim \rho} \left( \acc_\bx ( \fhat ) > \frac{3}{5}  \ \middle| \ \Pi_k \right) &= \Pr_{\bx \sim \rho} \left( \sum\nolimits_{\pi \in \Pi_k} \acc_\bx ( \pi ) > \frac{3k}{5}  \ \middle| \ \Pi_k \right) \\
    &\ge \Pr_{\bx \sim \rho} \left( \sum\nolimits_{\pi \in \Pi_k} \bbI \Big( \acc_\bx (\pi) \ge \frac{4}{5} \Big) > \frac{3k}{4} \ \middle| \ \Pi_k \right)
\end{align*}
Taking the complement on both sides and invoking \Cref{lemma:acc-to-apx-acc}, we get,
\begin{align*}
    \Pr_{\bx \sim \rho} \left( \acc_\bx ( \fhat ) \le \frac{3}{5}  \ \middle| \ \Pi_k \right) &\le 2 \cdot \Pr_{\rho} \left( \wrank_k (\bx) \le \frac{4k}{5} \ \middle| \ \Pi_k \right).
\end{align*}
Plugging in the upper bound on the RHS from \cref{eq:0912443201} completes the proof.

\paragraph{Bound on size of prompt dataset.} The bound on the size of the prompt dataset required to establish the above guarantee for \Cref{alg:CoTboost-stochastic} is demonstrated in \Cref{lemma:CoT-truthful-abort-stochastic} and is,
\begin{equation*}
    n = |\Dprompt{}| \gtrsim \frac{\log(|\Pi|)}{\varepsilon} \cdot \polylog(T, \varepsilon^{-1}, \delta^{-1} )
\end{equation*}
when we plug in the sample complexity $\nprompt (1/400,\delta/k,T)$ from \Cref{theorem:prior}.

\paragraph{Bound on number of $\CoT (\cdot)$ oracle queries.} The number of calls made to $\CoT (\cdot)$ is precisely \smash{$\sum_{j=0}^{k-1} |\Dout{j}|$}, which by the constraint on \Cref{alg:CoTboost-subsample-6} of \Cref{alg:CoTboost-subsample} gives us the upper bound $k \cdot \nprompt (1/400, \delta/k, T)$. Overall, this means that the number of CoT queries made by \Cref{alg:CoTboost-stochastic} is a.s. upper bounded by,
\begin{equation*}
    \nCoT \le \log (|\Pi|) \cdot \polylog(T,\varepsilon^{-1},\delta^{-1}).
\end{equation*}

\begin{lemma} \label{lemma:CoT-alphamax-sum-stochastic}
For $0 \le j \le k-2$, $\talpha^{j,k}_{\max} \le \frac{2}{\sqrt{k-j-1}}$ and $\talpha^{k-1,k}_{\max} \le 1$. Consequently,
\begin{equation*}
    \sum_{j=0}^{k-1} \talpha^{j,k}_{\max} \le 4 \sqrt{k}.
\end{equation*}
\end{lemma}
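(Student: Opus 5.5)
We need to bound $\talpha^{j,k}_{\max}$. The plan is to use the probabilistic interpretation from the proofs of \Cref{lemma:beta00} and \Cref{lemma:tbeta00}. Let $Z_{1},\dots,Z_k$ be i.i.d.\ coins with heads probability $q = 1-\err_\star = \frac{9}{10}$, and let $S_j$ be the number of heads among the first $j$. As in \Cref{lemma:beta00},
\begin{equation*}
    \tbeta^{j,k}_r = \Pr\big(S_k \le 4k/5 \mid S_j = r\big).
\end{equation*}
Writing $m = k-j-1$ and $W \sim \mathrm{Bin}(m, q)$ for the number of heads among the last $m$ coins, we get
\begin{equation*}
    \talpha^{j,k}_r = \tbeta^{j+1,k}_r - \tbeta^{j+1,k}_{r+1} = \Pr\big(W \le 4k/5 - r\big) - \Pr\big(W \le 4k/5 - r - 1\big) = \Pr\big(W = \lfloor 4k/5 \rfloor - r\big),
\end{equation*}
whenever $\lfloor 4k/5\rfloor - r$ is an integer in $[0,m]$, and $0$ otherwise. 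Hence $\talpha^{j,k}_{\max}$ is at most the largest point mass of a $\mathrm{Bin}(m,q)$ distribution. This immediately gives $\talpha^{k-1,k}_{\max} \le 1$, since there $m=0$ and every difference of probabilities lies in $[0,1]$.

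The main step is a uniform bound on the mode of a binomial: $\max_\ell \Pr(\mathrm{Bin}(m,q)=\ell) \le \frac{2}{\sqrt{m}}$ for $q = 9/10$ and all $m\ge1$. I would first try the standard Stirling-based estimate $\max_\ell \binom{m}{\ell}q^\ell(1-q)^{m-\ell} \le \frac{C}{\sqrt{m q(1-q)}}$, with an explicit constant, for example $C \le 1/\sqrt{2\pi}\cdot e^{1/6}$ or so, valid whenever the mode lies strictly inside $(0,m)$. This yields roughly $0.47/\sqrt{0.09\,m} \approx 1.57/\sqrt m$.

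The obstacle is the small-$m$ regime and the boundary modes, where the Stirling bound is loose or inapplicable (the mode can equal $m$ when $m < 9$ or so). There I would instead note that a point mass is always at most $1$, and that $1 \le 2/\sqrt m$ whenever $m\le 4$. For $5 \le m \le$ a small constant, I would check directly, for instance by observing that $\Pr(W=m)=0.9^m \le 2/\sqrt m$ and that the interior modes obey the Stirling bound. An alternative that avoids Stirling is the Kolmogorov–Rogozin / Littlewood–Offord style bound $\max_\ell \Pr(W=\ell)\le \sqrt{\tfrac{1}{2}}\,/\sqrt{m q(1-q)}$, but this is too weak by itself. So I expect to combine the Stirling bound for large $m$ with a finite check for small $m$. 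The constant $2$ is comfortable, so the final check should go through.

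Finally, summing over $j$ gives
\begin{equation*}
    \sum_{j=0}^{k-1}\talpha^{j,k}_{\max} \le 1 + \sum_{m=1}^{k-1}\frac{2}{\sqrt m} \le 1 + 4\sqrt{k-1} - 2,
\end{equation*}
using $\sum_{m=1}^{M} m^{-1/2} \le 2\sqrt{M}-1$. This is at most $4\sqrt k$, which proves the claim.
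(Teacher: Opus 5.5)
Your proposal is correct and follows essentially the same route as the paper: the coin interpretation $\tbeta^{j,k}_r=\Pr(S_k\le 4k/5\mid S_j=r)$ turns $\talpha^{j,k}_r$ into a point mass of $\mathrm{Bin}(k-j-1,9/10)$, which is bounded via Stirling and then summed. Your separate handling of small $m$ and boundary modes is actually warranted, because the paper's intermediate bound $1/\sqrt{2\pi\cdot 0.09\,m}$ is not a valid uniform bound as stated (at $m=14$, $\Pr(W=13)\approx 0.3559$ slightly exceeds $1/\sqrt{2\pi\cdot 0.09\cdot 14}\approx 0.3554$), although the final constant $2$ leaves ample slack.
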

\begin{proof}
The proof of this lemma is described in \Cref{subsec:CoT-alphamax-sum-stochastic-proof}.
\end{proof}
\subsection{Improving a Reference Model Satisfying Coverage: Proof of \Cref{theorem:seq-coverage}} \label{subsec:seq-coverage-proof}

In this section, we state and prove a more general version of \Cref{theorem:seq-coverage} in a setting where the reference model $\piref$ is allowed to fail to satisfy sequence-level coverage on a small mass of prompts.

\begin{definition}[Partial sequence-level coverage] \label{def:partial-seqcov}
$\piref$ is said to satisfy $(\Cseq,\eta)$ partial sequence-level coverage under the prompt distribution $\rho$ if,
\begin{equation} \label{eq:partial-seqcov}
    \Pr_{\bx \sim \rho} \big( \piref (\by_\bx^\star | \bx) \ge \Cseq^{-1} \big) \ge 1 - \eta
\end{equation}
where $\by_\bx^\star = \pi^\star_{1:T} (\bx)$.
\end{definition}

\begin{theorem}[Learning with partial sequence-level coverage]
Consider any target error $\varepsilon \in (0,1)$ and failure probability $\delta \in (0,1)$. Suppose the reference model $\piref$ satisfies $(\Cseq,\eta)$ partial coverage (cf. \Cref{def:partial-seqcov}). Consider the outcome-level model $\fhat$ returned by \Cref{alg:Vlearn}. Then, there exist absolute constants $C_1, C_2 > 0$ such that as long as,
\begin{equation*}
    n \ge C_1 \frac{d \log (T \Cseq) \log(n) + \log(1/\delta)}{\varepsilon},
\end{equation*}
with probability at least $1-\delta$, $\acc_\rho \big( \fhat \big) \ge 1 - C_2(\varepsilon + \eta)$. Furthermore, \Cref{alg:Vlearn} queries the outcome verifier oracle $\calV$, and generates traces from $\piref$ no more than $\calO ( n \Cseq \log(n \Cseq/\delta))$ times.
\end{theorem}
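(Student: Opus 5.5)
The plan is to reduce the claim to a uniform-convergence statement for the surrogate loss $\ell(\pi;\bx) = \bbI\big(\pi_{1:T}(\bx) \notin \calY^\star(\bx)\big)$, using the same first-deviation counting as in the proof sketch of \Cref{theorem:seq-coverage}. Partial coverage enters only through the value of this loss at $\pi^\star$. The cost bound is immediate: each of the $n$ prompts uses $m = \Cseq\log(4n\Cseq/\delta)$ rollouts and the same number of verifier calls, giving $\calO(n\Cseq\log(n\Cseq/\delta))$ in total.

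\textbf{Step 1 (good sampling event on every prompt).} For any $\bx$, at most $\Cseq$ responses satisfy $\piref(\by|\bx)\ge \Cseq^{-1}$. Each such response is missed by $m$ independent draws with probability at most $(1-\Cseq^{-1})^m \le \delta/(4n\Cseq)$. A union bound over these responses and over the $n$ prompts gives, with probability $1-\delta/4$, that $D_i$ contains every high-probability trace for every $i$. Hence $\widetilde D_i = \calY^\star(\bx_i)$ for all $i$, whether or not $\bx_i$ is covered. The role of coverage is then confined to the following observation. On covered prompts, $\by^\star_{\bx} \in \calY^\star(\bx)$, because its final token is correct and its probability is at least $\Cseq^{-1}$, so $\ell(\pi^\star;\bx)=0$ there. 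Therefore $\bbE_\rho[\ell(\pi^\star;\bx)] \le \eta$. By a multiplicative Chernoff bound, the empirical loss satisfies $\calL(\pi^\star;\Dprompt{}) \le 2\eta + \calO(\log(1/\delta)/n)$ with probability $1-\delta/4$. Since $\pihat$ is an ERM, it inherits the same bound.

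\textbf{Step 2 (growth function and optimistic rate).} I would bound the number of distinct loss vectors $(\ell(\pi;\bx_i))_{i\le n}$ over $\pi\in\Pi$. The sets $\calY^\star(\bx_i)$ form, for each $i$, a trie with at most $T\Cseq$ (prefix, next-token) edges. The event $\ell(\pi;\bx_i)=0$ is determined by the binary pattern $\bbI(\pi(\bx_i,\by_{1:t}) = y_{t+1})$ along these edges. Sauer--Shelah, applied on $\le nT\Cseq$ points, then gives at most $(e n T\Cseq)^{d}$ patterns. Here I would use the complexity measure $d$ controlling such agreement patterns; this is how the $|\Sigma|$ dependence is avoided in the stated bound. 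Because the empirical minimum is no longer zero, I would use a relative-deviation (Bernstein-type) uniform bound instead of a realizable one:
\begin{equation*}
\bbE_\rho[\ell(\pi;\bx)] \le 2\,\calL(\pi;\Dprompt{}) + \calO\Big(\tfrac{d\log(nT\Cseq) + \log(1/\delta)}{n}\Big) \quad \forall \pi\in\Pi .
\end{equation*}
I would apply this bound conditionally on the sampling event of Step 1. This is legitimate because $\calY^\star$ is a deterministic function of $\bx$, so $\calL$ is a function of the i.i.d.\ prompts alone on that event.

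\textbf{Step 3 (from surrogate loss to accuracy).} If $\pihat_{1:T}(\bx)\in\calY^\star(\bx)$, then $\calV(\bx,\pihat_T(\bx))=1$. Hence $1-\acc_\rho(\fhat) \le \bbE_\rho[\ell(\pihat;\bx)]$. Combining the three steps gives $1 - \acc_\rho(\fhat) \le 4\eta + \calO\big((d\log(T\Cseq)\log n+\log(1/\delta))/n\big)$. The assumed lower bound on $n$ makes the second term at most $\varepsilon$, which yields $C_2(\varepsilon+\eta)$. The main obstacle I anticipate is Step 2. The uniform bound has to be proved with a fast $1/n$ rate despite nonzero empirical risk, and the growth-function count over the tries must yield exactly $\log(nT\Cseq)$ without a $|\Sigma|$ factor. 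My first attempt would be to count only the binary "agrees with the child token" events on trie edges. I would then pass to the optimistic rate via the standard symmetrization bound for $\{0,1\}$-valued classes with finite growth function.
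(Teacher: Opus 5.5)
Your architecture matches the paper's. You capture all high-probability traces, and compare the empirical minimizer with $\pi^\star$, whose surrogate loss has mean at most $\eta$. You bound the growth function of $\bx \mapsto \bbI(\pi_{1:T}(\bx)\notin\calY^\star(\bx))$ by following prefixes of the high-probability traces, then apply a fast-rate uniform bound; your relative-deviation inequality plays the role of the paper's localized ERM bound. You finish via $\calV(\bx,\pi_T(\bx))=1$ whenever $\pi_{1:T}(\bx)\in\calY^\star(\bx)$. Your Step 1 is tidier than \Cref{lemma:good-event}. Once $D_i$ contains every $\by$ with $\piref(\by|\bx_i)\ge\Cseq^{-1}$, the filtered set equals $\calY^\star(\bx_i)$ whether or not $\bx_i$ is covered. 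So $\calL$ coincides with the empirical surrogate loss, and the paper's approximate-ERM slack of $2\eta+\varepsilon$ disappears. Combine the sampling event with the uniform-convergence event by a union bound rather than by conditioning; conditioning would tilt the law of the prompts.

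The gap is in Step 2, exactly where you depart from the paper to drop $|\Sigma|$. You apply Sauer--Shelah to the binary class $(\bx,\by_{1:t},a)\mapsto\bbI(\pi(\bx,\by_{1:t})=a)$. Its VC dimension equals the \emph{graph} dimension of $\Pi$, not $d=\Ndim(\Pi)$, and the graph dimension can be of order $d\log|\Sigma|$.

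No other counting rescues the $|\Sigma|$-free bound. Take the following instance:
\begin{itemize}
\item $T=1$, $\calX=[N]$ with $\rho$ uniform, and $\Sigma=2^{[N]}\cup\{*\}$;
\item the correct answer is $*$ on every prompt, and $\piref(*\,|\,x)=1/2$, so $\Cseq=2$ and $\eta=0$;
\item $\Pi=\{h_A:A\subseteq[N]\}$, with $h_A(x)=A$ for $x\in A$ and $h_A(x)=*$ otherwise, so $\pi^\star=h_\emptyset$.
\end{itemize}
Then $\Ndim(\Pi)=1$. Yet if $S$ is the set of sampled prompts, $h_{[N]\setminus S}$ is an empirical minimizer with accuracy at most $n/N$. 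So an arbitrary minimizer needs $n=\Omega(N)=\Omega(\log|\Sigma|)$, while the displayed requirement on $n$ does not grow with $N$.

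With $d=\Ndim(\Pi)$, Step 2 must instead apply the multiclass Sauer--Shelah lemma (Natarajan, Haussler--Long) to the at most $nT\Cseq$ prefixes, as in \Cref{lemma:VCG}. This gives $\log(T|\Sigma|\Cseq)$, which is what \Cref{theorem:seq-coverage} states. The $|\Sigma|$-free form of this theorem is a slip in the paper: \Cref{lemma:VCG} carries the factor, and it is silently dropped in the final display. It is not something to engineer around by redefining $d$. If you explicitly take $d$ to be the graph dimension of $\Pi$, your argument goes through as written.
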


\noindent \Cref{alg:Vlearn} trains an outcome-level model by minimizing the loss $\calL (\pi ; \Dprompt{})$ defined below,
\begin{equation} \label{eq:L-loss}
    \pihat \in \argmin_{\pi \in \Pi} \calL (\pi ; \Dprompt{}), \text{ where }  \calL (\pi ; \Dprompt{}) = \frac{1}{n} \sum_{i=1}^n \bbI \big( \pi_{1:T} (\bx_i) \not\in \widetilde{D}_i \big),
\end{equation}
where the $\widetilde{D}_i$ datasets are constructed within the algorithm. For any $\bx \in \calX$, let,
\begin{equation*}
    \calY^\star (\bx) = \big\{ \by \in \calY : \calV (\bx, y_T) = 1 \text{ and } \piref ( \by | \bx) \ge \Cseq^{-1} \big\}
\end{equation*}
be the set of all sufficiently high probability strings under $\piref (\cdot |\bx)$ that also predict the correct answer on $\bx$. By the partial sequence-level coverage assumption on $\piref$,
\begin{equation} \label{eq:Yxstar-nonempty}
    \Pr_{\bx \sim \rho} \big( \calY^\star (\bx) \ne \emptyset \big) \ge 1 - \eta.
\end{equation}
We first establish a high-probability event that holds when \Cref{alg:Vlearn} is run.

\begin{lemma} \label{lemma:good-event}
Let $\calE$ denote the event $\frac{1}{n}\sum_{i = 1}^n \bbI \big( \widetilde{D}_i \ne \calY^\star (\bx_i) \big) \le 2\eta+\varepsilon$. Then $\Pr (\calE) \ge 1 - \frac{\delta}{2}$.
\end{lemma}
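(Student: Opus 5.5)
The plan is to split the prompts into two groups: those where the coverage condition holds, and those where it fails. We bound the mismatch indicator separately on each group and then apply a Bernstein/Chernoff bound to the empirical average.

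Define the events $G_i = \{\piref(\by^\star_{\bx_i}|\bx_i) \ge \Cseq^{-1}\}$. By \cref{eq:partial-seqcov}, each $G_i$ fails independently with probability at most $\eta$.

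For a covered prompt, we first need $|\calY^\star(\bx_i)|$ to be small. Every element of $\calY^\star(\bx)$ has $\piref$-probability at least $\Cseq^{-1}$, so $|\calY^\star(\bx)| \le \Cseq$.

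We then argue that, for each fixed $\by \in \calY^\star(\bx_i)$, the $m$ i.i.d. draws from $\piref(\cdot|\bx_i)$ miss $\by$ with probability at most
\[
(1-\Cseq^{-1})^m \le e^{-m/\Cseq} = \delta/(4n\Cseq),
\]
using $m = \Cseq\log(4n\Cseq/\delta)$. A union bound over the at most $\Cseq$ elements shows that, with probability at least $1-\delta/(4n)$, we have $\calY^\star(\bx_i) \subseteq D_i$.

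The reverse inclusion $\widetilde{D}_i \subseteq \calY^\star(\bx_i)$ holds deterministically. This is because the filter defining $\widetilde{D}_i$ in \Cref{alg:Vlearn} is exactly the membership test for $\calY^\star(\bx_i)$: the algorithm has access to exact probabilities $\piref(\by|\bx)$ and to the verifier. Hence $\widetilde{D}_i = D_i \cap \calY^\star(\bx_i)$.

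Combining the two inclusions, a union bound over all $i \in [n]$ shows that, with probability at least $1-\delta/4$, $\widetilde{D}_i = \calY^\star(\bx_i)$ for every $i$. This argument does not actually use $G_i$; the set equality holds on every prompt, even when $\calY^\star(\bx_i)$ is empty. If that is right, $\calE$ holds with probability at least $1-\delta/4$ and the $\eta$-slack is not even needed.

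To be safe against a possible discrepancy, for instance if the paper intends $m$ to be calibrated only for covered prompts, I would still keep the fallback argument. In that version, the indicator $\bbI(\widetilde{D}_i \ne \calY^\star(\bx_i))$ is bounded by $\bbI(G_i^c)$ plus the per-prompt failure indicator. A multiplicative Chernoff bound on $\sum_i \bbI(G_i^c)$, whose mean is at most $\eta n$, gives
\[
\sum_i \bbI(G_i^c) \le 2\eta n + O(\log(1/\delta)) \le (2\eta+\varepsilon)n
\]
with probability at least $1-\delta/4$. The last inequality uses the assumed lower bound $n \gtrsim \log(1/\delta)/\varepsilon$.

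The main obstacle is the per-prompt set-equality argument, specifically bounding $|\calY^\star(\bx)|$ by $\Cseq$ so that the union bound matches the choice of $m$. The rest is routine concentration.
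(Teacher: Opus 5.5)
Your proposal is correct, and your main argument is a cleaner and slightly stronger version of the paper's.

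Both proofs rest on the same per-prompt step. With $m = \Cseq\log(4n\Cseq/\delta)$ draws, each string of $\piref$-probability at least $\Cseq^{-1}$ is missed with probability at most $\delta/(4n\Cseq)$. There are at most $\Cseq$ such strings, so a union bound over strings and then over the $n$ prompts costs $\delta/4$.

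The paper applies this step to the larger set $\calY(\bx)$ of all high-probability responses and concludes $D_i = \calY(\bx_i)$. It then uses the implication $D_i = \calY(\bx_i) \Rightarrow \widetilde{D}_i = \calY^\star(\bx_i)$ only for prompts in $\calX_{\text{cov}} = \{\bx : \calY^\star(\bx)\neq\emptyset\}$. The remaining prompts are handled with a Chernoff bound, which is where the $2\eta+\varepsilon$ slack, the second $\delta/4$, and the condition $n \gtrsim \log(1/\delta)/\varepsilon$ come from.

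Your observation that $\widetilde{D}_i = D_i \cap \calY^\star(\bx_i)$ holds deterministically shows this split is unnecessary. Outside $\calX_{\text{cov}}$ one has $\widetilde{D}_i = \emptyset = \calY^\star(\bx_i)$ trivially. The implication the paper invokes also holds for every prompt, whether covered or not. So you get $\widetilde{D}_i = \calY^\star(\bx_i)$ for all $i$ with probability at least $1-\delta/4$. The empirical mismatch fraction is then $0$, which implies $\calE$ with no sample-size requirement.

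The coverage parameter $\eta$ only matters later, through the bound $\inf_{\pi}\bbE_{\bx\sim\rho}[g_\pi(\bx)] \le \eta$ in the subsequent corollary. Your fallback argument follows the paper's route and is also valid, but you do not need it.
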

\begin{proof}
For $\bx \in \calX$, let $\calY (\bx) = \big\{ \by \in \calY : \piref ( \by | \bx) \ge \Cseq^{-1} \big\}$. Let $D_{\text{sample}}$ be a set of $m = \Cseq \log(4n\Cseq/\delta)$ responses $\{ \by^1,\cdots,\by^m \}$ sampled i.i.d. from $\piref (\cdot|\bx)$, and let $D \subseteq D_{\text{sample}}$ denote the deduplicated subset of responses $\by^i$ such \smash{$\piref (\by^i|\bx) \ge \Cseq^{-1}$}. For any $\by \in \calY (\bx)$, the probability of $\by \not\in D_{\text{sample}}$ is upper bounded by,
\begin{equation*}
    \Pr \big( \by \not\in D_{\text{sample}} \ \big| \ \bx \big) \le \frac{\delta}{4n \Cseq}.
\end{equation*}
Union bounding over $\by \in \calY (\bx)$ (at most $\Cseq$ such strings) gives us the inequality,
\begin{equation} \label{eq:9989182011}
    \Pr ( D = \calY (\bx) \ \big| \ \bx \big) = \Pr \big( \calY (\bx) \subseteq D_{\text{sample}} \ \big| \ \bx \big) \ge 1 - \frac{\delta}{4n}.
\end{equation}
Let $\calX_{\text{cov}}$ denote the set of prompts $\big\{ \bx : \calY^\star (\bx) \ne \emptyset \big\}$. Then, by \cref{eq:Yxstar-nonempty}, $\Pr_{\bx \sim \rho} (\calX_{\text{cov}}) \ge 1 - \eta$. For every $\bx \in \calX_{\text{cov}}$,
\begin{equation}
    \calY (\bx) = D \implies \calY^\star (\bx) = \{ \by \in D : \calV (\bx,y_T) = 1 \} \triangleq \widetilde{D} \label{eq:998918201}
\end{equation}
Furthermore, by an application of Chernoff bound to the sum of the random variables $\frac{1}{n} \sum_{i=1}^n \bbI (\bx_i \not\in \calX_{\text{cov}})$ (which has expectation at most $\eta$ from \cref{eq:Yxstar-nonempty}), with probability at least $1-\frac{\delta}{4}$,
\begin{equation*}
    \frac{1}{n} \sum_{i=1}^n \bbI \big( \bx_i \not\in \calX_{\text{cov}} \big) \le \eta + 10\sqrt{\frac{\eta \log (4/\delta)}{n}} \overset{(i)}{\le} 2 \eta + \frac{50 \log(4/\delta)}{n} \overset{(ii)}{\le} 2 \eta + \varepsilon,
\end{equation*}
where, $(i)$ uses the AM-GM inequality, while $(ii)$ uses the lower bound on the size of the dataset $n$. Combining with \cref{eq:9989182011} and \cref{eq:998918201} completes the proof.
\end{proof}

\begin{corollary} \label{corr:loss-ub}
Under the event $\calE$ (cf. \Cref{lemma:good-event}), for every model $\pi$,
\begin{equation*}
    \left| \calL (\pi ; \Dprompt{}) - \frac{1}{n} \sum_{i=1}^n g_\pi (\bx_i) \right| \le  2 \eta + \varepsilon, \text{ where, } g_\pi (\bx) = \bbI \big( \pi_{1:T} (\bx) \not\in \calY^\star (\bx) \big)
\end{equation*}
Furthermore, $\inf_{\pi \in \Pi} \bbE_{\bx \sim \rho} \big[ g_\pi (\bx) \big] \le \eta$.
\end{corollary}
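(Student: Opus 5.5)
The first claim follows by a pointwise comparison on each training prompt, and the second by plugging $\pi^\star$ into $g_\pi$. For the first claim, fix any model $\pi$ (deterministic, as throughout this section). For each $i$, the summands $\bbI\big(\pi_{1:T}(\bx_i)\notin \widetilde{D}_i\big)$ and $g_\pi(\bx_i)=\bbI\big(\pi_{1:T}(\bx_i)\notin\calY^\star(\bx_i)\big)$ are both $\{0,1\}$-valued. They coincide whenever $\widetilde{D}_i=\calY^\star(\bx_i)$, since they are then indicators of non-membership in the same set. Hence
\begin{equation*}
\left|\calL(\pi;\Dprompt{})-\frac1n\sum_{i=1}^n g_\pi(\bx_i)\right| \le \frac1n\sum_{i=1}^n \big|\bbI(\pi_{1:T}(\bx_i)\notin\widetilde{D}_i)-g_\pi(\bx_i)\big| \le \frac1n\sum_{i=1}^n\bbI\big(\widetilde{D}_i\ne\calY^\star(\bx_i)\big),
\end{equation*}
and on $\calE$ the right-hand side is at most $2\eta+\varepsilon$ by \Cref{lemma:good-event}. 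The step to note is that the bound on the right does not depend on $\pi$. So the inequality holds simultaneously for all $\pi$, and no union bound over $\Pi$ is needed.

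For the second claim, take $\pi=\pi^\star\in\Pi$, which is allowed by \Cref{assump:teacher}, and bound $\bbE_{\bx\sim\rho}[g_{\pi^\star}(\bx)]$.

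\emph{Coverage at $\bx$.} Suppose $\piref(\by^\star_\bx|\bx)\ge\Cseq^{-1}$, where $\by^\star_\bx=\pi^\star_{1:T}(\bx)$. By \Cref{assump:teacher} and \Cref{assump:sparse}, the final token of $\by^\star_\bx$ is the correct answer for $\rho$-almost every $\bx$. So $\calV(\bx,(\by^\star_\bx)_T)=1$, hence $\by^\star_\bx\in\calY^\star(\bx)$ and $g_{\pi^\star}(\bx)=0$.

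\emph{Failure of coverage.} Here we only use $g_{\pi^\star}\le 1$. By \Cref{def:partial-seqcov}, this event has $\rho$-mass at most $\eta$.

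Combining the two cases gives $\inf_{\pi\in\Pi}\bbE_\rho[g_\pi]\le\bbE_\rho[g_{\pi^\star}]\le\eta$.

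The only delicate point is the almost-sure qualifier in the correctness of $\pi^\star$. Accuracy $1$ forces $\calV(\bx,\pi^\star_T(\bx))=1$ only off a $\rho$-null set, and a null set does not affect the expectation.
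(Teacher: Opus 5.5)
Your proof is correct and follows the paper's argument. The paper likewise bounds $\bigl|\bbI(\by\in\calY^\star(\bx_i))-\bbI(\by\in\widetilde{D}_i)\bigr|\le\bbI\bigl(\calY^\star(\bx_i)\ne\widetilde{D}_i\bigr)$ termwise and averages under $\calE$, and it gets the infimum bound by taking $\pi=\pi^\star$ and invoking partial coverage. You also make explicit the step the paper leaves implicit, namely why $\pi^\star_{1:T}(\bx)\in\calY^\star(\bx)$ on covered prompts; note that \Cref{assump:teacher} alone already gives $\calV(\bx,\pi^\star_T(\bx))=1$ for $\rho$-a.e.\ $\bx$, so \Cref{assump:sparse} is not needed there.
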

\begin{proof}
By the assertion, $\big| \bbI \big( \by \in \calY^\star (\bx_i) \big) - \bbI \big(\by \in \widetilde{D}_i \big) \big| \le \bbI \big( \calY^\star (\bx_i) \ne \widetilde{D}_i \big)$ for every $i \in [n]$ and upper bounding the latter summation via \Cref{lemma:good-event} proves the main statement of the corollary. On the other hand, $\bbE_{\bx \sim \rho} \big[ g_\pi (\bx) \big] \le \eta$ at $\pi \gets \pi^\star$ (by the assumption on partial sequence-level coverage of $\piref$ in \cref{def:partial-seqcov}).
\end{proof}

\noindent Next we will define the loss class,
\begin{equation*}
    \calG = \big\{ g_\pi (\cdot) = \bbI \big( \pi_{1:T} (\cdot) \not\in \calY^\star (\cdot) \big)  : \pi \in \Pi \big\}
\end{equation*}
associated with $\Pi$. For any $\bx \in \calX$, $g_\pi (\bx) = 0$ only if $\pi^\star_{1:T} (\bx)$ takes one of at most $\Cseq$ possible sequences, which are those in $\calY (\bx)$. On such points, the class of models $\Pi$ is forced to have ``low complexity'', suggesting that the growth function of the loss class can be bounded and enabling a uniform convergence argument to bound the variation of the loss. In order to formally prove this statement, we first introduce some notation. For a set of $n$ prompts, $D = \{ \bx_i \}_{i=1}^n \in \calX^n$, define the set of possible behaviors which losses in $\calG$ can express over these points by,
\begin{align*}
    \calB_\calG (D) = \left\{ \big( g (\bx_i) : i \in [n] \big) : g \in \calG \right\}
\end{align*}
The maximum cardinality of this set over datasets of size $n$ is the growth function of the loss class under consideration.

\begin{lemma} \label{lemma:VCG}
For any dataset $D = \{ \bx_i \}_{i=1}^n$ of size $n$, $|\calB_\calG (D)| \le (e n T |\Sigma| \Cseq)^d$. This implies that,
\begin{equation*}
    \VCdim ( \calG ) \le d \log (T |\Sigma| \Cseq).
\end{equation*}
where $\VCdim (\cdot)$ returns the VC dimension of its argument.
\end{lemma}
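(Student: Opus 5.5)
We bound the number of distinct label patterns $(g_\pi(\bx_i))_{i\in[n]}$ by a pattern count over an auxiliary finite set of next-token queries, and then apply the Natarajan version of the Sauer--Shelah lemma to $\Pi$.

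\textbf{Step 1: a first-deviation witness set.} For each prompt $\bx_i$, let $\calY(\bx_i)=\{\by:\piref(\by|\bx_i)\ge \Cseq^{-1}\}$, which has at most $\Cseq$ elements. Consider the set of prefix contexts
\begin{equation*}
S_i=\{(\bx_i,\by_{1:t}) : \by\in\calY(\bx_i),\ 0\le t\le T-1\},
\end{equation*}
of size at most $T\Cseq$. Set $S=\bigcup_i S_i$, so $|S|\le nT\Cseq$.

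\textbf{Step 2: $g_\pi(\bx_i)$ depends only on $\pi$ restricted to $S_i$.} Since $\pi$ is deterministic, its rollout $\pi_{1:T}(\bx_i)$ is built token by token. It lies in $\calY^\star(\bx_i)\subseteq\calY(\bx_i)$ if and only if, for some $\by\in\calY^\star(\bx_i)$, we have $\pi(\bx_i,\by_{1:t})=y_{t+1}$ for every $t<T$. All the contexts in this condition lie in $S_i$. Hence the entire pattern $(g_\pi(\bx_i))_i$ is a function of the restriction $\pi|_S\in\Sigma^{S}$. This gives
\begin{equation*}
|\calB_\calG(D)|\le|\{\pi|_S:\pi\in\Pi\}|.
\end{equation*}
The point to check carefully is that $S$ is fixed in advance. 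It depends only on $D$ and $\piref$, not on $\pi$, so off-path contexts are irrelevant: once $\pi$ deviates from every $\by\in\calY(\bx_i)$, the value of $g_\pi(\bx_i)$ is already determined to be $1$.

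\textbf{Step 3: Natarajan--Sauer bound.} Natarajan's lemma states that a class of Natarajan dimension $d$ has at most $|S|^d|\Sigma|^{2d}$ restrictions to $S$. A sharper form gives $\sum_{i\le d}\binom{|S|}{i}|\Sigma|^{2i}$. Either version yields $(e\,|S|\,|\Sigma|^{2})^d$. Up to how the $|\Sigma|$ factor is absorbed, this is $(e\,nT|\Sigma|\Cseq)^d$. I expect matching the exact $|\Sigma|$ power stated in the lemma to be the only fiddly point. If needed, one can refine the count by using the 0-1 "agrees with $y_{t+1}$" indicators: for each context $c\in S$, the set of possible next tokens along paths in $\calY(\bx_i)$ is finite, and coordinatewise one only needs to know which of these tokens $\pi$ outputs, or "none". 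That is a $\Sigma$-labelled pattern on $S$, and Natarajan's lemma counts it with only a $|\Sigma|$-type factor.

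\textbf{Step 4: VC bound.} If $\calG$ shatters $n$ points, then $2^n\le (enT|\Sigma|\Cseq)^d$. Solving this gives $n\lesssim d\log(T|\Sigma|\Cseq)$, up to a $\log d$ term absorbed by the standard inequality. This is the claimed bound on $\VCdim(\calG)$.
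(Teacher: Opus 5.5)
This is the paper's argument. The paper also tabulates $\pi$ on the at most $nT\Cseq$ prefixes $(\bx_i,\by_{1:t-1})$ with $\by\in\mathcal{Y}(\bx_i)$. It shows the loss pattern is a function of that table (its Claim~2; your observation that $\pi_{1:T}(\bx_i)=\by$ iff $\pi(\bx_i,\by_{1:t})=y_{t+1}$ for all $t<T$ says this more cleanly). It then counts tables with the Haussler--Long multiclass Sauer--Shelah lemma (its Claim~1).

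The one thing to change is how you handle the $|\Sigma|$ exponent: keep your $|\Sigma|^2$ and drop the refinement.

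\emph{Why $|\Sigma|^2$ is the right exponent.} Haussler--Long gives $\sum_{i\le d}\binom{m}{i}\binom{|\Sigma|}{2}^i$, not a $|\Sigma|^i$ factor. A bound of the form $(em|\Sigma|)^d$ on the number of tables is in fact false in general. Take $n=\Cseq=1$ and $T=2$, so there are only two prefixes. A class whose restriction to them is the edge set of a $C_4$-free bipartite graph on $\Sigma\times\Sigma$, and which is constant elsewhere, has Natarajan dimension $1$. Yet it has $\Theta(|\Sigma|^{3/2})$ tables.

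\emph{Why the refinement fails.} Merging labels coordinatewise can increase the Natarajan dimension, so you cannot apply Natarajan's lemma with dimension $d$ to the merged class. For example, $\{(a,a),(a,c),(c,a),(d,d)\}$ has Natarajan dimension $1$. With $a$ the on-path token at both prefixes, sending the off-path tokens $c,d$ to ``none'' turns it into $\{a,\mathrm{none}\}^2$, which is shattered.

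\emph{What to state instead.} State the growth bound as $(enT\Cseq|\Sigma|^2)^d$. This only doubles the logarithm in the VC bound, so the conclusion is unaffected up to constants.

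\emph{A small fix to Step~4.} Keep the $1/d$ from $\sum_{i\le d}\binom{m}{i}\le(em/d)^d$. Then shattering $N$ points gives $2^N\le(eNT\Cseq|\Sigma|^2/d)^d$, which solves to $N=O(d\log(T|\Sigma|\Cseq))$ with no $\log d$ term. From $(eNT\Cseq|\Sigma|^2)^d$ alone, a $d\log d$ term genuinely remains.
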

\begin{proof}
We will defer the proof of this lemma to \Cref{subsec:VCG-proofs}.
\end{proof}

\noindent Finally, we also use a standard generalization bound based on uniform convergence and localization (i.e., the offset trick). The proof follows by invoking \cite[Theorem 5.2]{localrademacher} and an application of the AM-GM inequality.

\begin{theorem}[Excess risk bound for ERM under $0$-$1$ loss]
Consider a dataset \smash{$D = \{ \bx_i \}_{i=1}^n \overset{\text{i.i.d.}}{\sim} \rho$}. With probability at least $1-\delta$, for all $\ghat \in \calG$,
\begin{equation*}
\bbE_{\bx \sim \rho} [ \ghat (\bx) ] \lesssim \left( \frac{1}{n} \sum_{i=1} \ghat (\bx_i) - \min_{g \in \calG} \frac{1}{n} \sum_{i=1} g (\bx_i) \right) + \inf_{g \in \calG} \bbE_{\bx \sim \rho} \big[ g (\bx) \big] + \frac{\VCdim (\calG) \log ( n /\VCdim(\calG)) + \log(1/\delta)}{n}.
\end{equation*}
\end{theorem}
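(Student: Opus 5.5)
The statement is the standard localized excess-risk bound for ERM over a binary loss class of finite VC dimension. I plan to obtain it from a uniform ratio-type (relative deviation) inequality and then rearrange with AM-GM.

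Write $P g = \bbE_{\bx\sim\rho}[g(\bx)]$ and $P_n g = \frac{1}{n}\sum_i g(\bx_i)$. Since each $g \in \calG$ is $\{0,1\}$-valued, $\mathrm{Var}(g) \le P g$, so the class satisfies a Bernstein condition with constant $1$. I will invoke \cite[Theorem 5.2]{localrademacher}, or equivalently the classical relative deviation bound of Vapnik. Combined with the Sauer--Shelah bound on the growth function of $\calG$ (as in \Cref{lemma:VCG}), it gives, with probability at least $1-\delta/2$, simultaneously for all $g\in\calG$,
\begin{equation*}
P g \le P_n g + c\sqrt{P g \cdot r_n} + c\, r_n, \qquad r_n = \frac{\VCdim(\calG)\log(n/\VCdim(\calG)) + \log(1/\delta)}{n}.
\end{equation*}
Applying AM-GM to the middle term, $c\sqrt{Pg\, r_n} \le \frac12 Pg + \frac{c^2}{2} r_n$, and rearranging yields $P g \le 2P_n g + C r_n$ for all $g$. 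The symmetric inequality $P_n g \le P g + c\sqrt{Pg\, r_n} + c\, r_n$ holds on the same event, and AM-GM again gives $P_n g \le 2 P g + C r_n$.

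To conclude, fix any $\ghat$, and take $g^\circ$ to be a minimizer, or near-minimizer, of $P g$ over $\calG$. Write $P_n \ghat = (P_n\ghat - \min_g P_n g) + \min_g P_n g$. Then bound $\min_g P_n g \le P_n g^\circ \le 2 P g^\circ + C r_n$. Substituting into $P\ghat \le 2P_n\ghat + C r_n$ gives
\begin{equation*}
P\ghat \le 2\Big(P_n\ghat - \min_g P_n g\Big) + 4\inf_g P g + C' r_n,
\end{equation*}
which is the claimed bound up to absolute constants. If the infimum is not attained, I take $g^\circ$ with $P g^\circ \le \inf_g Pg + 1/n$; the extra $1/n$ is absorbed into $r_n$. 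Since $g^\circ$ is a single fixed function, the bound $P_n g^\circ \le 2Pg^\circ + Cr_n$ also follows from Bernstein's inequality alone, with no uniformity needed.

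The main technical point is the first step: obtaining the ratio-type uniform bound with the $\log(n/\VCdim)$ rate rather than $\log n$, and with $\sqrt{Pg}$ localization instead of a global $\sqrt{r_n}$. That step is exactly what the cited local Rademacher result provides for VC classes with $\mathrm{Var}\le P$. If that theorem is not used directly, I would use the fixed point of the sub-root function $\psi(r) = \bbE\,\mathcal{R}_n\{g: Pg^2\le r\}$, which is bounded via Dudley chaining by $\sqrt{r\, \VCdim\log(1/r)/n}$; its fixed point is of order $r_n$. The remaining steps are routine algebra.
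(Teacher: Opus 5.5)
Your proposal is correct and follows the paper's route. The paper's proof is a one-line appeal to \cite[Theorem 5.2]{localrademacher} plus AM-GM; your write-up supplies the step the paper leaves implicit, namely bounding $\min_{g} P_n g$ by $\inf_{g\in\calG} Pg$ through a fixed near-minimizer $g^\circ$ and Bernstein's inequality.
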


\subsubsection{\texorpdfstring{Proof of \Cref{theorem:seq-coverage}}{Proof of Proposition~\ref{theorem:seq-coverage}}}

In conjunction with \Cref{corr:loss-ub} and \Cref{lemma:VCG}, for the predictor $\ghat \gets g_{\pihat}$ where $\pihat \in \Pi$ is the minimizer of $\calL (\pi ; \Dprompt{})$, with probability at least $1-\delta$,
\begin{align} \label{eq:VCloss}
    \bbE_{\bx \sim \rho} \big[ g_{\pihat} (\bx) \big] \lesssim \frac{d \log (T \Cseq) \log (n) + \log(1/\delta)}{n} + \eta + \varepsilon.
\end{align}
Since $g_\pi (\bx) = \bbI \big( \pi_{1:T} (\bx) \not\in \calY^\star (\bx) \big) \ge \bbI \big( \pi_T (\bx) \ne \pi^\star_{T} (\bx) \big)$, this implies that the LHS of \cref{eq:VCloss} is further lower bounded by $\Pr_{\bx \sim \rho} \big( \pi_T (\bx) \ne \pi^\star_T (\bx) \big)$, completing the proof.
\subsection{Autocurriculum for Fine-Tuning a Reference Model: Proof of \Cref{theorem:Vboost}} \label{subsec:Vboost-proof}

The proof of \Cref{theorem:Vboost} largely follows that of \Cref{theorem:CoTboost}, except where we instantiate the base learner $\Alg_\calQ$ as $\Vlearn$ (\Cref{alg:Vlearn}). This will essentially only change the weak learning guarantee of \Cref{lemma:CoT-weak-learner}. All other definitions ($\Pi_j$, $\rho_j^\star$, etc.) are kept as before.

\begin{lemma}
Suppose the event $\abort[j]$ is false in iteration $j \ge 0$. Let the base learner $\Alg_{\RL} (\cdot \| \varepsilon',\delta',T)$ in \Cref{alg:Vboost} be instantiated as $\Vlearn$ (\Cref{alg:Vlearn}). Then, the model \smash{$\pihat^j$} trained in \Cref{alg:Vboost} satisfies with probability at least $1- \frac{\delta}{k}$,
\begin{equation*}
    \Pr_{\bx \sim \rho_j^\star} \left( \pihat^j_T (\bx) \ne \pi^\star_T (\bx) \right) = \err_j \le \err_\star = \frac{1}{4}.
\end{equation*}
\end{lemma}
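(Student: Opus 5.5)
The argument reduces this weak-learning guarantee to \Cref{theorem:seq-coverage}, applied to the reweighted prompt distribution $\rho_j^\star$ rather than $\rho$. There are three steps.

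\textbf{Step 1: the training data are i.i.d.\ from $\rho_j^\star$.} Condition on the history $\calH_{j-1}$, which fixes the ensemble $\Pi_j$ and therefore the weights $w_j$. In \Cref{alg:CoTboost-subsample}, each prompt $\bx \in \Dprompt{j}$ is an independent draw from $\rho$. It is accepted independently with probability $w_j(\bx)/\|w_j\|_\infty$. So each accepted prompt is distributed as $\rho_j^\star$ in \cref{eq:rhojstar}, and accepted prompts are mutually independent.

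The dataset stops growing at the fixed size $\nprompt(\err_\star,\delta/k,T)$. This stopping rule depends only on how many prompts have been accepted, not on which ones. Hence, on the event that $\abort[j]$ is false, $\Dout{j}$ consists of exactly $\nprompt(\err_\star,\delta/k,T)$ i.i.d.\ samples from $\rho_j^\star$. The disjoint split $\Dprompt{j}$ guarantees independence from earlier phases.

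\textbf{Step 2: the hypotheses of \Cref{theorem:seq-coverage} transfer to $\rho_j^\star$.} All of its hypotheses are pointwise in $\bx$:
\begin{itemize}
\item realizability, since $\pi^\star_T(\bx)$ is correct $\rho$-a.s.;
\item uniqueness of answers (\Cref{assump:sparse});
\item sequence-level coverage $\piref(\by^\star_\bx|\bx) \ge \Cseq^{-1}$.
\end{itemize}
The support of $\rho_j^\star$ is contained in that of $\rho$, because $\rho_j^\star$ is a reweighting of $\rho$. So every almost-sure property under $\rho$ also holds under $\rho_j^\star$.

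Even under the partial-coverage version, the uncovered mass could grow by a factor of $1/p_j$. This is why the argument should rely on the full-coverage assumption of \Cref{theorem:Vboost}, under which $\eta = 0$.

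\textbf{Step 3: invoke \Cref{theorem:seq-coverage}.} Apply it with target error $\err_\star = 1/4$, failure probability $\delta/k$, and distribution $\rho_j^\star$. Define $\nprompt(\err_\star,\delta/k,T)$ as the sample-size threshold from that proposition, namely
\[
C_1\big(d\log(T|\Sigma|\Cseq)\log(1/\err_\star)+\log(k/\delta)\big)/\err_\star .
\]
With this definition, the non-abort condition is exactly the hypothesis the proposition requires. It then yields, with probability at least $1-\delta/k$ conditional on $\calH_{j-1}$,
\[
\Pr_{\bx\sim\rho_j^\star}\big(\pihat^j_T(\bx)\ne\pi^\star_T(\bx)\big)\le \tfrac14 .
\]
This uses two facts: $\Vlearn$ returns a model in $\Pi$, and the loss $g_\pi$ upper-bounds the outcome error.

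\textbf{Main obstacle.} The delicate step is Step 1: justifying that conditioning on non-abort, together with truncation at a fixed count, leaves the accepted sample i.i.d.\ from $\rho_j^\star$. This should follow because the acceptance decisions are independent Bernoulli thinnings and the stopping rule is blind to the sample values. Everything else is bookkeeping, including setting the $\nprompt$ threshold from \Cref{theorem:seq-coverage}.
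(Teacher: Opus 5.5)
Your proposal is correct and follows the paper's route. The paper proves this exactly as in \Cref{lemma:CoT-weak-learner}: on the non-abort event, $\Dout{j}$ holds $\nprompt(\err_\star,\delta/k,T)$ prompts drawn from $\rho_j^\star$ by rejection sampling, and the guarantee of $\Vlearn$ from \Cref{theorem:seq-coverage} then applies. Your extra checks fill in details the paper leaves implicit: that the count-based stopping rule preserves the i.i.d.\ $\rho_j^\star$ law of the accepted prompts, and that coverage, realizability and unique answers transfer to $\rho_j^\star$ because it is a reweighting of $\rho$.
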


\noindent What remains is to analyze the sample and computational complexity of \Cref{alg:Vboost}. We begin with the cost of generating the datasets $\big\{ \Dout{j} : 0 \le j \le k-1 \big\}$.

\paragraph{Cost of generating datasets $\big\{ \Dout{j} : 0 \le j \le k-1 \big\}$.} 

\begin{enumerate}
\item \textit{Length-$T$ CoTs generated from $\piref$.} Across the $k$ invocations of $\Vlearn$, the number of length-$T$ CoTs generated from $\piref$ is \smash{$\sum_{j=0}^{k-1} m \times |\Dout{j}|$}. With the bound on $\nprompt (\varepsilon',\delta',T)$ in \Cref{theorem:seq-coverage},
\begin{align*}
    k &= \calO(\log(1/\varepsilon)) \\
    m &= \calO(\Cseq \log(n\Cseq/\delta)) \\
    |\Dout{j}| &\le \calO (d \log(T\Cseq) + \log(k/\delta)),
\end{align*}
the number of calls to $\piref$ is upper bounded by,
\begin{equation*}
    d \Cseq \cdot \polylog(\Cseq,\varepsilon^{-1},\delta^{-1},T)
\end{equation*}
\item \textit{Length-$T$ CoTs generated from learner's models.} The number of length-$T$ CoTs generated from $\pihat^j$ is at most $|\Dprompt{}|$, which implies that the total number of CoTs generated across all models trained by the learner is upper bounded by $k \times |\Dprompt{}|$,
\begin{equation*}
    \frac{d}{\varepsilon} \cdot \polylog(\Cseq,\varepsilon^{-1},\delta^{-1},T)
\end{equation*}
Bearing only polylogarithmic dependency on the sequence-level coverage $\Cseq$.
\item \textit{Number of calls to the outcome verifier $\calV$.} Across the $k$ invocations of $\Vlearn$, the outcome verifier is called once for every call to $\piref$. Furthermore, in constructing the intermediate learning distributions $\rho_j^\star$, the learner calls the verifier $m \times |\Dprompt{}|$ times. This implies that the overall number of calls to $\calV$ also scales as,
\begin{equation*}
    d \Cseq \cdot \polylog(\Cseq,\varepsilon^{-1},\delta^{-1},T) + \frac{d}{\varepsilon} \cdot \polylog(\Cseq,\varepsilon^{-1},\delta^{-1},T)
\end{equation*}
\end{enumerate}

\paragraph{Computation spent in running $\Vlearn$.}
Running $\Vlearn$ requires a single optimization call to minimize the loss in \cref{eq:loss-Vlearn} over a dataset of size $n$. $\Vboost$ makes $k = \calO(\log(1/\varepsilon))$ calls to this oracle. However, each call is on a much smaller dataset than $\Vlearn$ would require without autocurriculum. In particular, each \smash{$\pihat^j$} is trained on a prompt dataset \smash{$\Dout{j}$} of size at most \smash{$\calO ( d \log (T \Cseq) + \log(k/\delta) )$}.

\newpage
\section{Proofs for Supporting Lemmas}

\subsection[Proofs for Lemmas from \Cref{theorem:CoTboost}]{Proofs for Lemmas from \Cref{theorem:CoTboost}} \label{sec:CoTboost-proofs}

\subsubsection[Proof of \Cref{lemma:CoT-breakdown}]{Proof of \Cref{lemma:CoT-breakdown}} \label{subsec:CoT-breakdown-proof}

For $j \ge 0$, recall by definition,
\begin{align*}
    \Phi_{j+1} &= \sum_{r=0}^{j+1} \beta^{j+1,k}_r \cdot \Pr_{\bx \sim \rho} \big( \rank_{j+1} (\bx) = r \big).
\end{align*}
Note that $\rank_{j+1} (\bx) = r$ is only possible if $\rank_j (\bx) = r$ or $\rank_j (\bx) = r-1$. Likewise, if $\rank_j (\bx) = r$, then $\rank_{j+1} (\bx) \ne r \implies \rank_{j+1} (\bx) = r+1$. With this, we decompose the above expression as,
\begin{align}
    \Phi_{j+1} &= \sum_{r=0}^{j} \beta^{j+1,k}_r \cdot \Pr_{\bx \sim \rho} \big( \rank_j (\bx) = r \big) \nonumber\\
    &\qquad - \sum_{r=0}^{j} \beta^{j+1,k}_r \cdot \Pr_{\bx \sim \rho} \big( \rank_{j+1} (\bx) = r+1 \text{ and } \rank_j (\bx) = r \big) \nonumber\\
    &\qquad + \sum_{r=1}^{j+1} \beta^{j+1,k}_r \cdot \Pr_{\bx \sim \rho} \big( \rank_{j+1} (\bx) = r \text{ and } \rank_j (\bx) = r-1 \big) \nonumber\\
    &= \sum_{r=0}^{j} \beta^{j+1,k}_r \cdot \Pr_{\bx \sim \rho} \big( \rank_j (\bx) = r \big) \nonumber\\
    &\qquad - \sum_{r=0}^{j} \big( \underbrace{\beta^{j+1,k}_r - \beta^{j+1,k}_{r+1}}_{\alpha^{j,k}_r} \big) \cdot \Pr_{\bx \sim \rho} \big( \underbrace{\rank_{j+1} (\bx) = r+1 \text{ and } \rank_j (\bx) = r}_{\equiv \big\{ \rank_j (\bx) \,=\, r \text{ and } \pihat^j_T (\bx) \,=\, \pi^\star_T (\bx) \big\}} \big)  \label{eq:100101010100}
\end{align}
The second term on the RHS of the above equation can be further decomposed as,
\begin{align*}
    &\sum_{r=0}^j \alpha^{j,k}_r \cdot \Pr_{\bx \sim \rho} \big( \rank_j (\bx) = r \text{ and } \pihat^j_T (\bx) = \pi^\star_T (\bx) \big) \\
    &= \bbE_{\bx \sim \rho} \big[ \alpha^{j,k}_{\rank_j (\bx)} \cdot \bbI \big( \pihat^j_T (\bx) = \pi^\star_T (\bx) \big) \big]   \\
    &\overset{(a)}{=} \bbE_{\bx \sim \rho} \big[ \alpha^{j,k}_{\rank_j (\bx)} \big] \cdot \Pr_{\bx \sim \rho_j^\star} \big( \pihat^j_T (\bx) = \pi^\star_T (\bx) \big) \\
    &= (1-\err_j) \times \sum_{r=0}^j \alpha^{j,k}_r \cdot \Pr_{\bx \sim \rho} \big( \rank_j (\bx) = r \big)
\end{align*}
where in $(a)$, we use the definition of $\rho_j^\star$, which is the distribution proportional to $\rho (\cdot) w_j (\cdot)$. In the final equation in the sequence, we use the definition of $\err_j$ in \cref{eq:CoT-weak-learner}. Combining back with \cref{eq:100101010100}, noting that \smash{$\alpha^{j,k}_r = \beta^{j+1,k}_r - \beta^{j+1,k}_{r+1}$} and \smash{$\beta^{j,k}_r = \err_\star \cdot \beta^{j+1,k}_r + \left( 1 - \err_\star \right) \cdot \beta^{j+1,k}_{r+1}$} where $\err_\star = \frac{1}{4}$,
\begin{align*}
    \Phi_{j+1} &= \sum_{r=0}^{j} \Big( \err_\star \cdot \beta^{j+1,k}_r + \left( 1 - \err_\star \right) \cdot \beta^{j+1,k}_{r+1} \Big) \cdot \Pr_{\bx \sim \rho} \big( \rank_j (\bx) = r \big) \\
    &\qquad + (\err_j - \err_\star) \sum_{r=0}^j \alpha^{j,k}_r \cdot \Pr_{\bx \sim \rho} \big( \rank_j (\bx) = r \big) \\
    &= \sum_{r=0}^{j} \beta^{j,k}_r \cdot \Pr_{\bx \sim \rho} \big( \rank_j (\bx) = r \big) + (\err_j - \err_\star) \cdot \bbE_{\bx \sim \rho} \big[ \alpha^{j,k}_{\rank_j(\bx)} \big] \\
    &= \Phi_j + (\err_j - \err_\star) \cdot \bbE_{\bx \sim \rho} \big[ \alpha^{j,k}_{\rank_j(\bx)} \big],
\end{align*}
Summing from $j=0$ to $j=k-1$, we arrive at the equation,
\begin{equation*}
    \Phi_k = \Phi_0 + \sum_{j=0}^{k-1} (\err_j - \err_\star) \cdot \bbE_{\bx \sim \rho} \big[ \alpha^{j,k}_{\rank_j(\bx)} \big]
\end{equation*}
Finally plugging in the explicit formula for $\Phi_0$ and $\Phi_k$ completes the proof.

\subsubsection{Proof of \Cref{lemma:CoT-weak-learner}} \label{subsec:CoT-weak-learner-proof}

Recall from \Cref{theorem:prior}, that the learning algorithm $\Alg (\cdot \| \varepsilon',\delta',\CoT)$ has sample complexity $\nprompt (\varepsilon',\delta',T)$. Assuming that $\abort[j]$ is false in iteration $j$, the size of the dataset $\Dout{j}$ is larger than $\nprompt (\err_\star,\delta/k,T)$. This implies that with probability $1-\frac{\delta}{k}$ the model \smash{$\pihat^j$} trained in iteration $j$ satisfies,
\begin{align}
    \Pr_{\bx \sim \rho_j^\star} \Big( \pihat^j_T (\bx) \ne \pi^\star_T (\bx) \Big) = \err_j \le \err_\star = \frac{1}{4}.
\end{align}
This equation uses the fact that prompts in $\Dout{j}$ fed into the base learner $\Alg (\cdot \| \varepsilon',\delta',\CoT)$ are sampled from the distribution $\rho_j^\star$ (\cref{eq:rhojstar}), by rejection sampling from $\rho$.

\subsubsection{Proof of \Cref{lemma:CoT-truthful-abort}} \label{subsec:CoT-truthful-abort-proof}

By definition of $p_j$ (\cref{eq:pj}), and by the structure of the $\subsample$ subroutine (\Cref{alg:CoTboost-4} in \Cref{alg:CoTboost}), the size of the dataset \smash{$|\Dout{j}|$} which the model $\pihat^j$ trains on, can be expressed as the sum of \smash{$n' = |\Dprompt{j}| = |\Dprompt{}|/k$} i.i.d. Bernoulli random variables, each with mean $p_j$. Indeed, $p_j$ is the probability that $\bx \sim \rho$ is accepted into the dataset \smash{$\Dout{j}$}. By an application of the multiplicative Chernoff bound,
\begin{equation*}
    \Pr \left( |\Dout{j}| \le \frac{n' p_j}{2} \ \middle| \ \calH_{j-1} \right) \le \exp \left( -\frac{n' p_j}{8} \right)
\end{equation*}
By the sufficiently large choice of $|\Dprompt{}| = n' k$, when $p_j \ge \frac{\varepsilon}{4 \sqrt{k}}$, we have that,
\begin{enumerate}
    \item $n' p_j \ge 8 \log(k/\delta)$, and,
    \item $n' p_j \ge 2 \nprompt (\err_\star, \delta/k, T)$.
\end{enumerate}
Together with the definition of $\abort[j]$ in \cref{eq:CoT-abort}, these imply that,
\begin{equation*}
    \Pr ( \abort[j] \mid \calH_{j-1} ) \le \frac{\delta}{k}
\end{equation*}

\subsection[Proofs for Lemmas from \Cref{theorem:CoTboost-stochastic}]{Proofs for Lemmas from \Cref{theorem:CoTboost-stochastic}} \label{sec:CoTboost-stochastic-proofs}

\subsubsection[Proof of \Cref{lemma:CoT-breakdown-stochastic}]{Proof of \Cref{lemma:CoT-breakdown-stochastic}} \label{subsec:CoT-breakdown-stochastic-proof}

For $j \ge 0$, recall by definition,
\begin{align*}
    \tPhi_{j+1} &= \sum_{r=0}^{j+1} \tbeta^{j+1,k}_r \cdot \Pr_{\rho} \big( \wrank_{j+1} (\bx) = r \ \big| \ \Pi_{j+1} \big).
\end{align*}
Note that the condition $\wrank_{j+1} (\bx) = \sum_{\pi \in \Pi_{j+1}} \bbI \big( \wacc_\bx (\pi) \ge \frac{9}{10} \big) = r$ is only possible in one of two cases, either,
\begin{equation*}
    \sum_{\pi \in \Pi_j} \bbI \Big( \wacc_\bx (\pi) \ge \frac{9}{10} \Big) = r \text{ or }\sum_{\pi \in \Pi_j} \bbI \Big( \wacc_\bx (\pi) \ge \frac{9}{10} \Big) = r-1
\end{equation*}
Likewise, if \smash{$\wrank_j (\bx) = r$}, then $\wrank_{j+1} (\bx) = \sum_{\pi \in \Pi_{j+1}} \bbI \big( \wacc_\bx (\pi) \ge \frac{9}{10} \big)\ne r \implies \wrank_{j+1} (\bx) = r+1$. With this, we decompose the above expression as,
\begin{align}
    \tPhi_{j+1} &= \sum_{r=0}^{j} \tbeta^{j+1,k}_r \cdot \Pr_{\rho} \big( \wrank_j (\bx) = r \ \big| \ \Pi_{j+1} \big) \nonumber\\
    &\qquad - \sum_{r=0}^{j} \tbeta^{j+1,k}_r \cdot \Pr_{\rho} \big( \wrank_{j+1} (\bx) = r+1 \text{ and } \wrank_j (\bx) = r \ \big| \ \Pi_{j+1} \big) \nonumber\\
    &\qquad + \sum_{r=1}^{j+1} \tbeta^{j+1,k}_r \cdot \Pr_{\rho} \big( \wrank_{j+1} (\bx) = r \text{ and } \wrank_j (\bx) = r-1 \ \big| \ \Pi_{j+1} \big) \nonumber\\
    &= \sum_{r=0}^{j} \tbeta^{j+1,k}_r \cdot \Pr_{\rho} \big( \wrank_j (\bx) = r \ \big| \ \Pi_j \big) \nonumber\\
    &\qquad - \sum_{r=0}^{j} \big( \underbrace{\tbeta^{j+1,k}_r - \tbeta^{j+1,k}_{r+1}}_{\talpha^{j,k}_r} \big) \cdot \Pr_{\rho} \big( \underbrace{\wrank_{j+1} (\bx) = r+1 \text{ and } \wrank_j (\bx) = r}_{\equiv \big\{ \wrank_j (\bx) \,=\, r \text{ and } \wacc_\bx ( \pihat^j ) \ge \frac{9}{10} \big\}} \ \big| \ \Pi_{j+1} \big)  \label{eq:10010101010001}
\end{align}
The last equation uses the fact that $\wrank_j$ only depends on the models in $\Pi_j$. The second term on the RHS of the above equation can be further decomposed as,
\begin{align*}
    &\sum_{r=0}^j \talpha^{j,k}_r \cdot \Pr_{\rho} \Big( \wrank_j (\bx) \,=\, r \text{ and } \wacc_\bx \big( \pihat^j \big) \ge \frac{9}{10} \ \Big| \ \Pi_{j+1} \Big) \\
    &= \bbE_{\rho} \Big[ \talpha^{j,k}_{\wrank_j (\bx)} \cdot \bbI \Big( \wacc_\bx \big( \pihat^j \big) \ge \frac{9}{10} \Big) \ \Big| \ \Pi_{j+1} \Big] \\
    &\overset{(a)}{=} \bbE_\rho \Big[ \talpha^{j,k}_{\wrank_j (\bx)} \ \Big| \ \Pi_{j+1} \Big] \cdot \Pr_{\trho_j^\star} \Big( \wacc_\bx \big( \pihat^j \big) \ge \frac{9}{10} \ \Big| \ \pihat^j \Big) \\
    &\overset{(b)}{=} (1-\err_j) \times \sum_{r=0}^j \talpha^{j,k}_r \cdot \Pr_{\rho} \big( \wrank_j (\bx) = r \ \big| \ \Pi_j \big)
\end{align*}
where in $(a)$, we use several facts. Firstly that $\wrank_j$ only depends on the models in $\Pi_j$, so we can change the conditioning from $\Pi_{j+1} \to \Pi_j$. Secondly, recalling that $\trho_j^\star$ (cf. \cref{eq:trhojstar}) is the distribution satisfying \smash{$\trho_j^\star (\bx) \propto \rho (\bx) \cdot \bbE [ \tw_j (\bx) \mid \bx, \Pi_j]$}, for any (possibly randomized) test function, $g (\cdot)$, by definition of $\tw_j$,
\begin{equation*}
    \bbE_{\bx \sim \trho_j^\star} [ g (\bx) \mid g ] = \bbE_{\bx \sim \rho} [ g(\bx) \cdot \bbE [ \tw_j (\bx) \mid \bx, \Pi_j] \mid g ] = \bbE_{\rho} \Big[ g(\bx) \cdot \talpha^{j,k}_{\wrank_j (\bx)} \ \Big| \ g, \Pi_j \Big]
\end{equation*}
which is also used within $(a)$. In $(b)$ we use the definition of $\err_j$ from \cref{lemma:CoT-weak-learner-stochastic}. Combining back with \cref{eq:10010101010001}, noting that \smash{$\talpha^{j,k}_r = \tbeta^{j+1,k}_r - \tbeta^{j+1,k}_{r+1}$}, and the recursion for \smash{$\tbeta^{j,k}_r$} in \cref{eq:talpha} and rearranging,
\begin{align*}
    \tPhi_{j+1} &= \sum_{r=0}^{j} \Big( \err_\star \cdot \tbeta^{j+1,k}_r + \left( 1 - \err_\star \right) \cdot \tbeta^{j+1,k}_{r+1} \Big) \cdot \Pr_{\rho} \big( \wrank_j (\bx) = r \ \big| \ \Pi_j \big) \\
    &\qquad + (\err_j - \err_\star) \sum_{r=0}^j \talpha^{j,k}_r \cdot \Pr_{\rho} \big( \wrank_j (\bx) = r \ \big| \ \Pi_j \big) \\
    &= \sum_{r=0}^{j} \tbeta^{j,k}_r \cdot \Pr_{\rho} \big( \wrank_j (\bx) = r \ \big| \ \Pi_j \big) + (\err_j - \err_\star) \cdot \bbE_{\rho} \Big[ \talpha^{j,k}_{\wrank_j(\bx)} \ \Big| \ \Pi_j \Big] \\
    &= \tPhi_j + (\err_j - \err_\star) \cdot \bbE_{\rho} \Big[ \talpha^{j,k}_{\wrank_j(\bx)} \ \Big| \ \Pi_j \Big],
\end{align*}
Summing from $j=0$ to $j=k-1$, we arrive at the equation,
\begin{equation*}
    \tPhi_k = \tPhi_0 + \sum_{j=0}^{k-1} (\err_j - \err_\star) \cdot \bbE_{\rho} \Big[ \talpha^{j,k}_{\wrank_j(\bx)} \ \Big| \ \Pi_j \Big]
\end{equation*}
Finally plugging in the explicit formula for $\tPhi_0$ and $\tPhi_k$ completes the proof.

\subsubsection[Proof of \Cref{lemma:CoT-weak-learner-stochastic}]{Proof of \Cref{lemma:CoT-weak-learner-stochastic}} \label{subsec:CoT-weak-learner-stochastic-proof}

Recall from \Cref{theorem:prior}, that the learning rule $\Alg (\cdot \| \varepsilon',\delta',\CoT)$ has sample complexity $\nprompt (\varepsilon',\delta',T)$. Assuming that $\abort[j]$ is false in iteration $j$, the size of the dataset $\Dout{j}$ is larger than $\nprompt (1/400,\delta/k,T)$. This implies that with probability $1-\frac{\delta}{k}$ the model \smash{$\pihat^j$} trained in iteration $j$ satisfies,
\begin{align} \label{eq:trhojstaracc}
    \acc_{\trho_j^\star} \big( \pihat^j_T \big) \ge \frac{399}{400}.
\end{align}
This inequality uses the fact that prompts in the dataset $\Dout{j}$ fed into the base learner $\Alg (\cdot \| \varepsilon',\delta',\CoT)$ are drawn from the distribution $\trho_j^\star$ (\cref{eq:trhojstar}) via rejection sampling from $\rho$. By an application of Markov's inequality, \cref{eq:trhojstaracc} translates into the following guarantee on \smash{$\pihat^j$},
\begin{equation} \label{eq:1920}
    \Pr_{\bx \sim \trho_j^\star} \Big( \acc_{\bx} \big( \pihat^j \big) \ge \frac{19}{20} \Big) \ge \frac{19}{20}.
\end{equation}
Finally, we will use this to show that with some slack across both constants in the above inequality, we have,
\begin{equation*}
    \Pr_{\bx \sim \trho_j^\star} \Big( \wacc_{\bx} \big( \pihat^j \big) \ge \frac{9}{10} \Big) \ge \frac{9}{10},
\end{equation*}
which is the statement of the lemma. In order to show this, we first argue that for any $\bx \in \calX$,
\begin{equation} \label{eq:acc-wacc}
    \Pr \Big( \wacc_{\bx} \big( \pihat^j \big) \ge \frac{9}{10} \ \Big| \  \bx,\pihat^j,\ \acc_{\bx} \big( \pihat^j \big) \ge \frac{19}{20} \Big) \ge \frac{19}{20}
\end{equation}
Note that $\wacc_{\bx}$ is computed as a Monte Carlo estimate, and is thereby an average of $m$ Bernoulli random variables each having mean $\acc_\bx (\pihat^j) \ge \frac{19}{20}$. By Chernoff bound, as long as $m$ is a sufficiently large constant (which it is chosen to satisfy within \Cref{alg:CoTboost-stochastic}), \cref{eq:acc-wacc} holds with probability at least $\frac{19}{20}$. Taking an expectation on both sides of \cref{eq:acc-wacc},
\begin{equation*}
    \Pr_{\bx \sim \trho_j^\star} \Big( \wacc_{\bx} \big( \pihat^j \big) \ge \frac{9}{10} \Big) \ge \frac{19}{20} \times \Pr_{\bx \sim \trho_j^\star} \Big( \acc_{\bx} \big( \pihat^j \big) \ge \frac{19}{20} \Big) \overset{(a)}{\ge} \left( \frac{19}{20} \right)^2 \ge \frac{9}{10}
\end{equation*}
where $(a)$ follows from \cref{eq:1920}. This completes the proof.

\subsubsection[Proof of \Cref{lemma:CoT-truthful-abort-stochastic}]{Proof of \Cref{lemma:CoT-truthful-abort-stochastic}} \label{subsec:CoT-truthful-abort-stochastic-proof}

By definition of $\tp_j$ (\cref{eq:tpj}), and by the structure of the $\subsample$ subroutine (\Cref{alg:CoTboost-4} in \Cref{alg:CoTboost}), the size of the dataset \smash{$|\Dout{j}|$} which the model \smash{$\pihat^j$} is trained on, can be expressed as the sum of \smash{$n' = |\Dprompt{j}| = |\Dprompt{}|/k$} i.i.d. Bernoulli random variables, each with mean $\tp_j$. Indeed, $\tp_j$ is the probability that $\bx \sim \rho$ is accepted into the dataset \smash{$\Dout{j}$} (cf. \Cref{alg:CoTboost-subsample-7} of \Cref{alg:CoTboost-subsample}). By an application of the multiplicative Chernoff bound,
\begin{equation*}
    \Pr \left( |\Dout{j}| \le \frac{n' \tp_j}{2} \ \middle| \ \calH_{j-1} \right) \le \exp \left( -\frac{n' \tp_j}{8} \right)
\end{equation*}
By the sufficiently large choice of $|\Dprompt{}| = n'k$ in the statement of this lemma, when $\tp_j \ge \frac{\varepsilon}{16 \sqrt{k}}$,
\begin{enumerate}
    \item $n' \tp_j \ge 8 \log(k/\delta)$, and,
    \item $n' \tp_j \ge \nprompt \big( 1/400, \delta/k, T, \CoT \big)$.
\end{enumerate}
Together with the definition of $\tabort[j]$ in \cref{eq:CoT-abort-stochastic}, these inequalities imply,
\begin{equation*}
    \Pr ( \tabort[j] \mid \calH_{j-1} ) \le \frac{\delta}{k}
\end{equation*}

\subsubsection[Proof of \Cref{lemma:acc-to-apx-acc}]{Proof of \Cref{lemma:acc-to-apx-acc}} \label{subsec:acc-to-apx-acc-proof}

Let $\calE$ denote the event that $\sum\nolimits_{\pi \in \Pi_k} \bbI \big( \acc_\bx (\pi) \ge \frac{4}{5} \big) \le \frac{3k}{4}$. Let $Z = \sum\nolimits_{\pi \in \Pi_k} \bbI \big( \wacc_\bx (\pi) \ge \frac{9}{10} \big)$. Under the event $\calE$, for at most $\frac{3k}{4}$ choices of $\pi \in \Pi_k$, we have that $\acc_\bx (\pi) \ge \frac{4}{5}$. This implies that,
\begin{equation*}
    \bbE [Z \mid \bx,\Pi_k,\calE] = \bbE \Big[ \sum\nolimits_{\pi \in \Pi_k} \bbI \Big( \wacc_\bx (\pi) \ge \frac{9}{10} \Big) \ \Big| \ \bx, \Pi_k, \calE \Big] \le \frac{3k}{4} + \frac{k}{4} \times \frac{1}{10} = \frac{31k}{40}
\end{equation*}
where the last equation follows from the choice of $m$ within the Monte Carlo estimate in \Cref{alg:CoTboost-stochastic} being a sufficiently large constant, so that $\Pr ( \wacc_\bx (\pi) \ge \frac{9}{10} | \bx,f, \acc_\bx (\pi) \le \frac{4}{5} ) \le \frac{1}{10}$. 
Furthermore, note that $\wacc_\bx (\pi)$ is independent across $\pi \in \Pi_k$ conditioned on $\bx$ and $\Pi_k$, which implies that,
\begin{equation*}
    \Var [Z \mid \bx,\Pi_k,\calE] = \Var \Big[ \sum\nolimits_{\pi \in \Pi_k} \bbI \Big( \wacc_\bx (\pi) \ge \frac{9}{10} \Big) \ \Big| \ \bx, \Pi_k, \calE \Big] \le \frac{31k}{40}
\end{equation*}
Noting that the variance of a sum of independent Bernoulli random variables is upper bound by its mean. Therefore, by an application of Chebyshev's inequality,
\begin{equation*}
    \Pr \left( Z \le \bbE [Z] + 2\sqrt{\Var[Z]} \ \middle| \ \bx, \Pi_k, \calE \right) \ge \frac{1}{2}
\end{equation*}
Plugging in the upper bound on $\bbE [Z]$ and $\Var [Z]$, and choosing $k$ to be at least a sufficiently large absolute constant so that \smash{$\bbE [Z] + 2\sqrt{\Var[Z]} \le \frac{31k}{40} + 2 \sqrt{\frac{31k}{40}} \le \frac{4k}{5}$}, we have that,
\begin{equation*}
    \Pr \left( \sum\nolimits_{\pi \in \Pi_k} \bbI \Big( \wacc_\bx (f) \ge \frac{9}{10} \Big) \le \frac{4k}{5} \ \middle| \ \bx, \Pi_k, \calE \right) \ge \frac{1}{2}
\end{equation*}
Finally, multiplying both sides by $\Pr (\calE|\bx,\Pi_k)$ and taking an expectation over $\bx \sim \rho$, we have that,
\begin{equation*}
    \Pr_{\bx \sim \rho} ( \calE | \Pi_k ) \le 2 \cdot \Pr_{\rho} \left( \sum\nolimits_{\pi \in \Pi_k} \bbI \Big( \wacc_\bx (\pi) \ge \frac{9}{10} \Big) \le \frac{4k}{5} \ \middle| \ \Pi_k \right)
\end{equation*}
Completing the proof of the result.

\subsubsection[Proof of \Cref{lemma:CoT-alphamax-sum-stochastic}]{Proof of \Cref{lemma:CoT-alphamax-sum-stochastic}} \label{subsec:CoT-alphamax-sum-stochastic-proof}

Let $Z_1,\cdots,Z_k$ denote a sequence of $k$ biased coins with probability of heads equal to $1- \err_\star = \frac{9}{10}$. By following the same argument as in \Cref{lemma:beta00}, we have an explicit form for $\tbeta^{j,k}_r$ as equal to $\Pr \big(S_k \le \frac{4k}{5} \ \big| \ S_j = r \big)$ where $S_j = \sum_{i=1}^j \bbI (Z_i = \texttt{H})$. Then,
\begin{align}
    \talpha^{j,k}_{\max} &= \max_{0 \le r \le j} \big( \tbeta^{j+1,k}_{r} - \tbeta^{j+1,k}_{r+1} \big) \nonumber\\
    &\le \max_{0 \le r \le j} \Pr \Big( S_k \le \frac{4k}{5} \ \Big| \ S_{j+1} = r \Big) - \Pr \Big( S_k \le \frac{4k}{5} \ \Big| \ S_{j+1} = r+1 \Big) \nonumber\\
    &\overset{(a)}{=} \max_{0 \le r \le j} \Pr \Big( S_k - S_{j+1} \le \frac{4k}{5} - r \Big) - \Pr \Big( S_k - S_{j+1} \le \frac{4k}{5} - (r+1) \Big) \nonumber\\
    &= \max_{0 \le r \le j} \Pr \Big( S_k - S_{j+1} = \frac{4k}{5} - r \Big) \label{eq:012312}\\
    &\overset{(b)}{\le} \frac{1}{\sqrt{2 \pi \cdot \err_\star (1-\err_\star) \cdot (k-j-1)}} \nonumber\\
    &\le \frac{2}{\sqrt{k-j-1}} \nonumber
\end{align}
where $(a)$ uses the fact that $S_k - S_{j+1}$ and $S_{j+1}$ are independent, while $(b)$ uses a standard upper bound on the Binomial PMF using the Stirling approximation for $j < k-1$. When $j=k-1$, \cref{eq:012312} gives us an upper bound of $1$.
\subsection{Proofs for Lemmas from \Cref{theorem:seq-coverage}}

\subsubsection{Proof of \Cref{lemma:VCG}} 

\label{subsec:VCG-proofs}
Recall the definition, $\calY (\bx) = \big\{ \by \in \calY : \piref (\by|\bx) \ge \Cseq^{-1} \big\}$. By the pigeonhole principle, $|\calY (\bx)| \le \Cseq$. With this, we may rewrite $\calB_\calG (D)$ as,
\begin{align*}
    \calB_\calG (D) = \left\{ \Big( \mathbb{I} \big( \pi_T (\bx_i) = \pi^\star_T (\bx_i) \big) \cdot \mathbb{I} \big( \pi_{1:T} (\bx_i) \in \calY (\bx_i) \big) : i \in [n] \Big) : \pi \in \Pi \right\}
\end{align*}
Now, define the following list of tables, $\calT (D)$:
\begin{align*}
    \calT (D) = \left\{ \left( \pi(\bx_i, \by^i_{1:t-1}) : t \in [T], i \in [n], \by^i \in \calY (\bx_i) \right) : \pi \in \Pi \right\}
\end{align*}
We will prove two claims:

\begin{claim} \label{claim:1}
$|\calT (D)| \le (e n T |\Sigma| \Cseq )^d$ where $d = \Ndim (\Pi)$ is the Natarajan dimension of $\Pi$.
\end{claim}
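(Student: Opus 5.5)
The plan is to identify $\calT(D)$ with the restriction of $\Pi$, viewed as a multiclass function class, to a finite set of inputs, and then to apply a Sauer--Shelah type counting bound for Natarajan dimension. Let
\begin{equation*}
S = \big\{ (\bx_i, \by^i_{1:t-1}) : i \in [n],\ t \in [T],\ \by^i \in \calY(\bx_i) \big\}.
\end{equation*}
Since $|\calY(\bx_i)| \le \Cseq$ by pigeonhole (as noted above), we have $N := |S| \le n T \Cseq$. Each row of $\calT(D)$ is determined by the restriction $\pi|_S : S \to \Sigma$, because repeated prefixes give identical entries. Hence $|\calT(D)| \le |\Pi|_S|$. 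Moreover, $\Pi|_S$ has Natarajan dimension at most $d = \Ndim(\Pi)$, since a set $S' \subseteq S$ Natarajan-shattered by $\Pi|_S$ is also shattered by $\Pi$. So it suffices to show that any class $H \subseteq \Sigma^S$ with $\Ndim(H) \le d$ satisfies $|H| \le (e N |\Sigma|)^d$.

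I would aim for the sharpened Sauer-type bound
\begin{equation*}
|H| \le \sum_{i=0}^{d} \binom{N}{i} (|\Sigma|-1)^i .
\end{equation*}
Using $\binom{N}{i} \le N^i/i!$ and $\sum_{i \le d} (N(|\Sigma|-1))^i/i! \le (e N |\Sigma|/d)^d$ for $N|\Sigma| \ge d$, this gives $(e N |\Sigma|)^d \le (e n T |\Sigma| \Cseq)^d$. The bound is tight in the trivial case $N = 1$, $d = 1$, where $H = \Sigma$ has size $|\Sigma|$.

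To prove it, I would run the Haussler--Long induction on $N$. Fix a point $s_0 \in S$ and write $S' = S \setminus \{s_0\}$. Group the functions in $H$ by their restriction to $S'$. Each restriction $h' \in H|_{S'}$ extends in $m(h') \ge 1$ ways, and
\begin{equation*}
|H| = |H|_{S'}| + \sum_{h'} \big( m(h') - 1 \big).
\end{equation*}
The first term is bounded by induction with $N-1$ points and dimension $d$. For the excess, I would decompose it as
\begin{equation*}
\sum_{h'} \big( m(h') - 1 \big) = \sum_{a \ne a_0} \big| \{ h' : \text{some extension takes value } a \text{ at } s_0 \} \big|
\end{equation*}
after choosing a reference label $a_0(h')$ for each $h'$. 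For each fixed label $a$, the subclass of restrictions $h'$ with extensions at both $a$ and $a_0(h')$ should have Natarajan dimension at most $d-1$ on $S'$, since any set it shatters extends to a set shattered together with $s_0$. That would give a contribution of $(|\Sigma|-1) \sum_{i \le d-1} \binom{N-1}{i} (|\Sigma|-1)^i$, which closes the induction via Pascal's rule.

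The main obstacle is exactly this last step. The Natarajan witness at $s_0$ needs two fixed labels $a \ne a_0$, but the reference $a_0(h')$ varies with $h'$. So the subclasses must be indexed by pairs $(a, a_0)$ rather than by $a$ alone, and the naive count yields $\binom{|\Sigma|}{2}$ in place of $|\Sigma|-1$, which is the $|\Sigma|^{2d}$ loss. To avoid it, I would first try choosing a single global reference label $a_0$ that is simultaneously valid for every $h'$ whose extensions at $s_0$ include $a_0$. I would then handle the remaining $h'$, those with $m(h') \ge 2$ but no extension labeled $a_0$, by a shifting (compression) argument that relabels their $s_0$-values toward $a_0$ without increasing the Natarajan dimension. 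If this sharpening fails, the fallback is the Haussler--Long bound $(e N |\Sigma|^2)^d$. That bound differs only in the $\log|\Sigma|$ factor and leaves the consequence $\VCdim(\calG) \lesssim d \log(T|\Sigma|\Cseq)$ of \Cref{lemma:VCG} unchanged up to constants.
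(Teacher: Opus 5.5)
Your reduction matches the paper's. Each row of $\calT(D)$ is the restriction of $\pi$ to the at most $nT\Cseq$ prefixes $(\bx_i,\by^i_{1:t-1})$, restriction does not increase Natarajan dimension, and the paper then simply cites the Haussler--Long generalization of Sauer--Shelah.

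The gap is in your main route. The sharpened bound $|H|\le\sum_{i=0}^{d}\binom{N}{i}(|\Sigma|-1)^i$ is false for Natarajan dimension, so neither a global reference label nor a shifting argument can close your induction. Take $N=2$ and $d=1$. A class $H\subseteq\Sigma\times\Sigma$ Natarajan-shatters the two-point set exactly when it contains a full rectangle $\{a_1,b_1\}\times\{a_2,b_2\}$ with $a_1\neq b_1$ and $a_2\neq b_2$. Hence $\Ndim(H)\le 1$ if and only if $H$, read as a bipartite graph between two copies of $\Sigma$, has no $4$-cycle. The point--line incidence graph of a projective plane of order $q$, with $|\Sigma|=q^2+q+1$, is $C_4$-free. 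It has $(q+1)(q^2+q+1)\approx|\Sigma|^{3/2}$ edges, far more than your $2|\Sigma|-1$. So the loss you hit when indexing subclasses by label pairs is not a proof artifact: superlinear dependence on $|\Sigma|$ per dimension is unavoidable.

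The same example shows the claim as printed, with $|\Sigma|$ rather than $|\Sigma|^2$, cannot be proved. Take $n=1$, $T=2$ and $\Cseq=1$, so $\calY(\bx_1)$ is a single sequence and each row of the table has just the two entries $\pi(\bx_1)$ and $\pi(\bx_1,y^1_1)$. Let $\Pi$ realize the incidence graph of the plane of order $5$ on these two inputs and be fixed elsewhere. Then $\Ndim(\Pi)=1$, but $|\calT(D)|=186>2e\cdot 31$.

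What Haussler--Long actually gives is $\sum_{i\le d}\binom{N}{i}\binom{|\Sigma|}{2}^i\le(enT\Cseq|\Sigma|^2)^d$, which is exactly your fallback. So your fallback is the correct form of the claim, and it is what the paper's cited argument really proves. As you note, it only doubles the $\log|\Sigma|$ term in \Cref{lemma:VCG}, which is harmless downstream. Drop the sharpening attempt and state the bound with $|\Sigma|^2$.
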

\begin{proof}
$\calT (D)$ captures the number of ways in which $\Pi$ labels a fixed set of $n T \Cseq$ prefixes. The proof of this claim follows by a generalization of the Sauer-Shelah lemma to multiclass predictors \citep{haussler1995generalization}.
\end{proof}

\begin{claim} \label{claim:2}
For any fixed $D$, there exists a surjection from $\calT (D) \to \calB_\calG (D)$.
\end{claim}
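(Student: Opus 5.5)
We will show that for any fixed $D$ the behavior vector $\big(g_\pi(\bx_i) : i \in [n]\big)$ is a deterministic function of the table $\big(\pi(\bx_i,\by^i_{1:t-1}) : t \in [T], i \in [n], \by^i \in \calY(\bx_i)\big)$. Given that, the map $\calT(D) \to \calB_\calG(D)$ which sends the table of $\pi$ to the behavior vector of $\pi$ is well-defined. It is onto because both sets are indexed by the same $\pi \in \Pi$. The only real work is therefore to show that two models with identical tables on $D$ have identical $g$-values at every $\bx_i$. We can argue one prompt at a time.

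Fix $\bx_i$ and a deterministic $\pi$. There are two cases, and the case split itself is decided by the table.
\begin{itemize}
\item \textbf{Case $\pi_{1:T}(\bx_i) \in \calY(\bx_i)$.} Set $\by = \pi_{1:T}(\bx_i)$. For every $t$, the entry $\pi(\bx_i,\by_{1:t-1})$ lies in the table and equals $y_t$. So the table certifies that $\pi$ reproduces $\by$ exactly.
\item \textbf{Case $\pi_{1:T}(\bx_i) \notin \calY(\bx_i)$.} For every $\by \in \calY(\bx_i)$ there is a first index $t$ with $\pi(\bx_i,\by_{1:t-1}) \neq y_t$. Take the smallest such $t$. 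By minimality the earlier entries agree, so $\by_{1:t-1} = \pi_{1:t-1}(\bx_i)$, and the deviation at step $t$ is itself a table entry. This is the first-deviation argument from the proof sketch of \Cref{theorem:seq-coverage}.
\end{itemize}
It follows that the event $\{\pi_{1:T}(\bx_i) \in \calY(\bx_i)\}$ holds if and only if some $\by \in \calY(\bx_i)$ satisfies $\pi(\bx_i,\by_{1:t-1}) = y_t$ for all $t \in [T]$. This condition reads only table entries. In the first case we also recover the sequence $\pi_{1:T}(\bx_i) = \by$, hence the answer $y_T$. We can then compute $\calV(\bx_i, y_T)$, which is a fixed function of $(\bx_i,\by)$ and does not depend on $\pi$. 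This gives $g_\pi(\bx_i) = 1 - \bbI\big(\exists\, \by \in \calY(\bx_i) \text{ reproduced by the table with } \calV(\bx_i,y_T)=1\big)$, matching the rewritten form of $\calB_\calG(D)$ above.

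Because the generation process is deterministic, at most one $\by \in \calY(\bx_i)$ can be reproduced, so there is no ambiguity in the map. The main subtlety is the first case: we must check that agreement on all prefixes of a single $\by \in \calY(\bx_i)$ forces the full rollout to equal $\by$. This holds by induction on $t$, since each step of the rollout conditions exactly on the prefix $\by_{1:t-1}$, which is a table entry.

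Combining the surjection with \Cref{claim:1} gives $|\calB_\calG(D)| \le |\calT(D)| \le (enT|\Sigma|\Cseq)^d$. The VC bound in \Cref{lemma:VCG} then follows by standard manipulation: if $\calG$ shatters $n$ points then $2^n \le (enT|\Sigma|\Cseq)^d$, and solving for $n$ up to constants and a logarithmic factor yields the stated bound.
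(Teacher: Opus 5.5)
Your proof is correct and follows essentially the same route as the paper: both show that the behavior vector is a function of the table by tracing $\pi$'s deterministic rollout along prefixes of strings in $\calY(\bx_i)$. Once that is established, the map sending $\pi$'s table to $\pi$'s behavior vector is well defined and automatically onto. The differences are cosmetic. The paper phrases the check as a sequential walk that halts once the rollout leaves the set of prefixes of $\calY(\bx_i)$, whereas you use the equivalent test that some $\by \in \calY(\bx_i)$ is reproduced at every one of its prefixes. You also read correctness off $\calV(\bx_i, y_T)$ directly rather than comparing with $\pi^\star_T(\bx_i)$, which avoids needing the unique-answer assumption.
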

\begin{proof}
We will argue that if for any $\pi \in \Pi$, we are given the corresponding table $\big( \pi(\bx_i, \by^i_{1:t-1}) : t \in [T], i \in [n], \by^i \in \calY (\bx_i) \big) \in \calT (D)$ and also $\{ \calY (\bx_i) : i \in [n] \}$ and $\{ \pi^\star_T (\bx_i) : i \in [n] \}$ (but there is no explicit identification of $\pi$ itself), we can compute $\mathbb{I} \big( \pi_T (\bx_i ) = \pi^\star_T (\bx_i) \big) \cdot \mathbb{I} \big( \pi_{1:T} (\bx_i) \in \calY (\bx_i) \big)$ for this $\pi$. The procedure is as follows.

\begin{itemize}
    \item First note that we can infer $y_1^i = \pi (\bx_i)$ from the table we are given. Looking at the table $\calY (\bx_i)$, we can identify if there exists a $\bz^{i,1} \in \calY (\bx_i)$ such that $\bz_1^{i,1} = y_1$.
    \item If no such $\bz^{i,1}$ exists, the procedure terminates, and we assert that $\mathbb{I} \big( \pi_{1:T} (\bx_i) \in \calY (\bx_i) \big) = 0$, since the partial CoT generated by $\pi$ does not belong to the set of prefixes realized by strings in $\calY (\bx_i)$.
    \item If some such $\bz^{i,1}$ exists, we proceed by computing $y_2^i = \pi(\bx_i,\bz_1^{i,1})$, which is also present in the table. We again check if there exists a $\bz^{i,2} \in \calY (\bx_i)$ such that $\bz_{1:2}^{i,2} = \by^i_{1:2}$. If no such $\bz^{i,2}$ exists, we terminate and return $0$. If it exists, we proceed to the next step.
    \item In any iteration $t$, if the procedure has not yet terminated, we have a candidate sequence $(_1^i,\cdots,y_{t-1}^i)$, which is inductively assumed to compute the first $t-1$ symbols of $\pi_{1:T} (\bx_i)$, and is also the prefix of some string $\bz^{i,t-1} \in \calY (\bx_i)$. We compute \smash{$y^i_t = \pi \big( \bx_i,\bz^{i,t-1}_{1:t-1} \big)$} by looking at the corresponding entry in $\calT$. If \smash{$\by^i_{1:t} = \bz^{i,t}_{1:t}$} for some $\bz^{i,t} \in \calY (\bx_i)$, we proceed to iteration $t+1$. If not, we terminate the procedure.
\end{itemize}
By the end of this process, we can compute $\pi_{1:T} (\bx_i)$ if $\pi_{1:T} (\bx_i) \in \calY (\bx_i)$, or certify that $\pi_{1:T} (\bx_i) \not\in \calY (\bx_i)$. In the former case, we may check whether $\calV (\bx_i, y) = 1$ for $y = \pi^\star_T (\bx_i)$ (which is also fixed and does not depend on the $\pi$ under consideration). Thus, regardless of which case we are in, it is possible to compute $\mathbb{I} \big( \pi_T (\bx_i ) = \pi^\star_T (\bx_i) \big) \cdot \mathbb{I} \big( \pi_{1:T} (\bx_i) \in \calY (\bx_i) \big)$ given the table in $\calT (D)$ corresponding to some $\pi$.
\end{proof}

\noindent As a consequence of \Cref{claim:1,claim:2}, we arrive at the statement $|\calB_\calG (D)| \le |\calT (D)| \le (e n T |\Sigma| \Cseq)^d$, proving the lemma.

\end{document}